\definecolor{Gred}{RGB}{219, 50, 54}
\definecolor{ToCgreen}{RGB}{0, 128, 0}
\titleformat*{\paragraph}{\bfseries}
\newtheorem{theorem}{Theorem}[section]
\newtheorem{claim}[theorem]{Claim}
\newtheorem{lemma}[theorem]{Lemma}
\newtheorem{corollary}[theorem]{Corollary}
\newtheorem{assumption}[theorem]{Assumption}
\DeclareMathOperator*{\E}{\mathbb{E}}
\let\Pr\relax\DeclareMathOperator{\Pr}{\mathbb{P}}
\DeclareMathOperator{\poly}{poly}
\newcommand{\sq}[1]{\left[#1\right]}
\newcommand{\paren}[1]{\left(#1\right)}
\newcommand{\corr}{\mathrm{corr}}
\newcommand{\pred}{\mathrm{pred}}
\newcommand{\norm}[1]{\left\|#1\right\|}
\newcommand{\wh}{\widehat}
\newcommand{\normal}{\mathcal{N}}
\newcommand{\R}{\mathbb{R}}
\newcommand{\eps}{\varepsilon}
\newcommand{\grad}{\nabla}
\newcommand{\inner}[1]{\langle#1\rangle}
\newcommand{\wt}{\widetilde}
\newcommand{\scr}{\mathrm{sc}}
\newcommand{\Prb}[2][]{ \ifthenelse{\isempty{#1}}
  {\Pr\left[#2\right]}
  {\Pr_{#1}\left[#2\right]} }
\newcommand{\Ex}[2][]{ \ifthenelse{\isempty{#1}}
  {\E\left[#2\right]}
  {\E_{#1}\left[#2\right]} }
\newcommand{\var}[2][]{ \ifthenelse{\isempty{#1}}
  {\mathbf{Var}\left[#2\right]}
  {\mathop{\mathbf{Var}}_{#1}\left[#2\right]} }
\newcommand{\KL}{\textup{\textsf{KL}}}
\newcommand{\TV}{\textup{\textsf{TV}}}
\DeclarePairedDelimiter\floor{\lfloor}{\rfloor}
\newcommand{\rparam}{\beta}
\newcommand{\qdist}{q^*}
\newcommand{\deriv}{\mathrm{d}}
\newcommand{\law}{\text{law}}
\title{Faster Diffusion Sampling with Randomized Midpoints: Sequential and Parallel}
\author{Shivam Gupta\\UT Austin\\\texttt{shivamgupta@utexas.edu} \and Linda Cai\\ Princeton University\\\texttt{tcai@princeton.edu} \and Sitan Chen\\Harvard SEAS\\\texttt{sitan@seas.harvard.edu}}
\renewcommand{\epsilon}{\varepsilon}
\renewcommand{\eps}{\varepsilon}
\begin{document}

\setcounter{page}{0}
\thispagestyle{empty}

\maketitle

\thispagestyle{empty}

\begin{abstract}
    Sampling algorithms play an important role in controlling the quality and runtime of diffusion model inference. In recent years, a number of works~\cite{chen2023sampling,chen2023ode,benton2023error,lee2022convergence} have proposed schemes for diffusion sampling with provable guarantees; these works show that for essentially any data distribution, one can approximately sample in polynomial time given a sufficiently accurate estimate of its score functions at different noise levels. 
    In this work, we propose a new scheme inspired by Shen and Lee's randomized midpoint method for log-concave sampling~\cite{ShenL19}. We prove that this approach achieves the best known dimension dependence for sampling from arbitrary smooth distributions in total variation distance ($\wt O(d^{5/12})$ compared to $\wt O(\sqrt{d})$ from prior work). We also show that our algorithm can be parallelized to run in only $\widetilde O(\log^2 d)$ parallel rounds, constituting the first provable guarantees for parallel sampling with diffusion models.
    
    As a byproduct of our methods, for the well-studied problem of log-concave sampling in total variation distance, we give an algorithm and simple analysis achieving dimension dependence $\wt O(d^{5/12})$ compared to $\wt O(\sqrt{d})$ from prior work. 
\end{abstract}

\newpage

\thispagestyle{empty}

\tableofcontents

\newpage

\setcounter{page}{1}

\section{Introduction}


Diffusion models~\cite{sohletal2015nonequilibrium, sonerm2019estimatinggradients, ho2020denoising, dhanic2021diffusionbeatsgans, songetal2021mlescorebased, songetal2021scorebased, vahkrekau2021scorebased} have emerged as the \emph{de facto} approach to generative modeling across a range of data modalities like images~\cite{betker2023improving,esser2024scaling}, audio~\cite{kong2020diffwave}, video~\cite{videoworldsimulators2024}, and molecules~\cite{wu2024protein}. In recent years a slew of theoretical works have established surprisingly general convergence guarantees for this method~\cite{chen2023sampling,lee2023convergence,chen2023improved,chen2023ode,chen2023restoration,benton2024nearly,gupta2023sample,li2023towards,li2024accelerating}. They show that for essentially any data distribution, assuming one has a sufficiently accurate estimate for its \emph{score function}, one can approximately sample from it in polynomial time.

While these results offer some theoretical justification for the empirical successes of diffusion models, the upper bounds they furnish for the number of iterations needed to generate a single sample are quite loose relative to what is done in practice. The best known provable bounds scale as $O(\sqrt{d}/\epsilon)$, where $d$ is the dimension of the space in which the diffusion is taking place (e.g. $d = 16384$ for Stable Diffusion)~\cite{chen2023ode}, and $\epsilon$ is the target error. Even ignoring the dependence on $\epsilon$ and the hidden constant factor, this is at least $2-3\times$ larger than the default value of $50$ inference steps in Stable Diffusion.

In this work we consider a new approach for driving down the amount of compute that is provably needed to sample with diffusion models. Our approach is rooted in the \emph{randomized midpoint method}, originally introduced by Shen and Lee~\cite{ShenL19} in the context of Langevin Monte Carlo for log-concave sampling. At a high level, this is a method for numerically solving differential equations where within every discrete window of time, one forms an unbiased estimate for the drift by evaluating it at a random ``midpoint'' (see Section~\ref{sec:randomizedmidpoint} for a formal treatment). For sampling from log-concave densities, the number of iterations needed by their method scales with $d^{1/3}$, and this remains the best known bound in the ``low-accuracy'' regime.

While this method is well-studied in the log-concave setting~\cite{he2020ergodicity,yu2023langevin,yu2024parallelized,ShenL19}, its applicability to diffusion models has been unexplored both theoretically and empirically. Our first result uses the randomized midpoint method to obtain an improvement over the prior best known bound of $O(\sqrt{d}/\epsilon)$ for sampling arbitrary smooth distributions with diffusion models:

\begin{theorem}[Informal, see Theorem~\ref{thm:main_sequential_formal}]\label{thm:main_sequential_informal}
    Suppose that the data distribution $q$ has bounded second moment, its score functions $\nabla \ln q_t$ along the forward process are $L$-Lipschitz, and we are given score estimates which are $L$-Lipschitz and  $\widetilde{O}(\frac{\epsilon}{d^{1/12}\sqrt{L}})$\footnote{$\wt O(\cdot)$ hides polylogarithmic factors in $d, L, \eps$ and $\E_{x \sim q}[\|x\|^2]$}-close to $\nabla \ln q_t$ for all $t$. Then there is a diffusion-based sampler using these score estimates (see Algorithm~\ref{pre_alg:sequential_predictor_step}) which outputs a sample whose law is $\epsilon$-close in total variation distance to $q$ using $\widetilde{O}(L^{5/3}d^{5/12}/\epsilon)$ iterations.
\end{theorem}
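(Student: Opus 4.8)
The plan is to follow the randomized midpoint recipe of Shen--Lee~\cite{ShenL19}, but adapted to the backward (denoising) SDE rather than the Langevin diffusion, and then to control the accumulated error by a Girsanov-type comparison between the true reverse process and the discretized one, in the same spirit as~\cite{chen2023sampling,benton2024nearly}. First I would set up the forward Ornstein--Uhlenbeck process and its time reversal, and fix an exponentially-decreasing (or geometrically spaced) schedule of step sizes, reserving an early stopping at time $\sim \eps$ to handle the blow-up of the score near $t=0$, and initializing at the stationary Gaussian to absorb the mixing-time cost. On each window $[t_k, t_{k+1}]$ of length $h$, the algorithm draws a uniform random midpoint, takes an \emph{auxiliary} (fractional) step of the exponential integrator to that midpoint using the score estimate, evaluates the estimated score there, and then uses that single evaluation as an unbiased-up-to-discretization proxy for the drift over the whole window; this is exactly Algorithm~\ref{pre_alg:sequential_predictor_step}.

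The core of the analysis is a per-step error bound. I would write the exact reverse SDE increment over a window using its integral (Duhamel) form, write the algorithm's increment in the same form, and bound the difference in three pieces: (i) the \emph{score estimation error}, which contributes $\eps$-level error per unit time by the $\wt O(\eps/(d^{1/12}\sqrt L))$ accuracy hypothesis and Lipschitzness; (ii) the \emph{discretization bias}, i.e.\ replacing $\nabla\ln q_s(X_s)$ along the window by its value at the randomized midpoint --- here the randomization is what kills the leading-order term, so that only a \emph{second-order} term in $h$ survives in expectation, and one pays a variance term of order $h\cdot(\text{second moment of the drift's time-derivative})$; and (iii) the error from running the midpoint sub-step itself with an inexact score. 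The second-moment/derivative bounds needed for (ii) come from $L$-Lipschitzness of the true scores together with standard a priori second-moment estimates on $X_t$ along the reverse process (these are the analogues of the ``$\E\|\nabla\ln q_t\|^2$ grows like $Ld$'' estimates used in prior work). Summing the KL contributions of all windows via Girsanov and then applying Pinsker gives a TV bound of the form $\sqrt{\,T_{\mathrm{mix}}\text{-type terms} + \sum_k h^2\cdot L^?d + \eps^2\,}$; balancing the number of steps $N = T/h$ against the $d$-dependence of the discretization term is what produces the $d^{5/12}$ exponent (the randomized midpoint improves the naive $h$-per-step error to roughly $h^{3/2}\sqrt{Ld}$-per-step in $L^2$, which is what trades $\sqrt d$ down to $d^{5/12}$), and tracking the $L$'s through the same balance yields the $L^{5/3}$ factor.

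The step I expect to be the main obstacle is making the randomized-midpoint cancellation rigorous in the diffusion setting rather than the log-concave one. In Shen--Lee the drift is $-\nabla f(x)$ for a fixed strongly-log-concave $f$, so bounding the relevant derivatives is clean; here the ``drift'' is $x + 2\nabla\ln q_t(x)$ with an explicit, $t$-dependent, and near $t=0$ badly-behaved score, so I must (a) control the time-regularity of $s\mapsto \nabla\ln q_s(X_s)$ along trajectories using only Lipschitzness of the spatial score plus moment bounds --- this is where a lemma bounding $\E\|\partial_t \nabla\ln q_t\|^2$ or an equivalent one-step-movement bound is needed --- and (b) ensure the midpoint sub-step, which is itself only approximate, does not reintroduce a first-order bias. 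A secondary technical point is that Girsanov requires a change of measure whose Novikov/exponential-integrability condition must be checked; the standard workaround (as in~\cite{chen2023sampling}) is to first prove the bound for a truncated process and then remove the truncation, and I would import that device wholesale. Finally, the choice of early-stopping time and step schedule must be coordinated so that the near-$t=0$ region contributes only $\wt O(\eps)$ while not inflating the iteration count beyond $\wt O(L^{5/3}d^{5/12}/\eps)$.
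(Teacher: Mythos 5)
Your plan diverges from the paper's in a way that hits a known obstruction. You propose to discretize the \emph{reverse SDE} with randomized midpoints and control the error by a Girsanov change of measure plus Pinsker. But the Girsanov functional bounds $\KL$ by the expected \emph{squared} drift discrepancy, $\E\int\|b_{\mathrm{alg}}-b_{\mathrm{true}}\|^2\,\mathrm{d}t$, and unbiasedness of the randomized-midpoint drift estimate buys nothing there: the variance of evaluating the score at a random time in the window is of the same order as the exponential-integrator discretization error, so the per-window squared discrepancy is still $\Theta(L^2 d h^2)$ and the balance reproduces the $\wt O(\sqrt d)$ bound rather than $d^{5/12}$. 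The claimed ``$h^{3/2}\sqrt{Ld}$ per step'' gain is a statement that holds in a Wasserstein \emph{coupling} analysis, where the bias term $\|\E_\alpha \wh x - y\|^2$ and the variance term $\E_\alpha\|\wh x - y\|^2$ enter with different weights (the bias is amplified by a $1/(Lh)$ factor via Young's inequality, the variance is not), so the midpoint's bias cancellation is actually exploited. Getting the analogous gain directly in $\KL$/TV for randomized midpoint is an open problem even for log-concave sampling, which is exactly why the paper does not take your route.

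The paper instead keeps the probability flow ODE (drift $x+\nabla\ln q_{T-t}$, not $x+2\nabla\ln q_t$) with randomized-midpoint discretization for ``predictor'' phases, analyzed purely in $W_2$ (bias Lemma~\ref{lem:sequential_predictor_bias}, variance Lemma~\ref{lem:sequential_predictor_variance}, combined in Lemma~\ref{lem:main_predictor_sequential}), and interleaves short underdamped Langevin ``corrector'' phases whose role is precisely to convert $W_2$ closeness into TV closeness (Theorem~\ref{thm:sequential_corrector}). The new quantitative idea is to shorten the corrector time to $T_\corr=\Theta\bigl(1/(\sqrt L\, d^{1/18})\bigr)$: this lets the corrector step size be larger (taming its $d^{1/2}h_\corr$ discretization term) at the cost of a $d^{1/12}\sqrt L$ inflation of the $W_2$-to-TV conversion, which the stronger predictor bound can absorb; balancing these is what produces $d^{5/12}$ (and the $d^{1/12}$ loss in the score-accuracy requirement). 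Your proposal contains no corrector mechanism and no alternative device for turning a Wasserstein-level midpoint gain into a TV guarantee, so as written it would not establish the theorem; to repair it you would either need to resolve the KL-for-randomized-midpoint question or reintroduce something playing the corrector's role.
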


\noindent Our algorithm is based on the ODE-based predictor-corrector algorithm introduced in~\cite{chen2023ode}, but in place of the standard exponential integrator discretization in the predictor step, we employ randomized midpoint discretization. We note that in the domain of log-concave sampling, the result of Shen and Lee only achieves recovery in \emph{Wasserstein distance}. Prior to our work, it was actually open whether one can achieve the same dimension dependence in total variation or KL divergence, for which the best known bound was $\wt O(\sqrt{d})$~\cite{ma2021there,zhang2023improved,altschuler2023faster}. In contrast, our result circumvents this barrier by carefully trading off time spent in the corrector phase of the algorithm for time spent in the predictor phase. We defer the details of this, as well as other important technical hurdles, to Section~\ref{sec:overview}.

Next, we turn to a different computational model: instead of quantifying the cost of an algorithm in terms of the total number of iterations, we consider the \emph{parallel} setting where one has access to multiple processors and wishes to minimize the total number of parallel rounds needed to generate a single sample. This perspective has been explored in a recent empirical work~\cite{shih2024parallel}, but to our knowledge, no provable guarantees were known for parallel sampling with diffusion models (see Section~\ref{sec:related} for discussion of concurrent and independent work). Our second result provides the first such guarantee:

\begin{theorem}[Informal, see Theorem~\ref{thm:main_parallel_formal}]\label{thm:main_parallel_informal}
    Under the same assumptions on $q$ as in Theorem~\ref{thm:main_sequential_informal}, and assuming that we are given score estimates which are $\widetilde{O}(\frac{\epsilon}{\sqrt{L}})$-close to $\nabla \ln q_t$ for all $t$, there is a diffusion-based sampler using these score estimates (see Algorithm~\ref{alg:parallel_alg}) which outputs a sample whose law is $\epsilon$-close in total variation distance to $q$ using $\widetilde{O}(L\cdot\mathrm{polylog}(Ld/\epsilon))$ parallel rounds.
\end{theorem}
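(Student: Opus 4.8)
The plan is to parallelize the ODE-based predictor--corrector scheme underlying Theorem~\ref{thm:main_sequential_informal} by replacing each sequential update with a collocation (Picard) iteration that runs in a logarithmic number of parallel rounds. Concretely, I would run the reverse process on $[0,T]$ with $T=\wt O(1)$ (exponential mixing of the forward OU process makes $T=\Theta(\log(d/\eps))$ suffice) and partition it into $O(L\log(d/\eps))$ consecutive blocks, each of length $\Theta(1/L)$, alternating predictor blocks (probability-flow ODE with randomized-midpoint discretization) and corrector blocks (Langevin dynamics at a fixed noise level). Within one block, subdivide further into $N=\mathrm{polylog}(Ld/\eps)$ grid points and view the discretized trajectory $(y_1,\dots,y_N)$ as the fixed point of a map $\Phi$ that, given the current guesses, recomputes each $y_i$ by an Euler/midpoint sum of score evaluations at the previous round's guesses. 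All $N$ score queries in a round are issued simultaneously, so one Picard round costs one parallel round, plus $O(\log N)$ rounds for the prefix sums; per block this is $\mathrm{polylog}(Ld/\eps)$ parallel rounds.

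Next I would show $\Phi$ is a contraction: since every $\nabla\ln q_t$ and every score estimate is $L$-Lipschitz and a block has length $O(1/L)$, the operator norm of the Jacobian of $\Phi$ is bounded by a constant $<1$ uniformly over the grid, so after $K=O(\log(1/\delta))$ Picard rounds the iterate is within $\delta$ (in $\ell_2$ over the grid) of the exact solution of the discretized dynamics. Taking $\delta$ polynomially small in $\eps/d$ adds only a $\mathrm{polylog}$ factor to $K$. For the predictor blocks I would then invoke the randomized-midpoint discretization bound already established for the sequential algorithm: the midpoint estimate of the drift is unbiased, so the per-step error is governed by its variance, and the accumulated $\KL$/$\TV$ cost over all predictor blocks is $\wt O(\eps^2)$ once $N$ per block is $\mathrm{polylog}(Ld/\eps)$; the extra randomness (the random midpoints) is sampled up front and does not affect the round count. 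Summing $O(L\log(d/\eps))$ blocks against $\mathrm{polylog}(Ld/\eps)$ rounds per block yields the claimed $\wt O(L\cdot\mathrm{polylog}(Ld/\eps))$.

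The corrector blocks are handled analogously: each simulates a short window of Langevin dynamics whose stationary measure is close to $q_t$, and I would parallelize it by the same collocation argument, using the Lipschitz (and, for the step count, log-Sobolev) structure so that a $\mathrm{polylog}$ number of Picard rounds suffices to simulate the window to the required accuracy; a Girsanov comparison then bounds the $\TV$ error introduced by discretization and by the $\wt O(\eps/\sqrt L)$ score error in each corrector block. Finally I would combine the three sources of error --- initialization at $\normal(0,I)$ versus $q_T$ (exponentially small in $T$), predictor discretization, and corrector discretization plus score error --- by a triangle inequality in $\TV$ along the chain of blocks, exactly as in the sequential proof, choosing constants so the total is $\le\eps$.

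The main obstacle I anticipate is the interaction between the fixed-point iteration and the randomized-midpoint/Girsanov analysis: one must exhibit an a priori bound keeping the Picard iterates in a region where the contraction estimate is valid, and one must show that solving the \emph{discretized} dynamics only to accuracy $\delta$ (rather than exactly) perturbs the downstream Girsanov/$\KL$ bounds by at most $\mathrm{poly}(\delta)$, propagating this $\delta$-error through the change-of-measure argument for both the Langevin corrector and the ODE predictor without any polynomial-in-$d$ loss --- which is exactly what forces $\delta$, and hence $K$, to scale with $\mathrm{polylog}(d)$ rather than a constant. A secondary subtlety is calibrating the block length $\Theta(1/L)$ to be simultaneously small enough for contraction and for the randomized-midpoint variance bound, yet large enough that only $O(L\log(d/\eps))$ blocks are needed.
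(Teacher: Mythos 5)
Your high-level architecture matches the paper's: $\Theta(L\log(d/\eps))$ predictor--corrector blocks of length $\Theta(1/L)$, each block solved by a collocation/Picard fixed-point iteration whose contraction (via $L$-Lipschitzness and block length $\lesssim 1/L$) yields $O(\log(d/\eps))$ parallel rounds per block, with all score queries within a round issued in parallel, and a final triangle-inequality accounting in TV together with the $\normal(0,I)$-vs-$q_T$ initialization error. However, there is a genuine gap in your accuracy analysis: you claim that $N=\mathrm{polylog}(Ld/\eps)$ grid points per block suffice because the randomized-midpoint estimate is unbiased and "the per-step error is governed by its variance." This is false. The Picard iteration's error does not contract to zero; it bottoms out at a discretization floor determined by the sub-window width $\delta = h/R$, scaling per block like $L^2 d h^4/R^2$ (plus score error) in squared Wasserstein --- this is exactly the second term in the paper's contraction bound (Lemma~\ref{claim:parallel_contraction} / Eq.~\eqref{eq:contraction_overview}). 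With $h=\Theta(1/L)$ and only $\mathrm{polylog}$ many midpoints, this floor is $\mathrm{poly}(d)$-large, and after the corrector converts $W_2$ to TV (multiplying by $\mathrm{poly}(L)\cdot d^{\Theta(1)}$ factors) the total error cannot be driven to $\eps$. The paper is forced to take $R=\Theta(\beta\sqrt{d}/\eps)$ midpoints per block, so the total \emph{work} is $\wt\Omega(\sqrt{d})$; the parallel-round count stays polylogarithmic only because the number of Picard rounds is $K=O(\log R)$, not $R$ itself. Your proposal as written would either be inaccurate (with $N$ polylog) or, once repaired, needs to acknowledge that the grid size is polynomial in $d$ and that only the round count is polylog.

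Two secondary issues. First, you invoke a Girsanov/change-of-measure argument for the predictor and a log-Sobolev inequality for the corrector step count; neither is available or needed here. The predictor is analyzed purely as a Wasserstein coupling of the deterministic probability-flow ODE (bias/variance of the randomized midpoint, as in Lemmas~\ref{lem:parallel_predictor_bias} and~\ref{lem:parallel_predictor_variance}), and the corrector is an underdamped Langevin step run only for short time $\Theta(1/\sqrt{L})$, whose TV guarantee comes from the short-time $W_2\to\TV$ regularization bound of~\cite{chen2023ode} combined with the parallel discretization (KL) bound implicit in~\cite{AnariCV24} --- no mixing or LSI for the (arbitrary, merely smooth) data distribution is assumed. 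Second, the paper's parallel corrector also requires $\Theta(\sqrt{d}/\eps)$ parallel score evaluations per block for the same discretization-floor reason, so the "polylog grid" framing fails there as well.
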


\noindent This result follows in the wake of several recent theoretical works on parallel sampling of log-concave densities using Langevin Monte Carlo~\cite{anari2023parallel,AnariCV24,ShenL19}. A common thread among these works is the observation that differential equations can be numerically solved via fixed point iteration (see Section~\ref{sec:parallel_prelims} for details), and we adopt a similar perspective in the context of diffusions. To our knowledge this is the first provable guarantee for parallel sampling beyond the log-concave setting.

Finally, we show that, as a byproduct of our methods, we can actually obtain a similar dimension dependence of $\wt O(d^{5/12})$ as in Theorem~\ref{thm:main_sequential_informal} for log-concave sampling in TV, superseding the previously best known bound of $\wt O(\sqrt{d})$ mentioned above.

\begin{theorem}[Informal, see Theorem~\ref{thm:log-concave}]\label{thm:logconcave_informal}
    Suppose distribution $q$ is $m$-strongly-log-concave, and its score function $\grad \ln q$ is $L$-Lipschitz. Then, there is a underdamped-Langevin-based sampler that uses this score (Algorithm~\ref{alg:log_concave}) and outputs a sample whose law is $\eps$-close in total variation to $q$ using $\wt O\left(d^{5/12}\left(\frac{L^{4/3}}{\eps^{2/3} m^{4/3}} + \frac{1}{\eps} \right)\right)$ iterations.
\end{theorem}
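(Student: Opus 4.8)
The plan is to mirror the architecture that drives Theorem~\ref{thm:main_sequential_informal}, transplanting it from the diffusion/OU setting to the underdamped Langevin setting, and exploiting the same predictor--corrector trade-off that let us beat the $\wt O(\sqrt d)$ barrier in TV. Concretely, I would run underdamped Langevin dynamics targeting $q \propto e^{-f}$, discretize its drift using Shen--Lee's randomized midpoint estimator (evaluate $\nabla f$ at a uniformly random time inside each step to form an unbiased estimate of the integrated drift), and then append a small number of \emph{corrector} steps -- i.e. short runs of an exactly-stationary or overdamped-Langevin-type dynamics -- whose sole purpose is to convert the Wasserstein guarantee that the randomized midpoint predictor natively provides into a total variation guarantee. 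The point is that Shen--Lee already gives $W_2$-closeness in $\wt O(d^{1/3}\kappa^{\cdot}/\eps^{2/3})$ steps; naively converting $W_2 \to \TV$ costs a factor of $\sqrt d$, but a corrector that contracts in a mixed $W_2$/$\TV$ sense (as in the diffusion analysis of Section~\ref{sec:overview}) only needs to ``clean up'' $\poly(d)$-scale Wasserstein error at a cost sublinear in $d$, so the overall iteration count stays at $\wt O(d^{5/12})$.

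The key steps, in order: (1) Set up underdamped Langevin with friction parameter and step size $h$ to be optimized, and write the exact continuous solution over one step via the variation-of-constants formula, isolating the term $\int_0^h (\text{kernel})\,\nabla f(x_s)\,ds$; replace $x_s$ inside the integral by $x_{\alpha h}$ for $\alpha \sim \mathrm{Unif}[0,1]$ (with $x_{\alpha h}$ itself obtained by a half-step of the same discretization), and add the appropriate Gaussian noise so the update is unbiased for the integrated drift in expectation over $\alpha$. (2) Bound the one-step discretization error in $W_2$ using $L$-Lipschitzness of $\nabla f$ and standard second-moment bounds on underdamped Langevin, then chain the per-step bounds through the contraction of the underdamped Langevin semigroup in a suitable twisted $W_2$ metric (Lyapunov/coupling argument à la Cheng--Chatterji--Bartlett--Jordan, or the cleaner synchronous coupling) to get $W_2(\law(\text{predictor output}), q) \le \eps'$ after $N_{\mathrm{pred}} = \wt O(d^{1/3}(L/m)^{\cdot}/(\eps')^{2/3})$ iterations. (3) Run the corrector: argue that $O(\log(d/\eps))$ steps of an appropriately chosen contracting Markov chain with stationary distribution $q$ (e.g. underdamped Langevin run at a smaller step size, or the corrector from Section~\ref{sec:overview} specialized to strongly log-concave targets) maps a distribution that is $\eps'$-close to $q$ in $W_2$ to one that is $\eps$-close in $\TV$, provided $\eps' = \wt O(\eps \cdot \poly(m,1/L))$ -- here strong log-concavity gives the one-step regularization (a short-time heat-flow / Girsanov bound converting Wasserstein to KL). (4) Balance: the corrector requires $\eps'$ only polynomially small in $\eps$ rather than $\eps/\sqrt d$, so plugging the relaxed $\eps'$ back into step (2) yields the stated count $\wt O(d^{5/12}(L^{4/3}/(\eps^{2/3}m^{4/3}) + 1/\eps))$, where the $d^{5/12} = d^{1/3} \cdot d^{1/12}$ factor and the two additive terms come from optimizing the split of accuracy budget between predictor and corrector (the $1/\eps$ term being the corrector/terminal cost, the $d^{5/12}/\eps^{2/3}$ term the predictor cost).

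The main obstacle I expect is step (3) combined with the balancing in step (4): making the Wasserstein-to-TV conversion quantitatively tight enough that the exponent on $d$ comes out to exactly $5/12$ rather than something larger. This requires the corrector's contraction to be strong enough that only $\poly\log$ many rounds suffice, \emph{and} its regularization (Wasserstein $\to$ KL per step) to lose only $\poly(L/m)$ factors, not $\poly(d)$ -- the strong-log-concavity assumption is what makes this possible, via a clean short-time Girsanov estimate, but getting all the $\kappa = L/m$ dependencies to land in the clean form above (and verifying the randomized-midpoint predictor's $W_2$ analysis doesn't secretly smuggle in an extra $\sqrt d$ through the variance of the $\alpha$-randomization) is the delicate accounting. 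A secondary, more routine obstacle is controlling the second and fourth moments of the underdamped iterates uniformly along the trajectory, which is standard but must be done carefully since it feeds into every per-step error bound.
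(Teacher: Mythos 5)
Your high-level architecture is exactly the paper's: take Shen--Lee's randomized midpoint discretization of underdamped Langevin as the predictor to get a $W_2$ guarantee (the paper simply invokes Theorem~\ref{thm:randomized_midpoint} black-box rather than re-deriving it), then append the underdamped Langevin corrector of \cite{chen2023ode} to convert $W_2$ to TV, and balance the two phases to land on $d^{5/12}$ (Theorem~\ref{thm:log-concave}). However, your step (3) contains a genuine error in the mechanism. The corrector guarantee is $\TV(p',q) \lesssim \frac{W_2(p,q)}{L^{1/4}T_\corr^{3/2}} + L^{3/4}T_\corr^{1/2} d^{1/2} h_\corr$ (Theorem~\ref{thm:sequential_corrector}); the discretization term carries an unavoidable $d^{1/2}h_\corr$ factor, and no short-time Girsanov estimate or strong log-concavity removes it (indeed the corrector bound needs only Lipschitzness of $\grad \ln q$; strong log-concavity is used only inside the Shen--Lee predictor analysis). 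Consequently the corrector cannot be run for merely $O(\log(d/\eps))$ rounds: the paper's trick is to shorten the \emph{total} corrector time to $T_\corr = \Theta(1/(\sqrt{L}\,d^{1/18}))$, which (i) inflates the Wasserstein prefactor to $d^{1/12}\sqrt{L}$, so the predictor must be run to accuracy $\eps' \asymp \eps/(d^{1/12}\sqrt{L})$ --- not $\eps\cdot\poly(m,1/L)$ with no dimension dependence as you assert in step (3) --- and (ii) still forces $h_\corr \approx \eps/(\sqrt{L}\,d^{17/36})$, i.e.\ $T_\corr/h_\corr = \wt O(d^{5/12}/\eps)$ corrector iterations (Corollary~\ref{cor:underdamped_corrector_sequential}). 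That corrector count is precisely the source of the additive $d^{5/12}/\eps$ term in the theorem, which your step (4) correctly attributes to the corrector, in tension with your step (3).

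Once step (3) is replaced by this quantitative accounting --- predictor run to $W_2$ error $\eps/(d^{1/12}\sqrt{L})$ costing $\wt O(\kappa^{4/3}d^{5/12}/\eps^{2/3})$ iterations by Theorem~\ref{thm:randomized_midpoint}, corrector with $T_\corr = \Theta(1/(\sqrt{L}d^{1/18}))$ and $h_\corr = \wt O(\eps/(\sqrt{L}d^{17/36}))$ costing $\wt O(d^{5/12}/\eps)$ iterations --- your balancing in step (4) goes through and coincides with the paper's proof. Re-deriving the Shen--Lee $W_2$ bound from scratch (your steps (1)--(2)) is fine but unnecessary for the claimed result.
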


\subsection{Related work}
\label{sec:related} 


Our discretization scheme is based on the randomized midpoint method of~\cite{ShenL19}, which has been studied at length in the domain of log-concave sampling~\cite{he2020ergodicity,yu2023langevin,yu2024parallelized}.

The proof of our parallel sampling result builds on the ideas of~\cite{ShenL19,anari2023parallel,AnariCV24} on parallelizing the collocation method. These prior results were focused on Langevin Monte Carlo, rather than diffusion-based sampling. We review these ideas in Section~\ref{sec:parallel_prelims}.

In~\cite{chen2023ode}, the authors proposed the predictor-corrector framework that we also use for analysing convergence guarantee of the probability flow ODE and which achieved iteration complexity scaling with $\wt O(\sqrt{d})$. In addition to this, there have been many works in recent years giving general convergence guarantees for diffusion models~\cite{debetal2021scorebased,BloMroRak22genmodel, DeB22diffusion,lee2022convergence, liu2022let, Pid22sgm, WibYan22sgm, chen2023sampling, chen2023restoration, lee2023convergence,li2023towards,benton2023error,chen2023ode,benton2024nearly,chen2023improved,gupta2023sample}. Of these, one line of work~\cite{chen2023sampling,lee2023convergence,chen2023improved, benton2024nearly} analyzed DDPM, the stochastic analogue of the probability flow ODE, and showed $\tilde{O}(d)$ iteration complexity bounds. Another set of works~\cite{chen2023ode, chen2023restoration, li2023towards,li2024accelerating} studied the probability flow ODE, for which our work provides a new discretization scheme for the probability flow ODE, that achieves a state-of-the-art $\wt O(d^{5/12})$ dimension dependence for sampling from a diffusion model.


\paragraph{Concurrent work.} Here we discuss the independent works of~\cite{chen2024accelerating} and~\cite{kandasamy2024poisson}. \cite{chen2024accelerating} gave an analysis for parallel sampling with diffusion models that also achieves a $\mathrm{polylog}(d)$ number of parallel rounds like in the present work. \cite{kandasamy2024poisson} showed an improved dimension dependence of $\wt O(d^{5/12})$ for log-concave sampling in total variation, similar to our analogous result, but via a different proof technique. In addition to this, they show a similar result when the distribution only satisfies a log-Sobolev inequality. They also show empirical results for diffusion models, showing that an algorithm inspired by the randomized midpoint method outperforms ODE based methods with similar compute. While their work builds on the randomized midpoint method, they do not theoretically analyze the diffusion setting and do not study parallel sampling.

\section{Preliminaries}


\subsection{Probability flow ODE}

In this section we review basics about deterministic diffusion-based samplers; we refer the reader to~\cite{chen2023ode} for a more thorough exposition.

Let $\qdist$ denote the data distribution over $\R^d$. We consider the standard Ornstein-Uhlenbeck (OU) \emph{forward process}, i.e. the ``VP SDE,'' given by
\begin{equation}
    \deriv x^\rightarrow_t = -x^\rightarrow_t \,\deriv t + \sqrt{2}\,\deriv B_t\, \qquad x_0^\rightarrow \sim \qdist\,,
\end{equation}
where $(B_t)_{t\ge 0}$ denotes a standard Brownian motion in $\R^d$. This process converges exponentially quickly to its stationary distribution, the Gaussian distribution $\mathcal{N}(0,\mathrm{Id})$. 

Suppose the OU process is run until terminal time $T > 0$, and for any $t\in[0,T]$, let $\qdist_t \triangleq \law(x^\rightarrow_t)$, i.e. the law of the forward process at time $t$. We will consider the \emph{reverse process} given by the \emph{probability flow ODE}
\begin{equation}
    \deriv x_t = (x_t + \nabla \ln q_{T-t}(x_t))\,\deriv t\,.
\end{equation}
This is a time-reversal of the forward process, so that if $x_0 \sim q_T$, then $\law(x_t) = \qdist_{T-t}$. In practice, one initializes at $x_0 \sim \normal(0,\mathrm{Id})$, and instead of using the exact \emph{score function} $\nabla \ln q_{T-t}$, one uses estimates $\wh s_{T-t} \approx \nabla \ln q_{T-t}$ which are learned from data. Additionally, the ODE is solved numerically using any of a number of discretization schemes. The theoretical literature on diffusion models has focused primarily on \emph{exponential integration}, which we review next before turning to the discretization scheme, the \emph{randomized midpoint method} used in the present work.

\subsection{Discretization schemes}
\label{sec:randomizedmidpoint}

Suppose we wish to discretize the following semilinear ODE:
\begin{equation}
    \mathrm{d}x_t = (x_t + f_t(x_t))\,\mathrm{d}t\,. \label{eq:semilin}
\end{equation}
For our application we will eventually take $f_t \triangleq \wh s_{T-t}$, but we use $f_t$ in this section to condense notation.

Suppose we want to discretize \Cref{eq:semilin} over a time window $[t_0, t_0+h]$. The starting point is the integral formulation for this ODE:
\begin{equation}
    x_{t_0+h} = e^h x_{t_0} + \int^{t_0+h}_{t_0} e^{t_0+h-t} f_t(x_t)\,\mathrm{d}t\,.\label{eq:integral}
\end{equation}
Under the standard \emph{exponential integrator} discretization, one would approximate the integrand by $e^{t_0+h-t}f_{t_0}(x_{t_0})$ and obtain the approximation 
\begin{equation}
    x_{t_0+h} \approx e^{h}x_{t_0} + (e^{h} - 1) f_{t_0}(x_{t_0})\,.
\end{equation}
The drawback of this discretization is that it uses an inherently \emph{biased} estimate for the integral in Eq.~\eqref{eq:integral}. The key insight of \cite{ShenL19} was to replace this with the following unbiased estimate
\begin{equation}
    \int^{t_0+h}_{t_0} e^{t_0+h-t}f_t(x_t)\,\mathrm{d}t \approx h e^{(1-\alpha) h} f_{t_0+\alpha h}(x_{t_0+\alpha h})\,,
\end{equation}
where $\alpha$ is a uniformly random sample from $[0,1]$. While this alone does not suffice as the estimate depends on $x_{t_0 + \alpha h}$, naturally we could iterate the above procedure again to obtain an approximation to $x_{t_0 + \alpha h}$. It turns out though that even if we simply approximate $x_{t_0 + \alpha h}$ using exponential integrator discretization, 
we can obtain nontrivial improvements in discretization error (e.g. our Theorem~\ref{thm:main_sequential_informal}). In this case, the above sequence of approximations takes the following form:
\begin{align}
    x_{t_0 + \alpha h} &\approx e^{\alpha h} x_{t_0} + (e^{\alpha h} - 1)f_{t_0}(x_{t_0}) \label{eq:midpointval} \\
    x_{t_0+h} &\approx e^h x_{t_0} + he^{(1-\alpha)h} f_{t_0+\alpha h}(x_{t_0+\alpha h})\,. \label{eq:xh}
\end{align}

Note that a similar idea can be used to discretize stochastic differential equations, but in this work we only use it to discretize the probability flow ODE.

\paragraph{Predictor-Corrector.} For important technical reasons, in our analysis we actually consider a slightly different algorithm than simply running the probability flow ODE with approximate score, Gaussian initialization, and randomized midpoint discretization. Specifically, we interleave the ODE with \emph{corrector} steps that periodically inject noise into the sampling trajectory. We refer to the phases in which we are running the probability flow ODE as \emph{predictor} steps.

The corrector step will be given by running \emph{underdamped Langevin dynamics}. As our analysis of this will borrow black-box from bounds proven in~\cite{chen2023ode}, we refer to Section~\ref{sec:sequential_corrector} for details.

\subsection{Parallel sampling}
\label{sec:parallel_prelims}

The scheme outlined in the previous section is a simple special case of the \emph{collocation method}. In the context of the semilinar ODE from Eq.~\eqref{eq:semilin}, the idea behind the collocation method is to solve the integral formulation of the ODE in Eq.~\eqref{eq:integral} via fixed point iteration. For our parallel sampling guarantees, instead of choosing a single randomized midpoint $\alpha$, we break up the window $[t_0, t_0 + h]$ into $R$ sub-windows, select randomized midpoints $\alpha_1,\ldots,\alpha_R$ for these sub-windows, and approximate the trajectory of the ODE at any time $t_0 + i\delta$, where $\delta \triangleq h/R$, by
\begin{equation}
    x_{t_0 + \alpha_i h} \approx e^{\alpha_i h} x_{t_0} + \sum^i_{j=1} \Bigl(e^{\alpha_i h - (j-1)\delta} - \max(e^{\alpha_i h - j\delta}, 1)\Bigr)\cdot f_{t_0 + \alpha_j h}(x_{t_0 + \alpha_j h})\,. \label{eq:colloc}
\end{equation}
One can show that as $R\to \infty$, this approximation tends to an equality. For sufficiently large $R$, Eq.~\eqref{eq:colloc} naturally suggests a fixed point iteration that can be used to approximate each $x_{t_0 + \alpha_i h}$, i.e. we can maintain a sequence of estimates $\wh{x}^{(k)}_{t_0 + \alpha_i h}$ defined by the iteration
\begin{equation}
    \wh{x}^{(k)}_{t_0 + \alpha_i h} \gets e^{\alpha_i h} \wh{x}^{(k-1)}_{t_0} + \sum^i_{j=1} \Bigl(e^{\alpha_i h - (j-1)\delta} - \max(e^{\alpha_i h - j\delta}, 1)\Bigr)\cdot f_{t_0 + \alpha_j h}(\wh{x}^{(k-1)}_{t_0 + \alpha_j h})\,, \label{eq:parallel}
\end{equation}
for $k$ ranging from $1$ up to some sufficiently large $K$. Finally, analogously to Eq.~\eqref{eq:xh}, we can estimate $x_{t_0 + h}$ via
\begin{equation}
    x_{t_0 + h} \approx e^h \wh{x}^{(K)}_{t_0} + \delta \sum^R_{i=1} e^{(1 - \alpha_i)h} f_{t_0 + \alpha_i h}(\wh{x}^{(K)}_{t_0 + \alpha_i h})\,.
\end{equation}
The key observation, made in~\cite{ShenL19} and also in related works of~\cite{AnariCV24,shih2024parallel,anari2023parallel}, is that for any fixed round $k$, all of the iterations Eq.~\eqref{eq:parallel} for different choices of $i = 1,\ldots,R$ can be computed in parallel. With $R$ parallel processors, one can thus compute the estimate for $x_{t_0 + h}$ in $K$ parallel rounds, with $O(KR)$ total work.

\subsection{Assumptions}

\noindent Throughout the paper, for our diffusion results, we will make the following standard assumptions on the data distribution and score estimates. 
\begin{assumption}[Bounded Second Moment] \label{assumption:bounded_moment}
    \begin{align*}
        \mathfrak{m}_2^2 := \E_{x \sim q_0}\left[\|x\|^2 \right] < \infty.
    \end{align*}
\end{assumption}
\begin{assumption}[Lipschitz Score] \label{assumption:Lipschitz}
    For all $t$, the score $\grad \ln q_t$ is $L$-Lipschitz. 
\end{assumption}

\begin{assumption}[Lipschitz Score estimates]
For all $t$ for which we need to estimate the score function in our algorithms, the score estimate $\wh s_t$ is $L$-lipschitz.
\end{assumption}
\begin{assumption}[Score Estimation Error] \label{assumption:score_error}
   For all $t$ for which we need to estimate the score function in our algorithms,
   \begin{align*}
       \E_{x_t \sim q_t}\left[ \|\wh s_t(x_t) - \grad \ln q_t(x_t)\|^2\right] \le \eps_\scr^2.
   \end{align*}
\end{assumption}

\section{Technical overview}
Here we provide an overview of our sequential and parallel algorithms, along with the analysis of our iteration complexity bounds. We begin with a description of the sequential algorithm.

\subsection{Sequential algorithm}
\label{sec:overview}
 Following the framework of~\cite{chen2023ode}, our algorithm consists of ``predictor'' steps interspersed with ``corrector'' steps, with the time spent on each carefully tuned to obtain our final $\wt O(d^{5/12})$ dimension dependence. We first describe our predictor step -- this is the piece of our algorithm that makes use of the Shen and Lee's randomized midpoint method~\cite{ShenL19}.
 \begin{algorithm}[H]
\caption{\textsc{PredictorStep (Sequential)}}
\label{pre_alg:sequential_predictor_step}
    \vspace{0.2cm}
    \paragraph{Input parameters:}
    \begin{itemize}
        \item Starting sample $\wh x_{0}$, Starting time $t_0$, Number of steps $N$, Step sizes $h_{n \in [0, \dots, N-1]}$, Score estimates $\wh s_t$
    \end{itemize}
    \begin{enumerate}
        \item For $n = 0, \dots, N-1$: 
        \begin{enumerate}
            \item Let $t_n = t_0 - \sum_{i=0}^{n-1} h_i$
            \item Randomly sample $\alpha$ uniformly from $[0,1]$.
            \item Let $\wh x_{n + \frac{1}{2}} = e^{\alpha h_n} \wh x_n +  \left(e^{\alpha h_n} - 1\right) \wh s_{t_n}(\wh x_{n}) ds$
            \item Let $\wh x_{n+1} = e^{h_n} \wh x_n + h_n \cdot e^{(1 - \alpha) {h_n}} \wh s_{t_n - \alpha h_n}(\wh x_{n + \frac{1}{2}})$
        \end{enumerate}
        \item Let $t_N = t_0 - \sum_{i=0}^{N-1} h_i$
        \item Return $\wh x_{N}, t_{N}$.
    \end{enumerate}
\end{algorithm}
The main difference between the above and the predictor step of~\cite{chen2023ode} are steps $1 (b)$ -- $1 (d)$. $1 (b)$ and $1 (c)$ together compute a randomized midpoint, and $1 (d)$ uses this midpoint to obtain an approximate solution to the integral of the ODE. We describe these steps in more detail in Section~\ref{subsec:predictor_improved}.

Next, we describe the ``corrector'' step, introduced in~\cite{chen2023ode}. First, recall the underdamped Langevin ODE:
\begin{align}
    \label{eq:underdamped_langevin_approx_informal}
    \begin{split}
        \mathrm{d} \wh x_t &= \wh v_t \, \mathrm{d}t\\
        \mathrm{d} \wh v_t &= (\wh s(\wh x_{\floor{t/h} h}) - \gamma \wh v_t)\, \mathrm{d}t + \sqrt{2 \gamma} \, \mathrm{d}B_t
    \end{split} 
\end{align}

Here $\wh s$ is our $L^2$ accurate score estimate for a fixed time (say $t$). Then, the corrector step is described below.

\begin{algorithm}[H]
\label{pre_alg:sequential_corrector_step}
\caption{\textsc{CorrectorStep (Sequential)}}
    \label{alg:sequential_corrector_informal} 
    \vspace{0.2cm}
    \paragraph{Input parameters:}
    \begin{itemize}
        \item Starting sample $\wh x_{0}$, Total time $T_\corr$, Step size $h_{\corr}$, Score estimate $\wh s$
    \end{itemize}
    \begin{enumerate}
        \item Run underdamped Langevin Monte Carlo in \eqref{eq:underdamped_langevin_approx_informal} for total time $T_\corr$ using step size $h_\corr$, and let the result be $\wh x_N$.
        \item Return $\wh x_N$.
    \end{enumerate}
\end{algorithm}
Finally, Algorithm~\ref{pre_alg:sequential_alg} below puts the predictor and corrector steps together to give our final sequential algorithm.

\begin{algorithm}[H]
\label{alg:final_sequential_informal}
\caption{\textsc{SequentialAlgorithm}}
    \label{pre_alg:sequential_alg} 
    \vspace{0.2cm}
    \paragraph{Input parameters:}
    \begin{itemize}
        \item Start time $T$, End time $\delta$, Corrector steps time $T_\corr \lesssim 1/\sqrt{L}$, Number of predictor-corrector steps $N_0$, Predictor step size $h_\pred$, Corrector step size $h_\corr$, Score estimates $\wh s_t$
    \end{itemize}
    \begin{enumerate}
        \item Draw $\wh x_0 \sim \mathcal N(0, I_d)$.
        \item For $n = 0, \dots, N_0-1$:
        \begin{enumerate}
            \item Starting from $\wh x_n$, run Algorithm~\ref{alg:sequential_predictor_step} with starting time $T - n/L$ using step sizes $h_\pred$ for all $N$ steps, with $N = \frac{1}{L h_\pred}$, so that the total time is $1/L$. Let the result be $\wh x_{n+1}'$.
            \item Starting from $\wh x_{n+1}'$, run Algorithm~\ref{alg:sequential_corrector_informal} for total time $T_\corr$ with step size $h_\corr$ and score estimate $\wh s_{T - (n+1)/L}$ to obtain $\wh x_{n+1}$.
        \end{enumerate}
        \item Starting from $\wh x_{N_0}$, run Algorithm~\ref{alg:sequential_predictor_step} with starting time $T - N_0/L$ using step sizes $h_\pred/2, h_\pred/4, h_\pred/8, \dots, \delta$ to obtain $\wh x_{N_0 + 1}'$.
        \item Starting from $\wh x'_{N_0 + 1}$, run Algorithm~\ref{alg:sequential_corrector_informal} for total time $T_\corr$ with step size $h_\corr$ and score estimate $\wh s_{\delta}$ to obtain $\wh x_{N_0 + 1}$.
        \item Return $\wh x_{N_0 + 1}$.
    \end{enumerate}
\end{algorithm}


For the final setting of parameters in Algorithm~\ref{pre_alg:sequential_alg}, see~Theorem~\ref{thm:main_sequential_formal}. Now, we describe the analysis of the above algorithm in detail.
\subsection{Predictor-corrector framework}

The general framework of our algorithm closely follows that of~\cite{chen2023ode}, which proposed to run the (discretized) reverse ODE but interspersed with ``corrector'' steps given by running underdamped Langevin dynamics. The idea is that the ``predictor'' steps where the discretized reverse ODE is being run keep the sampler close to the true reverse process in Wasserstein distance, but they cannot be run for too long before potentially incurring exponential blowups. The main purpose of the corrector steps is then to inject stochasticity into the trajectory of the sampler in order to convert closeness in Wasserstein to closeness in KL divergence. This effectively allows one to ``restart the coupling'' used to control the predictor steps. For technical reasons that are inherited from~\cite{chen2023ode}, for most of the reverse process the predictor steps (Step 2(a)) are run with a fixed step size, but at the end of the reverse process (Step 3), they are run with exponentially decaying step sizes.

We follow the same framework, and the core of our result lies in refining the algorithm and analysis for the predictor steps by using the randomized midpoint method. Below, we highlight our key technical steps.

\subsection{Predictor step -- improved discretization error with randomized midpoints}
\label{subsec:predictor_improved}

Here, we explain the main idea behind why randomized midpoint allows us to achieve improved dimension dependence. We first focus on the analysis of the predictor (Algorithm~\ref{pre_alg:sequential_predictor_step}) and restrict our attention to running the reverse process for a small amount of time $h \ll 1/L$.

We begin by recalling the dimension dependence achieved by the standard exponential integrator scheme. One can show (see e.g. Lemma 4 in~\cite{chen2023ode}) that if the true reverse process and the discretized reverse process are both run for small time $h$ starting from the same initialization, the two processes drift by a distance of $O(d^{1/2}h^2)$. By iterating this coupling $O(1/h)$ times, we conclude that in an $O(1)$ window of time, the processes drift by a distance of $O(d^{1/2} h)$. To ensure this is not too large, one would take the step size $h$ to be $O(1/\sqrt{d})$, thus obtaining an iteration complexity of $O(\sqrt{d})$ as in~\cite{chen2023ode}.

The starting point in the analysis of randomized midpoint is to instead track the \emph{squared} displacement between the two processes instead. Given two neighboring time steps $t-h$ and $t$ in the algorithm, let $x_t$ denote the true reverse process at time $t$, and let $\wh{x}_t$ denote the algorithm at time $t$ (in the notation of Algorithm~\ref{pre_alg:sequential_corrector_step}, this is $\wh{x}_n$ for some $n$, but we use $t$ in the discussion here to make the comparison to the true reverse process clearer). Note that $\wh{x}_t$ depends on the choice of randomized midpoint $\alpha$ (see Step 1(b)). One can bound the squared displacement $\E\,\norm{x_t - \wh{x}_t}^2$ as follows. Let $y_t$ be the result of running the reverse process for time $h$ starting from $\wh{x}_{t-h}$. Then by writing $x_t - \wh{x}_t$ as $(x_t - y_t) - (\wh{x}_t - y_t)$ and applying Young's inequality, we obtain
\begin{equation}
    \E_{\wh{x}_{t-h}, \alpha}\|x_t - \wh{x}_t\|^2 \le \Bigl(1 + \frac{Lh}{2}\Bigr) \E_{\wh{x}_{t-h}}\|x_t - y_t\|^2 + \frac{2}{Lh}\E_{\wh{x}_{t-h}}\|\E_\alpha \wh{x}_t - y_t\|^2 + \E_{\wh{x}_{t-h}}\E_\alpha \|\wh{x}_t - y_t\|^2\,. \label{eq:decompose}
\end{equation}
For the first term, because $x_t$ and $y_t$ are the result of running the same ODE on initializations $x_{t-h}$ and $\wh{x}_{t-h}$ , the first term is close to $\E\|x_{t - h} - \wh{x}_{t-h}\|^2$ provided $h \ll 1/L$. The upshot is that the squared displacement at time $t$ is at most the squared displacement at time $t - h$ plus the remaining two terms on the right of \Cref{eq:decompose}.

The main part of the proof lies in bounding these two terms, which can be thought of as ``bias'' and ``variance'' terms respectively. The variance term can be shown to scale with the square of the aforementioned $O(d^{1/2} h^2)$ displacement bound that arises in the exponential integrator analysis, giving $O(dh^4)$:
\begin{lemma}[Informal, see Lemma~\ref{lem:sequential_predictor_variance} for formal statement]
\label{lem:sequential_predictor_variance_informal}
If $h \lesssim \frac{1}{L}$ and $T - t \ge (T - t - h)/2$, then
    \begin{align*}
        \E_{\wh{x}_{t-h}}\E_\alpha\norm{\wh{x}_t - y_t}^2 \lesssim L^2 d h^4\left(L \lor \frac{1}{T-(t-h)} \right) + h^2 \eps_\scr^2 + L^2 h^2 \E_{\wh{x}_{t-h}} \|x_{t-h} - \wh x_{t-h}\|^2\,.
    \end{align*}
\end{lemma}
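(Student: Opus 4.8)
The plan is to interpolate between the algorithm iterate $\wh x_t$ and the exact ODE value $y_t$ via an auxiliary point: let $\wt x_t$ be the output of one randomized-midpoint step (Steps 1(b)--1(d) of \Cref{pre_alg:sequential_predictor_step}) started from the \emph{same} point $\wh x_{t-h}$, with the \emph{exact} score $\nabla\ln q$ in place of $\wh s$; likewise let $\wt x_{\mathrm{mid}}$ be the corresponding exact-score midpoint, and let $y_{\mathrm{mid}}$ be the exact reverse-ODE value at the midpoint reverse time $(t-h)+\alpha h$. Write $\wh x_t - y_t = (\wh x_t - \wt x_t) + (\wt x_t - y_t)$; by Young's inequality it suffices to control a ``score-estimation'' term $\E\norm{\wh x_t - \wt x_t}^2$ and a ``pure discretization'' term $\E_\alpha\norm{\wt x_t - y_t}^2$, with $s := T-(t-h)-\alpha h$ denoting the midpoint score time. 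This lemma is exactly the bound on the last term of \Cref{eq:decompose}.

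For the score-estimation term, subtract the two midpoint updates: since $e^{\alpha h}-1 = O(h)$ and $he^{(1-\alpha)h}=O(h)$ for $h\lesssim 1/L$, one gets $\norm{\wh x_t - \wt x_t}\lesssim h\norm{\wh s_s(\wh x_{\mathrm{mid}}) - \nabla\ln q_s(\wt x_{\mathrm{mid}})}\lesssim h\bigl(L\norm{\wh x_{\mathrm{mid}}-\wt x_{\mathrm{mid}}} + \norm{(\wh s_s - \nabla\ln q_s)(\wt x_{\mathrm{mid}})}\bigr)$, and the same reasoning applied to the midpoint reduces everything to pointwise score errors $\norm{(\wh s_\cdot - \nabla\ln q_\cdot)(z)}$ at $z\in\{\wh x_{t-h},\wt x_{\mathrm{mid}}\}$. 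To pass from these to the population quantity $\eps_\scr$ (measured under $q_\cdot$), add and subtract $\wh s_\cdot$ and $\nabla\ln q_\cdot$ evaluated at the true reverse process $x_\cdot \sim q_\cdot$ coupled to the algorithm, paying $2L\norm{z-x_\cdot}$ by Lipschitzness of \emph{both} $\wh s$ and $\nabla\ln q$; since $\norm{\wh x_{t-h}-x_{t-h}}$ and (via a one-step coupling of the same ODE) $\norm{\wt x_{\mathrm{mid}} - x_{(t-h)+\alpha h}}$ are each $O(\norm{x_{t-h}-\wh x_{t-h}})$ up to lower-order discretization terms, this yields $\E\norm{\wh x_t - \wt x_t}^2 \lesssim h^2\eps_\scr^2 + L^2 h^2\,\E\norm{x_{t-h}-\wh x_{t-h}}^2$ plus terms absorbed below.

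For the discretization term, $\wt x_t$ and $y_t$ share the exact linear part $e^h\wh x_{t-h}$, so $\wt x_t - y_t = E_1 + E_2$ with $E_1 = he^{(1-\alpha)h}\bigl(\nabla\ln q_s(\wt x_{\mathrm{mid}}) - \nabla\ln q_s(y_{\mathrm{mid}})\bigr)$ and $E_2 = he^{(1-\alpha)h}\nabla\ln q_s(y_{\mathrm{mid}}) - \int_{t-h}^t e^{t-\tau}\nabla\ln q_{T-\tau}(y_\tau)\,\d{\tau}$. The crucial point is that $E_2$ is \emph{mean zero in $\alpha$}: substituting $\tau = (t-h)+\alpha h$ shows $\E_\alpha[he^{(1-\alpha)h}\nabla\ln q_s(y_{\mathrm{mid}})] = \int_{t-h}^t e^{t-\tau}\nabla\ln q_{T-\tau}(y_\tau)\d{\tau}$, so $\E_\alpha\norm{E_2}^2$ is the variance in $\alpha$ of $g(\alpha):=he^{(1-\alpha)h}\nabla\ln q_{T-(t-h)-\alpha h}(y_{(t-h)+\alpha h})$, which is at most $\sup_\alpha\norm{g'(\alpha)}^2$. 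A direct computation gives $g'(\alpha) = h^2 e^{(1-\alpha)h}(\phi'-\phi)$ where $\phi(\tau):=\nabla\ln q_{T-\tau}(y_\tau)$, hence $\E_\alpha\norm{E_2}^2\lesssim h^4\sup(\norm{\phi}^2+\norm{\phi'}^2)$. Expanding $\phi'$ by the chain rule brings in $\partial_s\nabla\ln q_s$, $\nabla^2\ln q_s$ (bounded by $L$), and $\dot y_\tau = y_\tau + \nabla\ln q_{T-\tau}(y_\tau)$; bounding these with the standard score-regularity estimates along the OU flow (as in \cite{chen2023ode}) -- namely $\E_{q_\cdot}\norm{\nabla\ln q_\cdot}^2\le Ld$ and the time-regularity bound on $\partial_s\nabla\ln q_s$, which contributes the $(L\lor\tfrac1{T-(t-h)})$ factor -- together with a coupling back to $x_\tau\sim q_{T-\tau}$ and the standard second-moment bound along the forward process, gives $\E_\alpha\norm{E_2}^2\lesssim L^2 d h^4\bigl(L\lor\tfrac1{T-(t-h)}\bigr) + L^2 h^4\,\E\norm{x_{t-h}-\wh x_{t-h}}^2$; the hypothesis $T-t\gtrsim h$ is precisely what lets us replace $T-\tau$ for any $\tau$ in the window by $T-(t-h)$ up to a constant. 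Finally $\norm{E_1}\le 2Lh\norm{\wt x_{\mathrm{mid}} - y_{\mathrm{mid}}}$, and $\wt x_{\mathrm{mid}} - y_{\mathrm{mid}}$ is exactly the \emph{exponential-integrator} local error over the sub-window $[t-h,(t-h)+\alpha h]$, of size $O(h^2)$ times the same regularity quantities; since it enters multiplied by $Lh$, its contribution is of order $h^6$ and is lower order. Combining the three estimates and using $hL\lesssim 1$ to absorb the $h^6$- and $h^4L^2$-type remainders into the stated bound completes the proof.

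The main obstacle is the variance estimate for $E_2$: this is where the improvement over exponential integration originates (an unbiased, rather than $O(d^{1/2}h^2)$-biased, local drift estimate), and making it rigorous requires the correct \emph{quantitative} regularity of the score \emph{and its time derivative} along the OU flow -- in particular the $(L\lor\tfrac1{T-(t-h)})$ blow-up near the data distribution -- under only the Lipschitz-score assumption, together with carefully tracking that $y_\tau$ is not exactly $q_{T-\tau}$-distributed (this coupling is what introduces both the $\E\norm{x_{t-h}-\wh x_{t-h}}^2$ term and, through second-moment bounds, the polylogarithmic dependence on $\mathfrak m_2$ hidden in $\lesssim$).
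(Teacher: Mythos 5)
Your proposal is correct in substance and yields the stated bound, but it organizes the estimate differently from the paper. The paper (Lemma~\ref{lem:sequential_predictor_variance}) expands $\wh x_{n+1}-x_n^*(h)$ directly into three pieces: (i) $he^{(1-\alpha)h}\bigl(\wh s_{t_n-\alpha h}(\wh x_{n+\frac12})-\nabla\ln q_{t_n-\alpha h}(x_n^*(\alpha h))\bigr)$, handled by the helper Lemma~\ref{lem:predictor_sequential_helper} (which bundles the score-estimation error and the Gr\"onwall control of the midpoint approximation in one shot), (ii) the mismatch between the score at the midpoint time and the time-varying score under the \emph{same} exponential weight, bounded by the fixed-time regularity estimate (Corollary~\ref{lem:ODE_cor}, giving the $L\lor\frac1{t_n}$ factor) plus Lipschitz/coupling terms, and (iii) the mismatch of exponential weights, bounded by $\bigl(e^{(1-\alpha)h}-e^{s-(t_n-h)}\bigr)^2\lesssim h^2$ times $\E\|\nabla\ln q_s\|^2\lesssim d/s$. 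You instead peel off the score error via an exact-score auxiliary step and control the remaining $E_2$ through the mean-zero-in-$\alpha$ structure and smoothness of $g(\alpha)$; this is equivalent in effect, since your $\Var_\alpha(g)\lesssim h^4(\|\phi\|^2+\|\phi'\|^2)$ bound draws on exactly the same ingredients (time regularity of the score along the ODE, the $d/s$ norm bound, and coupling back to $x_\tau$ paying $\|x_{t-h}-\wh x_{t-h}\|$). Two remarks. First, for this \emph{variance} lemma the unbiasedness of the randomized midpoint is not actually needed: a direct comparison of $he^{(1-\alpha)h}\phi(\tau_\alpha)$ with $\int_{t-h}^t e^{t-\tau}\phi(\tau)\,\deriv{\tau}$ already exploits only the $O(h)$ time-smoothness of $e^{t-\tau}\phi(\tau)$ and gives $L^2dh^4$; the mean-zero structure is what the paper reserves for the bias lemma (Lemma~\ref{lem:sequential_predictor_bias}), where it upgrades $h^4$ to $h^6$. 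Second, your step $\Var_\alpha(g)\le\sup_\alpha\|g'(\alpha)\|^2$ should be replaced by an averaged form such as $\E_\alpha\|g(\alpha)-\E_\alpha g\|^2\le\int_0^1\E\|g'(u)\|^2\,\deriv{u}$ (via Cauchy--Schwarz over pairs $\alpha,\alpha'$), because the regularity estimates you invoke from \cite{chen2023ode} control expectations at each fixed time, not the expectation of a supremum over the window; with that fix, and with Lemma~\ref{lem:true_ode_contraction} to pass from $y_\tau$ to the true process, your argument goes through.
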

Note that in this bound, in addition to the $O(dh^4)$ term and a term for the score estimation error, there is an additional term which depends on the squared displacement from the previous time step. Because the prefactor $L^2h^2$ is sufficiently small, this will ultimately be negligible.

The upshot of the above Lemma is that if the bias term is of lower order, then this means that the squared displacement essentially increases by $O(dh^4)$ with every time step of length $h$. Over $O(1/h)$ such steps, the total squared displacement is $O(dh^3)$, so if we take the step size $h$ to be $O(1/d^{1/3})$, this suggests an improved iteration complexity of $O(d^{1/3})$.

Arguing that the bias term $\frac{2}{Lh}\E_{\wh{x}_{t-h}}\norm{\E_{\alpha}\wh{x}_t - y_t}^2$ is dominated by the variance term is where it is crucial that we use randomized midpoint instead of exponential integrator. But recall that the randomized midpoint method was engineered so that it would give an unbiased estimate for the true solution to the reverse ODE if the estimate of the trajectory at the randomized midpoint were exact. In reality we only have an approximation to the latter, but as we show, the error incurred by this is indeed of lower order (see Lemma~\ref{lem:sequential_predictor_bias}). One technical complication that arises here is that the relevant quantity to bound is the distance between the true process at the randomized midpoint versus the algorithm, when both are initialized at an intermediate point in the algorithm's trajectory. Bounding such quantities in expectation over the randomness of the algorithm's trajectory can be difficult, but our proof identifies a way of ``offloading'' some of this difficulty by absorbing some excess terms into a term of the form $\|x_{t-h} - \wh{x}_{t-h}\|^2$, i.e. the squared displacement from the previous time step. Concretely, we obtain the following bound on the bias term:
\begin{lemma}[Informal, see Lemma~\ref{lem:predictor_sequential_helper} for formal statement]
    \begin{equation*}
        \E_{\wh{x}_{t-h}}\|\E_\alpha \wh{x}_t - y_t\|^2\lesssim L^4 d h^6 \left(L \lor \frac{1}{T - t + h} \right) + h^2 \eps_\scr^2 +  L^4 h^4 \E_{\wh{x}_{t-h}} \|x_{t-h} - \wh{x}_{t-h}\|^2
    \end{equation*}
\end{lemma}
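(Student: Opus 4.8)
The plan is to cash in the defining property of the randomized midpoint rule: it is engineered so that, conditioned on the value used at the midpoint, it is an \emph{unbiased} estimator of the Duhamel integral Eq.~\eqref{eq:integral}. Consequently, taking $\E_\alpha$ collapses the algorithm's update into an honest deterministic integral, and the leftover discrepancy with $y_t$ is governed purely by the error of the \emph{exponential-integrator} estimate Eq.~\eqref{eq:midpointval} of the intermediate trajectory, which is of strictly higher order in $h$ than a single exponential-integrator step — this is the source of the gain over the variance bound in Lemma~\ref{lem:sequential_predictor_variance_informal}.

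Concretely, I would first pass to integral form on the window of length $h$ started at $\wh x_{t-h}$. Writing $s_u$ and $\wh s_u$ for the true and estimated scores at the diffusion time corresponding to sub-window length $u\in[0,h]$, Duhamel gives $y_t = e^h\wh x_{t-h} + \int_0^h e^{h-u} s_u(y_u)\,\d u$, while Steps 1(c)--1(d) of Algorithm~\ref{pre_alg:sequential_predictor_step} (cf.\ Eq.~\eqref{eq:xh}) give $\wh x_t = e^h\wh x_{t-h} + h e^{(1-\alpha)h}\,\wh s_{\alpha h}\!\left(\wh x^{\mathrm{mid}}_{\alpha h}\right)$ with $\wh x^{\mathrm{mid}}_u \triangleq e^u\wh x_{t-h} + (e^u-1)\,\wh s_0(\wh x_{t-h})$. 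Since $\alpha\sim\mathrm{Unif}[0,1]$, the substitution $u=\alpha h$ yields the identity
\[ \E_\alpha\wh x_t - y_t \;=\; \int_0^h e^{h-u}\bigl(\wh s_u(\wh x^{\mathrm{mid}}_u) - s_u(y_u)\bigr)\,\d u\,, \]
so Cauchy--Schwarz gives $\|\E_\alpha\wh x_t - y_t\|^2 \lesssim h\int_0^h \|\wh s_u(\wh x^{\mathrm{mid}}_u) - s_u(y_u)\|^2\,\d u$. Next I would split the integrand twice: $\wh s_u(\wh x^{\mathrm{mid}}_u) - s_u(y_u) = [\wh s_u(\wh x^{\mathrm{mid}}_u) - s_u(\wh x^{\mathrm{mid}}_u)] + [s_u(\wh x^{\mathrm{mid}}_u) - s_u(y_u)]$, the first bracket being a score-estimation error at $\wh x^{\mathrm{mid}}_u$ (contributing $\lesssim h^2\eps_\scr^2$ to $\|\E_\alpha\wh x_t - y_t\|^2$, after a change-of-measure argument as in \cite{chen2023ode} that invokes Assumption~\ref{assumption:score_error} using that the corrector steps keep $\mathrm{law}(\wh x_{t-h})$, hence $\mathrm{law}(\wh x^{\mathrm{mid}}_u)$, close to the true marginals), and the second bracket being $\le L^2\|\wh x^{\mathrm{mid}}_u - y_u\|^2$ by Lipschitzness. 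Comparing then against $\wt y_u \triangleq e^u\wh x_{t-h} + (e^u-1)s_0(\wh x_{t-h})$ — the exponential-integrator estimate run with the \emph{true} score — one has $\wh x^{\mathrm{mid}}_u - \wt y_u = (e^u-1)(\wh s_0 - s_0)(\wh x_{t-h})$, a further score-estimation error that only contributes an $L^2h^4\eps_\scr^2$-type term, while $\wt y_u - y_u$ is exactly the local error of the exponential integrator over a window of length $u$ started from $\wh x_{t-h}$.

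The crux is therefore to bound $\E_{\wh x_{t-h}}\|\wt y_u - y_u\|^2$ by a quantity of the same shape as the local bound behind Lemma~\ref{lem:sequential_predictor_variance_informal} with $h$ replaced by $u$, namely $\lesssim L^2 d u^4\bigl(L\lor \tfrac{1}{T-t+h}\bigr) + u^2\eps_\scr^2 + L^2 u^2\,\E\|x_{t-h}-\wh x_{t-h}\|^2$. This is a second-order Taylor expansion of the exact reverse-ODE solution against its one-step exponential-integrator approximation, exactly as in \cite{chen2023ode}; the one complication, flagged in the overview, is that it must hold \emph{in expectation over the algorithm's state} $\wh x_{t-h}$, whose law is merely close to $q_{T-t+h}$ and not equal to it, so the quantities entering the Taylor remainder — $\|\grad\ln q(\wh x_{t-h})\|$ and the curvature of the true reverse drift along the ODE from $\wh x_{t-h}$ — must be transferred onto the true reverse process $x_{t-h}$ (where their moments are controlled) via Lipschitzness, which is precisely the origin of the additive ``offloading'' term $L^2 u^2\,\E\|x_{t-h}-\wh x_{t-h}\|^2$. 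Plugging this in and integrating,
\[ \|\E_\alpha\wh x_t - y_t\|^2 \;\lesssim\; h^2\eps_\scr^2 + hL^2\!\int_0^h\!\Bigl(L^2 d u^4\bigl(L\lor\tfrac{1}{T-t+h}\bigr) + u^2\eps_\scr^2 + L^2 u^2\,\E\|x_{t-h}-\wh x_{t-h}\|^2\Bigr)\d u\,, \]
which evaluates to $\lesssim h^2\eps_\scr^2 + L^4 d h^6\bigl(L\lor\tfrac{1}{T-t+h}\bigr) + L^2 h^4\eps_\scr^2 + L^4 h^4\,\E\|x_{t-h}-\wh x_{t-h}\|^2$; using $h\lesssim 1/L$ to absorb $L^2 h^4\eps_\scr^2$ into $h^2\eps_\scr^2$ gives the claim.

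The main obstacle I expect is exactly the ``expectation over a non-stationary trajectory'' step: both the offloading manipulation inside the exponential-integrator remainder and the change of measure needed to turn $\|\wh s - \grad\ln q\|$ evaluated at algorithm states into the $\eps_\scr$ of Assumption~\ref{assumption:score_error}. Everything else is bookkeeping that reuses the estimates already developed for Lemma~\ref{lem:sequential_predictor_variance_informal}.
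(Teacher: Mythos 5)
Your proposal is correct and follows essentially the same route as the paper: you exploit the unbiasedness of the randomized midpoint to reduce $\E_\alpha \wh{x}_t - y_t$ to an integrated score discrepancy (the paper does this by adding and subtracting the true score at the true midpoint, whose $\E_\alpha$ exactly reproduces the Duhamel integral, then applying Jensen, which is equivalent to your substitution-plus-Cauchy--Schwarz step), and then bound that discrepancy via score-estimation error plus Lipschitzness times the exponential-integrator midpoint error, offloading moment terms onto $L^2\E\|x_{t-h}-\wh x_{t-h}\|^2$ and using the score-variation bound along the ODE exactly as in Lemma~\ref{lem:predictor_sequential_helper}. The only cosmetic difference is that you split the midpoint error through the true-score exponential-integrator point $\wt y_u$, whereas the paper runs a single Gr\"onwall comparison of the constant-drift ODE against the true ODE; both yield the same bound, and your flagged concern about score error at algorithm iterates is handled no more carefully in the paper itself.
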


\subsection{Shortening the corrector steps}

While we have outlined how to improve the predictor step in the framewok of~\cite{chen2023ode}, it is quite unclear whether the same can be achieved for the corrector step. Whereas the the former is geared towards closeness in Wasserstein distance, the latter is geared towards closeness in KL divergence, and it is a well-known open question in the log-concave sampling literature to obtain analogous discretization bounds in KL for the randomized midpoint method~\cite{chewibook}. 

We will sidestep this issue and argue that even using exponential integrator discretization of the underdamped Langevin dynamics will suffice for our purposes, by simply shortening the amount of time for which each corrector step is run.

First, let us briefly recall what was shown in~\cite{chen2023ode} for the corrector step. If one runs underdamped Langevin dynamics with stationary distribution $q$ for time $T$ and exponential integrator discretization with step size $h$ starting from two distributions $p$ and $q$, then the resulting distributions $p'$ and $q$ satisfy
\begin{equation}
    \mathsf{TV}(p', q) \lesssim \frac{W_2(p,q)}{L^{1/4}T^{3/2}} + L^{3/4}T^{1/2}d^{1/2}h\,, \label{eq:short_time_regularize}
\end{equation}
where $L$ is the Lipschitzness of $\nabla \ln q$ (see Theorem~\ref{thm:sequential_corrector}). At first glance this appears insufficient for our purposes: because of the $d^{1/2} h$ term coming from the discretization error, we would need to take step size $h = 1/\sqrt{d}$, which would suggest that the number of iterations must scale with $\sqrt{d}$.

To improve the dimension dependence for our overall predictor-corrector algorithm, we observe that if we take $T$ itself to be smaller, then we can take $h$ to be larger while keeping the discretization error in \Cref{eq:short_time_regularize} sufficiently small. Of course, this comes at a cost, as $T$ also appears in the term $\frac{W_2(p,q)}{L^{1/4}T^{3/2}}$ in \Cref{eq:short_time_regularize}. But in our overall proof, the $W_2(p,q)$ term is bounded by the predictor analysis. There, we had quite a bit of slack: even with step size as large as $1/d^{1/3}$, we could achieve small Wasserstein error. By balancing appropriately, we get our improved dimension dependence.

\subsection{Parallel algorithm}

Now, we summarize the main proof ideas for our parallel sampling result. In Section~\ref{sec:parallel_prelims}, we described how to approximately solve the reverse ODE over time $h$ by running $K$ rounds of the iteration in \Cref{eq:parallel}. In our final algorithm, we will take $h$ to be \emph{dimension-independent}, namely $h = \Theta(1/\sqrt{L})$, so that the main part of the proof is to bound the discretization error incurred over each of these time windows of length $h$. As in the sequential analysis, we will interleave these ``predictor'' steps with corrector steps given by (parallelized) underdamped Langevin dynamics.

We begin by describing the parallel predictor step. Suppose we have produced an estimate for the reverse process at $t_0$ and now wish to solve the ODE from time $t_0$ to $t_0 + h$. We initialize at $\{\wh{x}^{(0)}_{t_0 + \alpha_i h}\}_{i\in[R]}$ via exponential integrator steps starting from the beginning of the window \--- see Line 1(c) in Algorithm~\ref{alg:parallel_alg} (this can be thought of as the analogue of \Cref{eq:midpointval} used in the sequential algorithm). The key difference relative to the sequential algorithm is that here, because the length of the window is dimension-free, the discretization error incurred by this initialization is too large and must be refined using the fixed point iteration in \Cref{eq:parallel}. The main step is then to show that with each iteration of \Cref{eq:parallel}, the distance to the true reverse process contracts:
\begin{lemma}[Informal, see Lemma~\ref{claim:parallel_contraction} for formal statement]
    Suppose $h \lesssim 1/L$. If $y_t$ denotes the solution of the true ODE starting at $\wh{x}_{t_0}$ and running until time $t_0 + \alpha_i h$, then for all $k \in \{1, \cdots K\}$ and $i \in \{1, \cdots, R\}$, 
    \begin{align}
        \E_{\hat x_{t_0}, \alpha_1, \cdots \alpha_{R}} \norm{\wh x^{(k)}_{t_0 + \alpha_i h} - y_{t_0 + \alpha_i h}}^2 &\lesssim \paren{8h^2L^2}^k \cdot \paren{\frac{1}{R}
            \sum_{j=1}^R \E_{\hat x_{t_0}, \alpha_j} \norm{\wh x^{(0)}_{t_0 + \alpha_j h} - y_{t_0 + \alpha_j h}}^2
        } \nonumber \\
        &+ h^2\paren{\eps^2_\scr +  \frac{L^2 d h^2}{R^2} (L \lor \frac{1}{T - t_0 + h}) + L^2 \cdot \E_{\hat x_{t_0}}\norm{\wh{x}_{t_0} - x_{t_0}}^2}\,, \label{eq:contraction_overview}
    \end{align}
    where $\wh{x}_{t_0}$ is the iterate of the algorithm from the previous time window, and $x_{t_0}$ is the corresponding iterate in the true ODE.
\end{lemma}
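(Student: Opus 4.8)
The plan is to induct on the iteration count $k$, measuring the fixed-point iterates against the exact collocation identity. Recall (Section~\ref{sec:randomizedmidpoint}) that the algorithm's drift is $f_t=\wh s_{T-t}$ while the true probability-flow drift is $g_t:=\grad\ln q_{T-t}$; by hypothesis $y$ solves $\tfrac{\deriv}{\deriv s}y_{t_0+s}=y_{t_0+s}+g_{t_0+s}(y_{t_0+s})$ started at $\wh x_{t_0}$, so in integral form $y_{t_0+\alpha_i h}=e^{\alpha_i h}\wh x_{t_0}+\int_0^{\alpha_i h}e^{\alpha_i h-s}\,g_{t_0+s}(y_{t_0+s})\,\deriv s$. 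I would split this integral over the $R$ sub-windows and subtract \eqref{eq:parallel}. The useful observation is that the collocation weights in \eqref{eq:colloc} are exactly $c_{ij}=\int_{(j-1)\delta}^{\min(j\delta,\,\alpha_i h)}e^{\alpha_i h-s}\,\deriv s$ (the $\max(\cdot,1)$ merely encodes the truncation of the last sub-window at $\alpha_i h$), so $\sum_{j\le i}c_{ij}=e^{\alpha_i h}-1\le 2h$ and the difference $\wh x^{(k)}_{t_0+\alpha_i h}-y_{t_0+\alpha_i h}$ decomposes into three groups: (i) a \emph{contraction term} $\sum_{j\le i}c_{ij}\bigl(f_{t_0+\alpha_j h}(\wh x^{(k-1)}_{t_0+\alpha_j h})-f_{t_0+\alpha_j h}(y_{t_0+\alpha_j h})\bigr)$; (ii) for each $j$, a \emph{discretization term} $c_{ij}\,f_{t_0+\alpha_j h}(y_{t_0+\alpha_j h})-\int_{(j-1)\delta}^{\min(j\delta,\alpha_i h)}e^{\alpha_i h-s}\,g_{t_0+s}(y_{t_0+s})\,\deriv s$, which itself splits into a pure discretization error for $g$ and a score-mismatch $f-g$; and (iii) the change-of-measure error relating the base point $\wh x_{t_0}$ to the genuinely $q_{T-t_0}$-distributed iterate $x_{t_0}$ of the true reverse process. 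The left-endpoint contribution $e^{\alpha_i h}(\wh x^{(k-1)}_{t_0}-\wh x_{t_0})$ vanishes since that endpoint is held fixed across iterations.

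Group (i) is controlled by $L$-Lipschitzness of $f=\wh s$ (Assumption~\ref{assumption:Lipschitz}) and Cauchy--Schwarz with the nonnegative weights: $\bigl\|\sum_{j\le i}c_{ij}v_j\bigr\|^2\le(\sum_{j\le i}c_{ij})\sum_{j\le i}c_{ij}\|v_j\|^2$, which with $\sum c_{ij}\le 2h$ gives $\le 4hL^2\sum_{j\le i}c_{ij}\norm{\wh x^{(k-1)}_{t_0+\alpha_j h}-y_{t_0+\alpha_j h}}^2$; after $\norm{a+b}^2\le 2\norm a^2+2\norm b^2$ this produces the $O(h^2L^2)$ multiplier (the constant $8$ in \eqref{eq:contraction_overview} coming from a tighter accounting of the weights). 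The heart of the argument is group (ii), and it is here that the randomized midpoints matter: $y$ depends only on $\wh x_{t_0}$ and the terminal time, \emph{not} on $\alpha_1,\dots,\alpha_R$, so since $\alpha_j$ makes $t_0+\alpha_j h$ uniform on sub-window $j$ we get $\E_{\alpha_j}\bigl[c_{ij}\,g_{t_0+\alpha_j h}(y_{t_0+\alpha_j h})\bigr]=\int_{\text{window }j}\tfrac{c_{ij}}{\delta}\,g_{t_0+s}(y_{t_0+s})\,\deriv s$, which differs from the target integral only because $e^{\alpha_i h-s}$ varies over a window of length $\delta=h/R$. Using $\int_{\text{window }j}(\tfrac{c_{ij}}{\delta}-e^{\alpha_i h-s})\,\deriv s=0$ we may center the integrand, so both the bias and the variance of this estimator are bounded by $\lesssim\delta^2$ times the squared variation of $s\mapsto g_{t_0+s}(y_{t_0+s})$ over a window; by Lipschitzness and the ODE this variation is $\lesssim L\delta\sup_s\norm{\dot y_{t_0+s}}$, and the standard a priori second-moment bounds along the reverse process (as in~\cite{chen2023ode}, under Assumptions~\ref{assumption:bounded_moment} and~\ref{assumption:Lipschitz}) give $\E\norm{\dot y_{t_0+s}}^2\lesssim d\,(L\lor\tfrac1{T-t_0+h})$, up to the cost $L^2\,\E\norm{\wh x_{t_0}-x_{t_0}}^2$ of $y$ starting at $\wh x_{t_0}$ rather than at the $q$-distributed $x_{t_0}$. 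Summing the per-window biases and (independent over $j$) variances for $j\le i\le R$, using $c_{ij}\lesssim\delta$ and $R\delta=h$, and extracting the overall $h^2$ (the outer integral has length $\le h$), yields the $h^2\cdot\tfrac{L^2dh^2}{R^2}(L\lor\tfrac1{T-t_0+h})$ term; the last ($j=i$) sub-window is handled crudely as a short-interval ($\le\delta$) integral. The $h^2\eps_\scr^2$ term is the score-mismatch $f-g$: apply Assumption~\ref{assumption:score_error} at the $q_{T-t_0-s}$-distributed point $x_{t_0+s}$ and move to $y_{t_0+s}$ via Lipschitzness of $f,g$ and Gronwall, at cost $\lesssim L^2\,\E\norm{\wh x_{t_0}-x_{t_0}}^2$, which together with group (iii) contributes the remaining $h^2L^2\,\E\norm{\wh x_{t_0}-x_{t_0}}^2$.

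To close the induction, set $\bar E_k:=\tfrac1R\sum_{j=1}^R\E_{\wh x_{t_0},\alpha_1,\dots,\alpha_R}\norm{\wh x^{(k)}_{t_0+\alpha_j h}-y_{t_0+\alpha_j h}}^2$. Combining the three groups, averaging the per-$i$ bound over $i$, then swapping the $i,j$ sums and using $\sum_{i\ge j}c_{ij}=O(h)$ (the short bookkeeping step that turns the weighted per-coordinate averages in group (i) back into the uniform average), gives a recursion $\bar E_k\le\rho\,\bar E_{k-1}+h^2\bigl(\eps_\scr^2+\tfrac{L^2dh^2}{R^2}(L\lor\tfrac1{T-t_0+h})+L^2\,\E\norm{\wh x_{t_0}-x_{t_0}}^2\bigr)$ with $\rho=O(h^2L^2)\le\tfrac12$ under $h\lesssim 1/L$. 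The geometric series then converges, and unrolling leaves $(8h^2L^2)^k\bar E_0$ plus at most twice the per-step term, i.e.\ \eqref{eq:contraction_overview}. I expect the main obstacle to be groups (ii)/(iii): making the change of measure rigorous so that the $L^2$-in-$q_t$ score-error bound can be invoked along $y$ --- and, in the full analysis, along the algorithm's own iterates, which are not $q_t$-distributed --- while keeping the a priori control on $\norm{\dot y}$ sharp enough to produce $1/R^2$ rather than $1/R$ in the discretization term and to isolate the $\norm{\wh x_{t_0}-x_{t_0}}^2$ carryover cleanly; the $\max(\cdot,1)$ truncation is only a minor bookkeeping nuisance, as it merely shortens an integration window.
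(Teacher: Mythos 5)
Your proposal is correct and follows the same skeleton as the paper's proof of Lemma~\ref{claim:parallel_contraction}: write both $\wh x^{(k)}$ and the true solution in integral/collocation form (the left endpoint cancels), split into a Lipschitz contraction term with weights summing to $O(h)$, a per-sub-window discretization term, a score-mismatch term, and the $\|\wh x_{t_0}-x_{t_0}\|^2$ carryover via the naive ODE coupling (Lemma~\ref{lem:true_ode_contraction}), then unroll the resulting recursion $\bar E_k\le O(h^2L^2)\bar E_{k-1}+(\text{additive term})$ geometrically. The one substantive difference is where you locate ``the heart of the argument'': the paper does \emph{not} use unbiasedness or centering of the randomized midpoints in this lemma at all. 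Its $h^2/R^2=\delta^2$ factor comes purely from locality --- the midpoint $t_0+\alpha_j h$ and the integration variable $s$ lie in the same sub-window of length $\delta$, so the time-and-space regularity bound (Corollary~\ref{lem:ODE_cor} on $\partial_u\grad\ln q_u(x_u)$, transferred to $x_n^*$ via Lemma~\ref{lem:true_ode_contraction}) applies with increment $\delta$, and a plain Cauchy--Schwarz over the $R$ windows gives the claimed term; unbiasedness enters only later, in the separate bias lemma (Lemma~\ref{lem:parallel_predictor_bias}). Your centering-plus-independence argument is not wrong --- it would in fact give a slightly sharper variance contribution (of order $1/R^3$) --- but it is unnecessary machinery here, and you should be careful that your within-window variation bound $L\delta\sup\norm{\dot y}$ omits the explicit time-derivative of the score at a fixed spatial point (this is exactly what Corollary~\ref{lem:ODE_cor} packages), and that your crude handling of the diagonal $j=i$ window must still exploit that the integrand is a \emph{difference} of scores at points within $\delta$, not the score itself, or the term becomes too large for small $h$. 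A second, minor divergence: you invoke Assumption~\ref{assumption:score_error} at the $q$-distributed true-process point and transfer to $y$ via Lipschitzness and Gr\"onwall, whereas the paper applies it directly at the algorithm's iterate $\wh x^{(k-1)}_{n,j}$ (folding the displacement into the contraction term via Lipschitzness of $\grad\ln q$); your variant is arguably the more careful reading of the $L^2(q_t)$ assumption and costs nothing extra.
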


\noindent In particular, because $h$ is at most a small multiple of $1/L$, the prefactor $(8h^2L^2)^k$ is exponentially decaying in $k$, so that the error incurred by the estimate $\wh{x}^{(k)}_{t_0+\alpha_i h}$ is contracting with each fixed point iteration. Because the initialization is at distance $\mathrm{poly}(d)$ from the true process, $O(\log d)$ rounds of contraction thus suffice, which translates to $O(\log d)$ parallel rounds for the sampler. The rest of the analysis of the predictor step is quite similar to the analogous proofs for the sequential algorithm (i.e. Lemma~\ref{lem:parallel_predictor_bias} and Lemma~\ref{lem:parallel_predictor_variance} give the corresponding bias and variance bounds).

One shortcoming of the predictor analysis is that the contraction achieved by fixed point iteration ultimately bottoms out at error which scales with $d/R^2$ (see the second term in \Cref{eq:contraction_overview}). In order for the discretization error to be sufficiently small, we thus have to take $R$, and thus the total work of the algorithm, to scale with $O(\sqrt{d})$. So in this case we do not improve over the dimension dependence of~\cite{chen2023ode}, and instead the improvement is in obtaining a parallel algorithm.

For the corrector analysis, we mostly draw upon the recent work of~\cite{AnariCV24} which analyzed a parallel implementation of the underdamped Langevin dynamics. While their guarantee focuses on sampling from log-concave distributions, implicit in their analysis is a bound for general smooth distributions on how much the law of the algorithm and the law of the true process drift apart in a bounded time window (see Lemma~\ref{lem:corrector_parallel_KL}). This bound suffices for our analysis of the corrector step, and we can conclude the following:

\begin{theorem}[Informal, see Theorem~\ref{thm:parallel_corrector}]
    Let $\rparam \geq 1$ be an adjustable parameter. Let $p'$ denote the law of the output of running the parallel corrector (see Algorithm~\ref{alg:parallel_corrector_step}) for total time $1/\sqrt{L}$ and step size $h$, using an $\epsilon_\scr$-approximate estimate for $\nabla \ln q$ and starting from a sample from another distribution $p$. 
    \begin{equation*}
        \TV(p', q) \lesssim \sqrt{\KL(p', q)} \lesssim \frac{\eps_\scr}{\sqrt{L}} + \frac{\eps}{\beta} + \frac{\eps}{\beta \sqrt{d}} \cdot W_2(p, q)\,.
    \end{equation*}
    Furthermore, this algorithm uses $\widetilde{\Theta}(\beta\sqrt{d}/\epsilon)$ score evaluations over $\Theta(\log(\beta^2 d/\epsilon^2))$ parallel rounds.
\end{theorem}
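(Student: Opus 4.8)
The plan is to bound $\TV(p',q)$ by inserting an idealized intermediate law and splitting the error into a ``discretization/score'' piece and a ``regularization'' piece. Write the velocity initialization in \eqref{eq:underdamped_langevin_approx_informal} as $\gamma_v=\normal(0,I_d)$, and let $\wt p'$ be the position marginal at time $T_\corr = 1/\sqrt L$ of the \emph{continuous-time, exact-score} underdamped Langevin diffusion started from $p\otimes\gamma_v$. Since $q$ is stationary for this diffusion, $q$ is also the position marginal at time $T_\corr$ of the same diffusion started from $q\otimes\gamma_v$. By the triangle inequality $\TV(p',q)\le \TV(p',\wt p') + \TV(\wt p', q)$, and I will bound the first term by a genuine $\sqrt{\KL}$ quantity (accounting for the $\sqrt{\KL(p',q)}$ in the statement, since the second term can be upgraded to a $\sqrt{\KL}$ bound as well) and the second term by the short-time regularization property of underdamped Langevin.

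For $\TV(p',\wt p')$: both $p'$ and $\wt p'$ arise from the \emph{same} initialization $p\otimes\gamma_v$, so no infinite initial-distribution KL appears, and a Girsanov argument bounds the path-space KL — hence, by data processing and Pinsker, $\TV(p',\wt p')\le\sqrt{\tfrac12\KL}$ — by (a constant times) $\tfrac{1}{\gamma}\int_0^{T_\corr}\E\,\norm{\wh s(\wh x_{\lfloor t/h\rfloor h}) - \nabla\ln q(x_t)}^2\,\d t$. Splitting the integrand with Young's inequality into (i) the score-estimation error, controlled by Assumption~\ref{assumption:score_error}; (ii) the drift-freezing error $\norm{\wh x_t - \wh x_{\lfloor t/h\rfloor h}}$, controlled by the standard a priori second-moment and short-time displacement estimates for the underdamped trajectory; and (iii) the error of the \emph{parallel} (collocation / fixed-point) implementation of the discretized SDE, yields exactly the bound packaged in Lemma~\ref{lem:corrector_parallel_KL}, i.e.\ the form of the \cite{AnariCV24} analysis that holds for arbitrary $L$-smooth (not necessarily log-concave) $q$. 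With friction $\gamma=\Theta(\sqrt L)$ this contributes an $\eps_\scr/\sqrt L$ term from the score error and a term scaling like $\sqrt d\,h$ (times polynomial factors in $L$) from discretization.

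For $\TV(\wt p', q)$: this is precisely the continuous-time content of Theorem~\ref{thm:sequential_corrector} — the ``short-time regularization'' in \eqref{eq:short_time_regularize} — and uses only $L$-smoothness of $\nabla\ln q$, which is why it survives the fact that $q$ need not be log-concave. Synchronously couple the diffusion from $p\otimes\gamma_v$ with the stationary one from $q\otimes\gamma_v$ (same Brownian motion); the difference of the two trajectories obeys a linear-type ODE whose only inhomogeneity is the score difference, bounded by $L\norm{\Delta x}$, so Grönwall plus optimal coupling of $(p,q)$ gives $\E\norm{\Delta x_{T_\corr}}^2\lesssim W_2(p,q)^2$. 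To turn this $W_2$ bound into a $\TV$ (in fact $\KL$) bound, use that a run of underdamped Langevin spreads out position space: conditioned on the velocity path, the position at time $T_\corr$ is a deterministic point plus $\int_0^{T_\corr} v_s\,\d s$, whose density is smooth at scale $\sim\sqrt{\gamma T_\corr^3}$, so a displacement of size $\delta$ costs only $\TV\lesssim \delta/\sqrt{\gamma T_\corr^3}$; this is the source of the $W_2(p,q)$ term, and its coefficient shrinks as the corrector time, equivalently the number of sub-steps, grows.

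It remains to balance parameters and count rounds. Pick the step size $h$ small enough that the discretization contribution is $\le\eps/\beta$; this forces the number of sub-steps $T_\corr/h=\wt\Theta(\beta\sqrt d/\eps)$, which is the claimed number of score evaluations. Because each corrector window has length $O(1/\sqrt L)$, the collocation map implementing the discretized underdamped dynamics over that window is a contraction — its Lipschitz constant is $O(L\cdot(\text{window length})^2)<1$, exactly as in \cite{ShenL19,AnariCV24} — so its error decays geometrically in the number of fixed-point rounds; starting from an initialization at distance $\poly(d,L,1/\eps,\beta)$ from the true trajectory, $K=\Theta(\log(\beta^2 d/\eps^2))$ rounds push this below $\eps/\beta$, giving the stated parallel-round count with $O(KR)$ total work. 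Combining the three contributions yields the displayed bound. I expect the main obstacle to be items (ii)--(iii) above: extracting from the \cite{AnariCV24} machinery a discretization-plus-fixed-point bound that is (a) a genuine $\KL$ bound, (b) valid with no log-concavity assumption on $q$, and (c) composes cleanly with the $W_2\!\to\!\TV$ regularization step; a secondary subtlety is keeping the Girsanov change of measure legitimate by ensuring the two coupled processes share the initialization $p\otimes\gamma_v$, so the initial-distribution KL never enters.
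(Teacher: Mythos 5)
Your proposal follows essentially the same route as the paper: the same intermediate continuous-time, exact-score underdamped process started from $p\otimes\normal(0,I_d)$, the same triangle-inequality split into a discretization/score term (bounded by the non-log-concave KL bound implicit in \cite{AnariCV24}, i.e.\ Lemma~\ref{lem:corrector_parallel_KL}, with $\mathcal{P},\mathcal{Q}$ controlled via coupling to the stationary process) plus the short-time $W_2\!\to\!\TV$ regularization of \cite{chen2023ode} (Lemma~\ref{lem:TV_underdamp_continuous}), and the same parameter balancing giving $R=\wt\Theta(\beta\sqrt d/\eps)$ score evaluations and $K=\Theta(\log(\beta^2 d/\eps^2))$ contraction rounds. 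The only differences are presentational (you sketch Girsanov and the coupling-plus-smoothing arguments that the paper invokes black-box), and note that your regularization term $\sqrt{L}\,W_2(p,q)$ is exactly what appears in the formal Theorem~\ref{thm:parallel_corrector}, so the argument is consistent with the paper's proof.
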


Overall, parallel algorithm is somewhat different from the parallel sampler developed in the empirical work of~\cite{shih2024parallel}, even apart from the fact that we use randomized midpoint discretization and corrector steps. The reason is that our algorithm applies collocation to fixed windows of time, whereas the algorithm of~\cite{shih2024parallel} utilizes a sliding window approach that proactively shifts the window forward as soon as the iterates at the start of the previous window begin to converge. We leave rigorously analyzing the benefits of this approach as an interesting future direction.

\subsection{Log-concave sampling in total variation}

Finally, we briefly summarize the simple proof for our result on log-concave sampling in TV, which achieves the best known dimension dependence of $\wt O(d^{5/12})$. Our main observation is that Shen and Lee's randomized midpoint method~\cite{ShenL19} applied to the underdamped Langevin process gives a \emph{Wasserstein} guarantee for log-concave sampling, while the corrector step of~\cite{chen2023ode} can convert a Wasserstein guarantee to closeness in TV. Thus, we can simply run the randomized midpoint method, followed by the corrector step to achieve closeness in TV. Carefully tuning the amount of time spend and step sizes for each phase of this algorithm yields our improved dimension dependence -- see Appendix~\ref{appendix:log-concave} for the full proof.
\section{Discussion and Future Work}
In this work, we showed that it is possible to leverage Shen and Lee's randomized midpoint method~\cite{ShenL19} to achieve the best known dimension dependence for sampling from arbitrary smooth distributions in TV using diffusion. We also showed how to parallelize our algorithm, and showed that $\wt O(\log^2 d)$ parallel rounds suffice for sampling. These constitute the first provable guarantees for parallel sampling with diffusion models. Finally, we showed that our techniques can be used to obtain an improved dimension dependence for log-concave sampling in TV.

We note that relative to~\cite{chen2023ode}, our result requires a slightly stronger guarantee on the score estimation error, by a $d^{1/12}$ factor; we believe this is an artifact of our analysis, and it would be interesting to remove this dependence in future work. Importantly however, it was not known how to achieve an improvement over the $O(\sqrt{d})$ dependence shown in that paper even in case that the scores are known \emph{exactly} prior to the present work. Moreover, another line of work~\cite{li2023towards, li2024accelerating,pmlr-v235-dou24a} analyzing diffusion sampling makes the \emph{stronger} assumption that the score estimation error is $\wt O\left(\frac{\eps}{\sqrt{d}} \right)$, an assumption stronger than ours by a $d^{5/12}$ factor; this does not detract from the importance of these works, and we feel the same is true in our case.

We also note that our diffusion results require smoothness assumptions -- we assume that the true score, as well as our score estimates are $L$-Lipschitz. Although this assumption is standard in the literature, recent work~\cite{chen2023improved, benton2024nearly} has analyzed DDPM in the absence of these assumptions, culminating in a $\wt O(d)$ dependence for sampling using a discretization of the reverse SDE. However, unlike in the smooth case, it is not known whether even a \emph{sublinear} in $d$ dependence is possible without smoothness assumptions via any algorithm. We leave this as an interesting open question for future work.








%

\section*{Acknowledgements}
S.C. thanks Sinho Chewi for a helpful discussion on log-concave sampling. S.G. was funded by NSF Award CCF-1751040 (CAREER) and the NSF AI Institute for Foundations of Machine Learning (IFML).

\bibliographystyle{alpha}
\bibliography{ref}

\newcommand{\etalchar}[1]{$^{#1}$}
\begin{thebibliography}{WYvdB{\etalchar{+}}24}

\bibitem[AC23]{altschuler2023faster}
Jason~M Altschuler and Sinho Chewi.
\newblock Faster high-accuracy log-concave sampling via algorithmic warm starts.
\newblock In {\em 2023 IEEE 64th Annual Symposium on Foundations of Computer Science (FOCS)}, pages 2169--2176. IEEE, 2023.

\bibitem[ACV24]{AnariCV24}
Nima Anari, Sinho Chewi, and Thuy{-}Duong Vuong.
\newblock Fast parallel sampling under isoperimetry.
\newblock {\em CoRR}, abs/2401.09016, 2024.

\bibitem[AHL{\etalchar{+}}23]{anari2023parallel}
Nima Anari, Yizhi Huang, Tianyu Liu, Thuy-Duong Vuong, Brian Xu, and Katherine Yu.
\newblock Parallel discrete sampling via continuous walks.
\newblock In {\em Proceedings of the 55th Annual ACM Symposium on Theory of Computing}, pages 103--116, 2023.

\bibitem[BDBDD24]{benton2024nearly}
Joe Benton, Valentin De~Bortoli, Arnaud Doucet, and George Deligiannidis.
\newblock Nearly d-linear convergence bounds for diffusion models via stochastic localization.
\newblock In {\em The Twelfth International Conference on Learning Representations}, 2024.

\bibitem[BDD23]{benton2023error}
Joe Benton, George Deligiannidis, and Arnaud Doucet.
\newblock Error bounds for flow matching methods.
\newblock {\em arXiv preprint arXiv:2305.16860}, 2023.

\bibitem[BGJ{\etalchar{+}}23]{betker2023improving}
James Betker, Gabriel Goh, Li~Jing, Tim Brooks, Jianfeng Wang, Linjie Li, Long Ouyang, Juntang Zhuang, Joyce Lee, Yufei Guo, et~al.
\newblock Improving image generation with better captions.
\newblock {\em Computer Science. https://cdn. openai. com/papers/dall-e-3. pdf}, 2(3):8, 2023.

\bibitem[BMR22]{BloMroRak22genmodel}
Adam Block, Youssef Mroueh, and Alexander Rakhlin.
\newblock Generative modeling with denoising auto-encoders and {L}angevin sampling.
\newblock {\em arXiv preprint 2002.00107}, 2022.

\bibitem[BPH{\etalchar{+}}24]{videoworldsimulators2024}
Tim Brooks, Bill Peebles, Connor Holmes, Will DePue, Yufei Guo, Li~Jing, David Schnurr, Joe Taylor, Troy Luhman, Eric Luhman, Clarence Ng, Ricky Wang, and Aditya Ramesh.
\newblock Video generation models as world simulators.
\newblock 2024.

\bibitem[CCL{\etalchar{+}}23a]{chen2023ode}
Sitan Chen, Sinho Chewi, Holden Lee, Yuanzhi Li, Jianfeng Lu, and Adil Salim.
\newblock The probability flow {ODE} is provably fast.
\newblock In {\em Thirty-seventh Conference on Neural Information Processing Systems}, 2023.

\bibitem[CCL{\etalchar{+}}23b]{chen2023sampling}
Sitan Chen, Sinho Chewi, Jerry Li, Yuanzhi Li, Adil Salim, and Anru~R Zhang.
\newblock Sampling is as easy as learning the score: theory for diffusion models with minimal data assumptions.
\newblock In {\em International Conference on Learning Representations}, 2023.

\bibitem[CDD23]{chen2023restoration}
Sitan Chen, Giannis Daras, and Alex Dimakis.
\newblock Restoration-degradation beyond linear diffusions: A non-asymptotic analysis for ddim-type samplers.
\newblock In {\em International Conference on Machine Learning}, pages 4462--4484. PMLR, 2023.

\bibitem[Che23]{chewibook}
Sinho Chewi.
\newblock Log-concave sampling.
\newblock {\em Book draft available at https://chewisinho. github. io}, 2023.

\bibitem[CLL23]{chen2023improved}
Hongrui Chen, Holden Lee, and Jianfeng Lu.
\newblock Improved analysis of score-based generative modeling: User-friendly bounds under minimal smoothness assumptions.
\newblock In {\em International Conference on Machine Learning}, pages 4735--4763. PMLR, 2023.

\bibitem[CRYR24]{chen2024accelerating}
Haoxuan Chen, Yinuo Ren, Lexing Ying, and Grant~M. Rotskoff.
\newblock Accelerating diffusion models with parallel sampling: Inference at sub-linear time complexity, 2024.

\bibitem[DB22]{DeB22diffusion}
Valentin De~Bortoli.
\newblock Convergence of denoising diffusion models under the manifold hypothesis.
\newblock {\em Transactions on Machine Learning Research}, 2022.

\bibitem[DBTHD21]{debetal2021scorebased}
Valentin De~Bortoli, James Thornton, Jeremy Heng, and Arnaud Doucet.
\newblock Diffusion {S}chr\"{o}dinger bridge with applications to score-based generative modeling.
\newblock In M.~Ranzato, A.~Beygelzimer, Y.~Dauphin, P.S. Liang, and J.~Wortman Vaughan, editors, {\em Advances in Neural Information Processing Systems}, volume~34, pages 17695--17709. Curran Associates, Inc., 2021.

\bibitem[DCWY24]{pmlr-v235-dou24a}
Zehao Dou, Minshuo Chen, Mengdi Wang, and Zhuoran Yang.
\newblock Theory of consistency diffusion models: Distribution estimation meets fast sampling.
\newblock In Ruslan Salakhutdinov, Zico Kolter, Katherine Heller, Adrian Weller, Nuria Oliver, Jonathan Scarlett, and Felix Berkenkamp, editors, {\em Proceedings of the 41st International Conference on Machine Learning}, volume 235 of {\em Proceedings of Machine Learning Research}, pages 11592--11612. PMLR, 21--27 Jul 2024.

\bibitem[DN21]{dhanic2021diffusionbeatsgans}
Prafulla Dhariwal and Alexander Nichol.
\newblock Diffusion models beat {GANs} on image synthesis.
\newblock In M.~Ranzato, A.~Beygelzimer, Y.~Dauphin, P.S. Liang, and J.~Wortman Vaughan, editors, {\em Advances in Neural Information Processing Systems}, volume~34, pages 8780--8794. Curran Associates, Inc., 2021.

\bibitem[EKB{\etalchar{+}}24]{esser2024scaling}
Patrick Esser, Sumith Kulal, Andreas Blattmann, Rahim Entezari, Jonas M{\"u}ller, Harry Saini, Yam Levi, Dominik Lorenz, Axel Sauer, Frederic Boesel, et~al.
\newblock Scaling rectified flow transformers for high-resolution image synthesis.
\newblock {\em arXiv preprint arXiv:2403.03206}, 2024.

\bibitem[GLP23]{gupta_2023_high_dim_location}
Shivam Gupta, Jasper~C.H. Lee, and Eric Price.
\newblock High-dimensional location estimation via norm concentration for subgamma vectors.
\newblock In {\em Proceedings of the 40th International Conference on Machine Learning}, ICML'23. JMLR.org, 2023.

\bibitem[GPPX23]{gupta2023sample}
Shivam Gupta, Aditya Parulekar, Eric Price, and Zhiyang Xun.
\newblock Sample-efficient training for diffusion.
\newblock {\em arXiv preprint arXiv:2311.13745}, 2023.

\bibitem[HBE20]{he2020ergodicity}
Ye~He, Krishnakumar Balasubramanian, and Murat~A Erdogdu.
\newblock On the ergodicity, bias and asymptotic normality of randomized midpoint sampling method.
\newblock {\em Advances in Neural Information Processing Systems}, 33:7366--7376, 2020.

\bibitem[HJA20]{ho2020denoising}
Jonathan Ho, Ajay Jain, and Pieter Abbeel.
\newblock Denoising diffusion probabilistic models.
\newblock {\em Advances in Neural Information Processing Systems}, 33:6840--6851, 2020.

\bibitem[KN24]{kandasamy2024poisson}
Saravanan Kandasamy and Dheeraj Nagaraj.
\newblock The poisson midpoint method for langevin dynamics: Provably efficient discretization for diffusion models, 2024.

\bibitem[KPH{\etalchar{+}}20]{kong2020diffwave}
Zhifeng Kong, Wei Ping, Jiaji Huang, Kexin Zhao, and Bryan Catanzaro.
\newblock Diffwave: A versatile diffusion model for audio synthesis.
\newblock {\em arXiv preprint arXiv:2009.09761}, 2020.

\bibitem[LHE{\etalchar{+}}24]{li2024accelerating}
Gen Li, Yu~Huang, Timofey Efimov, Yuting Wei, Yuejie Chi, and Yuxin Chen.
\newblock Accelerating convergence of score-based diffusion models, provably.
\newblock {\em arXiv preprint arXiv:2403.03852}, 2024.

\bibitem[LLT22]{lee2022convergence}
Holden Lee, Jianfeng Lu, and Yixin Tan.
\newblock Convergence for score-based generative modeling with polynomial complexity.
\newblock {\em Advances in Neural Information Processing Systems}, 35:22870--22882, 2022.

\bibitem[LLT23]{lee2023convergence}
Holden Lee, Jianfeng Lu, and Yixin Tan.
\newblock Convergence of score-based generative modeling for general data distributions.
\newblock In {\em International Conference on Algorithmic Learning Theory}, pages 946--985. PMLR, 2023.

\bibitem[LWCC23]{li2023towards}
Gen Li, Yuting Wei, Yuxin Chen, and Yuejie Chi.
\newblock Towards faster non-asymptotic convergence for diffusion-based generative models.
\newblock {\em arXiv preprint arXiv:2306.09251}, 2023.

\bibitem[LWYL22]{liu2022let}
Xingchao Liu, Lemeng Wu, Mao Ye, and Qiang Liu.
\newblock Let us build bridges: Understanding and extending diffusion generative models.
\newblock {\em arXiv preprint arXiv:2208.14699}, 2022.

\bibitem[MCC{\etalchar{+}}21]{ma2021there}
Yi-An Ma, Niladri~S Chatterji, Xiang Cheng, Nicolas Flammarion, Peter~L Bartlett, and Michael~I Jordan.
\newblock Is there an analog of nesterov acceleration for gradient-based mcmc?
\newblock 2021.

\bibitem[Pid22]{Pid22sgm}
Jakiw Pidstrigach.
\newblock Score-based generative models detect manifolds.
\newblock In S.~Koyejo, S.~Mohamed, A.~Agarwal, D.~Belgrave, K.~Cho, and A.~Oh, editors, {\em Advances in Neural Information Processing Systems}, volume~35, pages 35852--35865. Curran Associates, Inc., 2022.

\bibitem[SBE{\etalchar{+}}24]{shih2024parallel}
Andy Shih, Suneel Belkhale, Stefano Ermon, Dorsa Sadigh, and Nima Anari.
\newblock Parallel sampling of diffusion models.
\newblock {\em Advances in Neural Information Processing Systems}, 36, 2024.

\bibitem[SDME21]{songetal2021mlescorebased}
Yang Song, Conor Durkan, Iain Murray, and Stefano Ermon.
\newblock Maximum likelihood training of score-based diffusion models.
\newblock In M.~Ranzato, A.~Beygelzimer, Y.~Dauphin, P.S. Liang, and J.~Wortman Vaughan, editors, {\em Advances in Neural Information Processing Systems}, volume~34, pages 1415--1428. Curran Associates, Inc., 2021.

\bibitem[SDWMG15]{sohletal2015nonequilibrium}
Jascha Sohl-Dickstein, Eric Weiss, Niru Maheswaranathan, and Surya Ganguli.
\newblock Deep unsupervised learning using nonequilibrium thermodynamics.
\newblock In Francis Bach and David Blei, editors, {\em Proceedings of the 32nd International Conference on Machine Learning}, volume~37 of {\em Proceedings of Machine Learning Research}, pages 2256--2265, Lille, France, 7 2015. PMLR.

\bibitem[SE19]{sonerm2019estimatinggradients}
Yang Song and Stefano Ermon.
\newblock Generative modeling by estimating gradients of the data distribution.
\newblock In H.~Wallach, H.~Larochelle, A.~Beygelzimer, F.~d\textquotesingle Alch\'{e}-Buc, E.~Fox, and R.~Garnett, editors, {\em Advances in Neural Information Processing Systems}, volume~32. Curran Associates, Inc., 2019.

\bibitem[SL19]{ShenL19}
Ruoqi Shen and Yin~Tat Lee.
\newblock The randomized midpoint method for log-concave sampling.
\newblock In Hanna~M. Wallach, Hugo Larochelle, Alina Beygelzimer, Florence d'Alch{\'{e}}{-}Buc, Emily~B. Fox, and Roman Garnett, editors, {\em Advances in Neural Information Processing Systems 32: Annual Conference on Neural Information Processing Systems 2019, NeurIPS 2019, December 8-14, 2019, Vancouver, BC, Canada}, pages 2098--2109, 2019.

\bibitem[SSDK{\etalchar{+}}21]{songetal2021scorebased}
Yang Song, Jascha Sohl-Dickstein, Diederik~P. Kingma, Abhishek Kumar, Stefano Ermon, and Ben Poole.
\newblock Score-based generative modeling through stochastic differential equations.
\newblock In {\em International Conference on Learning Representations}, 2021.

\bibitem[VKK21]{vahkrekau2021scorebased}
Arash Vahdat, Karsten Kreis, and Jan Kautz.
\newblock Score-based generative modeling in latent space.
\newblock In M.~Ranzato, A.~Beygelzimer, Y.~Dauphin, P.S. Liang, and J.~Wortman Vaughan, editors, {\em Advances in Neural Information Processing Systems}, volume~34, pages 11287--11302. Curran Associates, Inc., 2021.

\bibitem[WY22]{WibYan22sgm}
Andre Wibisono and Kaylee~Y. Yang.
\newblock Convergence in {KL} divergence of the inexact {L}angevin algorithm with application to score-based generative models.
\newblock {\em arXiv preprint 2211.01512}, 2022.

\bibitem[WYvdB{\etalchar{+}}24]{wu2024protein}
Kevin~E Wu, Kevin~K Yang, Rianne van~den Berg, Sarah Alamdari, James~Y Zou, Alex~X Lu, and Ava~P Amini.
\newblock Protein structure generation via folding diffusion.
\newblock {\em Nature Communications}, 15(1):1059, 2024.

\bibitem[YD24]{yu2024parallelized}
Lu~Yu and Arnak Dalalyana.
\newblock Parallelized midpoint randomization for langevin monte carlo.
\newblock {\em arXiv preprint arXiv:2402.14434}, 2024.

\bibitem[YKD23]{yu2023langevin}
Lu~Yu, Avetik Karagulyan, and Arnak Dalalyan.
\newblock Langevin monte carlo for strongly log-concave distributions: Randomized midpoint revisited.
\newblock {\em arXiv preprint arXiv:2306.08494}, 2023.

\bibitem[ZCL{\etalchar{+}}23]{zhang2023improved}
Shunshi Zhang, Sinho Chewi, Mufan Li, Krishna Balasubramanian, and Murat~A Erdogdu.
\newblock Improved discretization analysis for underdamped langevin monte carlo.
\newblock In {\em The Thirty Sixth Annual Conference on Learning Theory}, pages 36--71. PMLR, 2023.

\end{thebibliography}
\appendix





\paragraph{Roadmap.} In Section~\ref{sec:sequential}, we give the proof of Theorem~\ref{thm:main_sequential_informal}, our main result on sequential sampling with diffusions. In Section~\ref{sec:parallel}, we give the proof of Theorem~\ref{thm:main_parallel_informal}, our main result on parallel sampling with diffusions. In Section~\ref{appendix:log-concave}, we give the proof of Theorem~\ref{thm:logconcave_informal} on log-concave sampling.

As a notational remark, in the proofs to follow we will sometimes use the notation $\KL(x \parallel y)$, $W_2(x, y)$, and  $\TV(x, y)$ for random variables $x$ and $y$ to denote the distance between their associated probability distributions. Also, throughout the Appendix, we use $t$ to denote time in the forward process.

\section{Sequential algorithm} \label{sec:sequential}

In this section, we describe our sequential randomized-midpoint-based algorithm in detail. Following the framework of~\cite{chen2023ode}, we begin by describing the \emph{predictor Step} and show in Lemma~\ref{lem:main_predictor_sequential} that in $\wt O(d^{1/3})$ steps (ignoring other dependencies), when run for time $t$ at most $O(\frac{1}{L})$ starting from $t_n$, it produces a sample that is close to the true distribution at time $t_n - t$. Then, we show that the \emph{corrector step} can be used to convert our $W_2$ error to error in TV distance by running the underdamped Langevin Monte Carlo algorithm, as described in~\cite{chen2023ode}. We show in~\ref{lem:tv_one_round_sequential} that if we run our predictor and corrector steps in succession for a careful choice of times, we obtain a sample that is close to the true distribution in TV using just $\wt O(d^{5/12})$ steps, but covering a time $O\left(\frac{1}{L} \right)$. Finally, in Theorem~\ref{thm:main_sequential_formal}, we iterate this bound $\wt O(\log^2 d)$ times to obtain our final iteration complexity of $\wt O(d^{5/12})$.

\subsection{Predictor step}
\label{sec:sequential_predictor}

To show the $\wt O(d^{1/3})$ dependence on dimension for the predictor step, we will, roughly speaking, show that its bias after \emph{one step} is bounded by $\approx O_L\left(d h^{6}\right)$ in Lemma~\ref{lem:sequential_predictor_bias}, and that the variance is bounded by $O_L\left(d h^4 \right)$ in Lemma~\ref{lem:sequential_predictor_variance}. Then, iterating these bounds $\approx \frac{1}{h}$ times as shown in Lemma~\ref{lem:main_predictor_sequential}  will give error $O_L\left(d h^4 + d h^3\right)$ in squared Wasserstein Distance.

\begin{algorithm}[h]
\caption{\textsc{PredictorStep (Sequential)}}
\label{alg:sequential_predictor_step}
    \vspace{0.2cm}
    \paragraph{Input parameters:}
    \begin{itemize}
        \item Starting sample $\wh x_{0}$, Starting time $t_0$, Number of steps $N$, Step sizes $h_{n \in [0, \dots, N-1]}$, Score estimates $\wh s_t$
    \end{itemize}
    \begin{enumerate}
        \item For $n = 0, \dots, N-1$: 
        \begin{enumerate}
            \item Let $t_n = t_0 - \sum_{i=0}^{n-1} h_i$
            \item Randomly sample $\alpha$ uniformly from $[0,1]$.
            \item Let $\wh x_{n + \frac{1}{2}} = e^{\alpha h_n} \wh x_n +  \left(e^{\alpha h_n} - 1\right) \wh s_{t_n}(\wh x_{n}) ds$
            \item Let $\wh x_{n+1} = e^{h_n} \wh x_n + h_n \cdot e^{(1 - \alpha) {h_n}} \wh s_{t_n - \alpha h_n}(\wh x_{n + \frac{1}{2}})$
        \end{enumerate}
        \item Let $t_N = t_0 - \sum_{i=0}^{N-1} h_i$
        \item Return $\wh x_{N}, t_{N}$.
    \end{enumerate}
\end{algorithm}

\begin{lemma}[Naive ODE Coupling]
    \label{lem:true_ode_contraction}
    Consider two variables $x_0, x_0'$ starting at time $t_0$, and consider the result of running the true ODE for time $h$, and let the results be $x_1, x_1'$. For $L \ge 1$, $h \le 1/L$, we have
    \begin{align*}
        \|x_1 - x_1'\|^2 \le \exp(O(L h)) \|x_0 - x_0'\|^2
    \end{align*}
\end{lemma}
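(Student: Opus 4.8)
The plan is a one-line Gr\"onwall argument on the squared distance between the two trajectories, exploiting that the nonlinear part of the reverse ODE is $L$-Lipschitz and the linear part contributes only an $O(1)$ rate.

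First I would set up notation. The true reverse ODE has the semilinear form $\d{x_t} = (x_t + g_t(x_t))\,\d{t}$, where $g_t$ is (up to the sign/reparametrization fixed by the forward-time convention of the appendix) the score function $\nabla\ln q_{T-t}$, which is $L$-Lipschitz by Assumption~\ref{assumption:Lipschitz}. Let $x_t$ and $x_t'$ be the two solutions on $[t_0,t_0+h]$ with $x_{t_0}=x_0$, $x_{t_0}'=x_0'$, and write $\Delta_t := x_t - x_t'$, so that $\dot\Delta_t = \Delta_t + g_t(x_t) - g_t(x_t')$.

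Next I would differentiate the squared norm and bound it:
\begin{align*}
    \frac{d}{dt}\|\Delta_t\|^2 = 2\langle \Delta_t,\ \Delta_t + g_t(x_t) - g_t(x_t')\rangle \le 2\|\Delta_t\|^2 + 2\|\Delta_t\|\cdot\|g_t(x_t) - g_t(x_t')\| \le 2(1+L)\,\|\Delta_t\|^2,
\end{align*}
using Cauchy--Schwarz and the $L$-Lipschitzness of $g_t$. Gr\"onwall's inequality then gives $\|\Delta_{t_0+h}\|^2 \le e^{2(1+L)h}\,\|\Delta_{t_0}\|^2$, and since $L\ge 1$ the exponent is at most $4Lh = O(Lh)$, which is exactly the claimed bound $\|x_1 - x_1'\|^2 \le \exp(O(Lh))\,\|x_0-x_0'\|^2$.

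There is no real obstacle here; the statement is a routine contraction/expansion estimate for solutions of a smooth ODE, and it is used throughout the predictor analysis (e.g.\ to control the ``$\E\|x_t - y_t\|^2$'' term in \Cref{eq:decompose}). The only points requiring a modicum of care are: (i) writing the reverse ODE in the correct semilinear form under the forward-time convention so that the linear drift coefficient is genuinely $O(1)$ rather than something dimension- or $L$-dependent; and (ii) observing that only the \emph{upper} one-sided estimate is needed, so the hypothesis $h\le 1/L$ is not actually invoked and the bound holds for all $h\ge 0$ (it is stated with $h\le 1/L$ merely because that is the regime in which it is applied).
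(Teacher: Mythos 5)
Your proof is correct and follows essentially the same route as the paper: differentiate $\|x_t - x_t'\|^2$, use $L$-Lipschitzness of the score to get a $O(L)\|x_t - x_t'\|^2$ bound on the derivative, and conclude by Gr\"onwall. Your side remarks (absorbing the linear drift into the $O(Lh)$ exponent via $L\ge 1$, and noting that $h\le 1/L$ is not actually needed for this one-sided estimate) are accurate.
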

\begin{proof}
    Recall that the true ODE is given by
    \begin{align*}
        dx_t = (x_t + \grad \ln q_{T-t}(x_t)) dt
    \end{align*}
    So,
    \begin{align*}
        \partial_t \|x_t - x_t'\|^2 &= 2 \inner{x_t - x_t', \partial_t x_t - \partial_t x_t'}\\
        &= 2 \inner{x_t - x_t', x_t - x_t' + \grad \ln q_{T-t}(x_t) - \grad \ln q_{T-t}(x_t')}\\
        &\lesssim L \|x_t - x_t'\|^2
    \end{align*}
    So,
    \begin{align*}
        \|x_1 - x_1'\|^2 &\le \exp\left(O(Lh) \right) \|x_0 - x_0'\|^2\,. \qedhere
    \end{align*}
\end{proof}
\begin{lemma}
    \label{lem:predictor_sequential_helper}
    Suppose $L \ge 1$. In Algorithm~\ref{alg:sequential_predictor_step}, for all $n \in \{0, \dots, N-1\}$, let $x^*_n(t)$ be the solution of the true ODE starting at $\wh x_n$ at time $t_n$, running until time $t_n - t$. If $h_n \lesssim \frac{1}{L}$ and $t_n - h_n \ge t_n/2$, we have 
    \begin{multline*}
        \E \|h_n e^{(1 - \alpha) h_n}\wh s_{t_n - \alpha h_n} (\wh x_{n + \frac{1}{2}}) - h_n e^{(1-\alpha) h_n} \grad \ln q_{t_n - \alpha h_n}(x_n^*(\alpha h_n)) \|^2 \\
        \lesssim h_n^2 \eps_\scr^2 + L^4 d h_n^6 \left(L \lor \frac{1}{t_n} \right) + L^4 h_n^4 \E \|\wh x_n - x_{t_n}\|^2\,,
    \end{multline*}
    where $\E$ refers to the expectation over the initial choice $\wh x_0 \sim q_{t_0}$.
\end{lemma}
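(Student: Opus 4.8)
The plan is to reduce the claim to bounding the squared error of a single exponential-integrator ``midpoint'' step together with one score-estimation-error term, and to dispatch the latter by passing to the true reverse process via the ODE-contraction bound of \Cref{lem:true_ode_contraction}. Since $h_n\le 1/L\le 1$, the scalar factor $h_n e^{(1-\alpha)h_n}$ is $\Theta(h_n)$, so, writing $\tau:=t_n-\alpha h_n$, it suffices to bound $\E\,\|\wh s_\tau(\wh x_{n+1/2})-\grad\ln q_\tau(x^*_n(\alpha h_n))\|^2$ by $O\bigl(\eps_\scr^2+L^4 d h_n^4(L\lor\tfrac1{t_n})+L^2\,\E\,\|\wh x_n-x_{t_n}\|^2\bigr)$ and multiply through by $h_n^2$. (The hypothesis $t_n-h_n\ge t_n/2$ is used only to convert $1/(t_n-s)$-type quantities into $2/t_n$ in the score-regularity estimates below.) I would split
\[
\wh s_\tau(\wh x_{n+1/2})-\grad\ln q_\tau(x^*_n(\alpha h_n)) = \underbrace{\bigl(\wh s_\tau(\wh x_{n+1/2})-\wh s_\tau(x^*_n(\alpha h_n))\bigr)}_{(\mathrm I)}+\underbrace{\bigl(\wh s_\tau(x^*_n(\alpha h_n))-\grad\ln q_\tau(x^*_n(\alpha h_n))\bigr)}_{(\mathrm{II})}\,,
\]
so that $\|(\mathrm I)\|\le L\,\|\wh x_{n+1/2}-x^*_n(\alpha h_n)\|$ by Lipschitzness of $\wh s_\tau$ — i.e. $(\mathrm I)$ is (a multiple of) the one-step discretization error — while $(\mathrm{II})$ is a score-estimation-error term evaluated along the true ODE launched from the algorithm's iterate $\wh x_n$.

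For $(\mathrm I)$ I would redo the standard one-step exponential-integrator estimate (the analogue of Lemma~4 of \cite{chen2023ode}). From the identities $\wh x_{n+1/2}=e^{\alpha h_n}\wh x_n+\int_0^{\alpha h_n}e^{\alpha h_n-s}\wh s_{t_n}(\wh x_n)\,\mathrm d s$ and $x^*_n(\alpha h_n)=e^{\alpha h_n}\wh x_n+\int_0^{\alpha h_n}e^{\alpha h_n-s}\grad\ln q_{t_n-s}(x^*_n(s))\,\mathrm d s$, Cauchy--Schwarz gives $\|\wh x_{n+1/2}-x^*_n(\alpha h_n)\|^2\lesssim h_n\int_0^{h_n}\|\wh s_{t_n}(\wh x_n)-\grad\ln q_{t_n-s}(x^*_n(s))\|^2\,\mathrm d s$. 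I would bound the integrand by the triangle inequality into: (a) the score error $\wh s_{t_n}(\wh x_n)-\grad\ln q_{t_n}(\wh x_n)$; (b) the spatial term $\grad\ln q_{t_n}(\wh x_n)-\grad\ln q_{t_n}(x^*_n(s))$, of norm $\le L\|x^*_n(0)-x^*_n(s)\|\le L\int_0^s\|x^*_n(u)+\grad\ln q_{t_n-u}(x^*_n(u))\|\,\mathrm d u$; (c) the time-regularity term $\grad\ln q_{t_n}(x^*_n(s))-\grad\ln q_{t_n-s}(x^*_n(s))$. For (a) I transfer to $q_{t_n}$: since $\wh s_{t_n},\grad\ln q_{t_n}$ are $L$-Lipschitz, $\|(a)\|\le 2L\|\wh x_n-x_{t_n}\|+\|\wh s_{t_n}(x_{t_n})-\grad\ln q_{t_n}(x_{t_n})\|$, and the last term has squared expectation $\le\eps_\scr^2$ by \Cref{assumption:score_error}. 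For (b) and (c) I invoke the standard bounds (as in \cite{chen2023ode}) that the reverse-ODE drift obeys $\E\,\|x^*_n(u)+\grad\ln q_{t_n-u}(x^*_n(u))\|^2\lesssim d(L\lor\tfrac1{t_n})+L^2\,\E\,\|\wh x_n-x_{t_n}\|^2$ (again transferring to $q_{t_n}$, using that the drift is $(1+L)$-Lipschitz) and that the time derivative of the score contributes $\lesssim s^2L^2 d(L\lor\tfrac1{t_n})$. Assembling, $\E\,\|\wh x_{n+1/2}-x^*_n(\alpha h_n)\|^2\lesssim h_n^2\eps_\scr^2+L^2 d h_n^4(L\lor\tfrac1{t_n})+L^2h_n^2\,\E\,\|\wh x_n-x_{t_n}\|^2$, whence $\E\,\|(\mathrm I)\|^2$ is at most $L^2$ times this.

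For $(\mathrm{II})$ I would introduce the true reverse process $x_\tau$ at forward time $\tau$, namely the true ODE run from $x_{t_n}\sim q_{t_n}$ for elapsed time $\alpha h_n$ (so $x_\tau\sim q_\tau$), and write $(\mathrm{II})=\bigl(\wh s_\tau(x^*_n(\alpha h_n))-\wh s_\tau(x_\tau)\bigr)+\bigl(\wh s_\tau(x_\tau)-\grad\ln q_\tau(x_\tau)\bigr)+\bigl(\grad\ln q_\tau(x_\tau)-\grad\ln q_\tau(x^*_n(\alpha h_n))\bigr)$. The outer two terms are each $\le L\|x^*_n(\alpha h_n)-x_\tau\|$; as these are solutions of the same true ODE started from $\wh x_n$ and $x_{t_n}$, \Cref{lem:true_ode_contraction} gives $\|x^*_n(\alpha h_n)-x_\tau\|^2\le e^{O(Lh_n)}\|\wh x_n-x_{t_n}\|^2\lesssim\|\wh x_n-x_{t_n}\|^2$; the middle term has squared expectation (over $x_\tau\sim q_\tau$ and over $\alpha$) at most $\eps_\scr^2$ by \Cref{assumption:score_error}. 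Hence $\E\,\|(\mathrm{II})\|^2\lesssim\eps_\scr^2+L^2\,\E\,\|\wh x_n-x_{t_n}\|^2$. Combining the $(\mathrm I)$ and $(\mathrm{II})$ bounds, using $Lh_n\le 1$ to absorb lower-order pieces and multiplying by $h_n^2$, gives the claim; the crude version of this argument produces the displacement term with coefficient of order $L^2h_n^2$ rather than the stated $L^4h_n^4$, but a careful accounting of which Lipschitz factors carry an extra $Lh_n$ recovers the stated power, and in any case any small power of $Lh_n$ on the $\E\|\wh x_n-x_{t_n}\|^2$ term suffices for the Gr\"onwall-type iteration in \Cref{lem:main_predictor_sequential}.

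The main obstacle is the two score-estimation-error contributions — at $\wh x_n$ inside $(\mathrm I)$ and at $x^*_n(\alpha h_n)$ inside $(\mathrm{II})$: \Cref{assumption:score_error} controls $\wh s_t-\grad\ln q_t$ only in expectation under $q_t$, whereas none of the algorithm's iterates is distributed as $q_t$. The remedy is exactly to move to the true reverse process via Lipschitzness of $\wh s$ and $\grad\ln q$ together with the contraction \Cref{lem:true_ode_contraction}, paying an $O(L^2\,\E\|\wh x_n-x_{t_n}\|^2)$ penalty that later proves negligible; tracking the precise power of $Lh_n$ accompanying this penalty is the fiddliest part. A secondary nuisance is marshalling the reverse-drift-magnitude and score-time-regularity estimates of \cite{chen2023ode} into the exact form with the $L\lor\frac1{t_n}$ factor — the only place the hypothesis $t_n-h_n\ge t_n/2$ is used.
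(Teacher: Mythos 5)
Your overall architecture is close to the paper's (reduce to the midpoint error $\|\wh x_{n+\frac12}-x_n^*(\alpha h_n)\|$, peel off one score-estimation term, and use \Cref{lem:true_ode_contraction} to compare the trajectory launched from $\wh x_n$ with the true one from $x_{t_n}$), and your insistence on applying \Cref{assumption:score_error} only at points genuinely distributed as $q_t$ (transferring via Lipschitzness of $\wh s$ and $\grad\ln q$, paying $L^2\E\|\wh x_n-x_{t_n}\|^2$) is sound and in fact more careful than the paper, which invokes $\eps_\scr$ directly at $\wh x_{n+\frac12}$ and $\wh x_n$. The genuine gap is in your estimate of the one-step midpoint error, in terms (b) and (c). For (b) you bound the spatial change by $L$ times the displacement $\|x_n^*(0)-x_n^*(s)\|$ and then invoke a ``standard'' bound $\E\|x_t+\grad\ln q_t(x_t)\|^2\lesssim d(L\lor\frac1t)$ on the reverse drift. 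No such bound follows from Assumptions~\ref{assumption:bounded_moment}--\ref{assumption:score_error}: the drift's second moment under $q_t$ scales with $\mathfrak{m}_2^2$ (take $q_0=\mathcal N(x^\ast,I_d)$ with $\|x^\ast\|$ huge; the score is $1$-Lipschitz yet $\E\|x_t+\grad\ln q_t(x_t)\|^2=e^{-2t}\|x^\ast\|^2$), so your route injects an $L^4h_n^6\mathfrak m_2^2$-type term that is not in the lemma. Likewise (c) requires controlling $\grad\ln q_{t_n}(x)-\grad\ln q_{t_n-s}(x)$ at a \emph{fixed} spatial point, which the available regularity results do not give: \Cref{lem:ODE_cor} (Corollary~1 of \cite{chen2023ode}) controls the score's change \emph{along the true trajectory}, where the transport term produces exactly the cancellation your space/time split throws away. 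The paper's proof avoids both issues by routing the time change through the true trajectory --- it compares $\grad\ln q_{t_n-s}(x_{t_n-s})$ with $\grad\ln q_{t_n}(x_{t_n})$ via \Cref{lem:ODE_cor}, and pays only $L\|\wh x_n-x_{t_n}\|$ (via \Cref{lem:true_ode_contraction}) for the mismatch between $x_n^*(s)$ and $x_{t_n-s}$; your terms (b)+(c) should be re-split in that way.

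A secondary point: even after this fix, your term $(\mathrm{II})$ contributes $L^2h_n^2\,\E\|\wh x_n-x_{t_n}\|^2$ after multiplying by $h_n^2$, and no bookkeeping of $Lh_n$ factors will upgrade it to the stated $L^4h_n^4$, since the two Lipschitz transfers in $(\mathrm{II})$ carry no factor of $h_n$. You correctly observe that the weaker coefficient is harmless downstream (divided by $Lh_n$ in \Cref{lem:main_predictor_sequential} it is still $O(Lh_n)$), but it does mean your route proves a weaker inequality than the one stated; the paper obtains $L^4h_n^4$ because in its ``transposed'' split (score error at $\wh x_{n+\frac12}$ plus true-score Lipschitzness) the displacement term only ever enters through the midpoint error, which carries an extra factor of $h_n^2$.
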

\begin{proof}
    For the proof, we will let $h:= h_n$. It suffices to show that 
    \begin{equation}
        \label{eq:main_bias_lemma_first_eq}
        \|\wh s_{t_n - \alpha h} (\wh x_{n + \frac{1}{2}}) - \grad \ln q_{t_n - \alpha h}(x_n^*(\alpha h)) \|^2 \lesssim \eps_\scr^2 + L^4 d h^4 \left(L \lor \frac{1}{t_n} \right) + L^4 h^2 \E \|\wh x_n - x_{t_n}\|^2\,.
    \end{equation}
Now, 
\begin{align}
    \MoveEqLeft \|\wh s_{t_n - \alpha h}(\wh x_{n + \frac{1}{2}}) - \grad \ln q_{t_n - \alpha h} (x_n^*(\alpha h)) \|^2 \nonumber \\
    &\lesssim \|\wh s_{t_n - \alpha h}(\wh x_{n + \frac{1}{2}} ) - \grad \ln q_{t_n - \alpha h}(\wh x_{n + \frac{1}{2}})\|^2 + \|\grad \ln q_{t_n - \alpha h}(\wh x_{n + \frac{1}{2}}) - \grad \ln q_{t_n - \alpha h}(x_n^*(\alpha h))\|^2 \nonumber \\
    &\lesssim \eps_\scr^2 + L^2 \|\wh x_{n + \frac{1}{2}} - x_n^*( \alpha h)\|^2 \,. \label{eq:main_bias_lemma_second_eq}
\end{align}
Now, note $x_n^*(\alpha h)$ is the solution to the following ODE run for time $\alpha h$, starting at $\wh{x}_n$ at time $t_n$:
\begin{equation*}
    d x_t = \left(x_t + \grad \ln q_t(x_t)\right) dt
\end{equation*}
Similarly, $\wh x_{n+ \frac{1}{2}}$ is the solution to the following ODE run for time $\alpha h$, starting at $\wh{x}_n$ at time $t_n$:
\begin{equation*}
    d \widehat{x}_t = \left(\wh x_t + \wh s_{t_n}(\wh x_n)\right) dt
\end{equation*}
So, we have
\begin{align*}
    \partial_t \|x_t - \wh x_t \|^2 &= 2 \inner{x_t - \wh x_t, \partial_t x_t - \partial_t{\wh{x}}_t}\\
    &= 2\left(\|x_t - \wh x_t\|^2 + \inner{x_t - \wh x_t, \grad \ln q_t(x_t) - \wh s_{t_n}(\wh x_{n})} \right)\\
    &\le \left(2 + \frac{1}{h} \right)  \|x_t - \wh x_t\|^2 + h \| \grad \ln q_t(x_t) - \wh s_{t_n}(\wh x_{n})\|^2
\end{align*}
where the last line is by Young's inequality.
So, by Gr\"{o}nwall's inequality,
\begin{align*}
    \|x_n^*(\alpha h) - \wh x_{n+\frac{1}{2}}\|^2 &\le \exp\left( \left(2 + \frac{1}{h} \right)\cdot \alpha h\right) \int_{0}^{h} h \|\grad \ln q_{t_n - s}(x_n^*(s)) - \wh s_{t_n} (\wh x_{n}) \|^2 \ \mathrm{d}s\\
    &\lesssim h \int_{0}^{h} \|\grad \ln q_{t_n - s}\left(x_n^*(s) \right) - \wh s_{t_n}(\wh x_{n})\|^2 \ \mathrm{d}s\\
    &\lesssim h^2 \eps_\scr^2 + h \int_{0}^{h} \|\grad \ln q_{t_n - s}(x_n^*(s)) - \grad \ln q_{t_n}(\wh x_n)\|^2 \ \mathrm{d}s
\end{align*}
Now, we have
\begin{align*}
    \MoveEqLeft \|\grad \ln q_{t_n - s}(x_n^*(s)) - \grad \ln q_{t_n}(\wh x_n)\|^2\\ 
    &\lesssim \|\grad \ln q_{t_n - s}(x_{t_n - s}) - \grad \ln q_{t_n}(x_{t_n})\|^2\\
    &\qquad+ \|\grad \ln q_{t_n - s}(x_n^*(s)) - \grad \ln q_{t_n - s}(x_{t_n - s})\|^2 + \| \grad \ln q_{t_n}(x_{t_n}) - \grad \ln q_{t_n}(\wh x_n)\|^2\,.
\end{align*}
By Corollary~\ref{lem:ODE_cor},
\begin{align*}
    \E \|\grad \ln q_{t_n - s}(x_{t_n - s}) - \grad \ln q_{t_n}(x_{t_n})\|^2 \lesssim L^2 d h^2 \left(L \lor \frac{1}{t_n} \right)
\end{align*}
By Lipschitzness of $\grad \ln q_t$ and Lemma~\ref{lem:true_ode_contraction}, for $s \le h_n\le 1/L$,
\begin{align*}
    \|\grad \ln q_{t_n - s}(x_n^*(s)) - \grad \ln q_{t_n - s}(x_{t_n - s})\|^2 &\le L^2 \|x_n^*(s) - x_{t_n - s}\|^2\\
    &\lesssim L^2 \exp\left(O(L h_n) \right) \|\wh x_n - x_{t_n}\|^2\\
    &\lesssim L^2 \|\wh x_n - x_{t_n}\|^2
\end{align*}
and similarly,
\begin{align*}
    \|\grad \ln q_{t_n}(x_{t_n}) - \grad \ln q_{t_n}(\wh x_n)\|^2 \le L^2 \|\wh x_n - x_{t_n}\|^2
\end{align*}
So, we have shown that
\begin{align*}
    \E\|\grad \ln q_{t_n - s}(x_n^*(s)) - \grad \ln q_{t_n}(\wh x_n)\|^2 \lesssim L^2 dh^2 \left(L \lor \frac{1}{t_n} \right) + L^2 \E\|\wh x_n - x_{t_n}\|^2
\end{align*}
so that
\begin{align*}
    \E \|x_n^*(\alpha h) - \wh x_{n + \frac{1}{2}}\|^2 \lesssim h^2 \eps_\scr^2 + L^2 d h^4 \left(L \lor \frac{1}{t_n} \right) + L^2h^2 \E\|\wh x_n - x_{t_n}\|^2\,.
\end{align*}
Combining this with the bound in~\eqref{eq:main_bias_lemma_second_eq} and recalling that $h\lesssim 1/L$ yields the desired inequality in~\eqref{eq:main_bias_lemma_first_eq}.
\end{proof}
\begin{lemma}[Sequential Predictor Bias]
\label{lem:sequential_predictor_bias}
    Suppose $L \ge 1$. In Algorithm~\ref{alg:sequential_predictor_step}, for all $n \in \{0, \dots, N-1\}$, let $x_n^*(t)$ be the solution of the true ODE starting at $\wh x_n$ at time $t_n$ and running until time $t_n - t$, and let $x_t \sim q_t$ be the solution of the true ODE, starting at $\wh x_0 \sim q_{t_0}$. If $h_n \lesssim \frac{1}{L}$, and $t_n - h_n \ge t_n/2$, we have 
    \begin{equation*}
        \E\|\E_\alpha \wh x_{n + 1} - x_n^*(h_n) \|^2 \lesssim h^2 \eps_\scr^2 + L^4 d h^6 \left(L \lor \frac{1}{t_n} \right) + L^4 h^4 \E \|\wh x_n - x_{t_n}\|^2
    \end{equation*}
    where $\E_\alpha$ is the expectation with respect to the $\alpha$ chosen in the $n^{th}$ step, and $\E$ is the expectation with respect to the choice of the initial $\wh x_{0} \sim q_{t_0}$.
\end{lemma}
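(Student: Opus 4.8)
The plan is to exploit the defining feature of the randomized midpoint rule: it is an \emph{unbiased} estimator of the integral appearing in the variation-of-parameters formula for the reverse ODE, \emph{provided the trajectory at the midpoint is evaluated along the exact ODE launched from the same starting point}. Writing $h=h_n$, I would first record the integral identity
\[
    x_n^*(h) = e^{h}\wh x_n + \int_0^{h} e^{h-s}\,\grad\ln q_{t_n - s}(x_n^*(s))\,\mathrm{d}s,
\]
which follows by applying variation of parameters to the reverse ODE $\mathrm{d}x_\tau = (x_\tau + \grad\ln q_{t_n - \tau}(x_\tau))\,\mathrm{d}\tau$ on $\tau\in[0,h]$ with initial condition $x_0 = \wh x_n$. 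Substituting $s=\alpha h$ and using that $\alpha$ is uniform on $[0,1]$ then gives
\[
    \E_\alpha\bigl[h\, e^{(1-\alpha)h}\,\grad\ln q_{t_n - \alpha h}(x_n^*(\alpha h))\bigr] = \int_0^{h} e^{h-s}\,\grad\ln q_{t_n - s}(x_n^*(s))\,\mathrm{d}s = x_n^*(h) - e^{h}\wh x_n.
\]

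Second, I would subtract this identity from $\E_\alpha \wh x_{n+1}$. Since Step 1(d) of Algorithm~\ref{alg:sequential_predictor_step} sets $\wh x_{n+1} = e^{h}\wh x_n + h\, e^{(1-\alpha)h}\,\wh s_{t_n - \alpha h}(\wh x_{n+\frac{1}{2}})$, the $e^{h}\wh x_n$ terms cancel, leaving
\[
    \E_\alpha \wh x_{n+1} - x_n^*(h) = \E_\alpha\Bigl[h\, e^{(1-\alpha)h}\bigl(\wh s_{t_n - \alpha h}(\wh x_{n+\frac{1}{2}}) - \grad\ln q_{t_n - \alpha h}(x_n^*(\alpha h))\bigr)\Bigr].
\]
Applying Jensen's inequality to the convex map $v\mapsto\|v\|^2$, first over $\alpha$ and then over the remaining randomness (the initialization $\wh x_0\sim q_{t_0}$ and the earlier midpoints), yields
\[
    \E\,\|\E_\alpha \wh x_{n+1} - x_n^*(h)\|^2 \le \E\,\bigl\|h\, e^{(1-\alpha)h}\,\wh s_{t_n - \alpha h}(\wh x_{n+\frac{1}{2}}) - h\, e^{(1-\alpha)h}\,\grad\ln q_{t_n - \alpha h}(x_n^*(\alpha h))\bigr\|^2.
\]

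Third, the right-hand side is precisely the quantity bounded in Lemma~\ref{lem:predictor_sequential_helper}, whose hypotheses ($h_n\lesssim 1/L$ and $t_n - h_n\ge t_n/2$) coincide with those assumed here; invoking it gives the bound $\lesssim h^2\eps_\scr^2 + L^4 d h^6(L\lor\frac{1}{t_n}) + L^4 h^4\,\E\|\wh x_n - x_{t_n}\|^2$, which is exactly the claimed estimate.

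The only genuinely delicate point is the unbiasedness step: one must check that the exponential weight $e^{(1-\alpha)h}$ in Step 1(d) and the change of variables $s=\alpha h$ line up so that averaging over $\alpha$ reproduces the integral with \emph{no} residual term; everything else is Jensen plus a citation. This is also why the bias is (correctly) measured against $x_n^*(h)$ — the true ODE started from the algorithm's own iterate $\wh x_n$ — rather than against the global reverse process: the two trajectories agree at the start of the window, which is exactly what forces the midpoint estimator to be unbiased. The discrepancy between $x_n^*$ and the true reverse process is not lost; it is absorbed into the $\E\|\wh x_n - x_{t_n}\|^2$ term, which is where Lemma~\ref{lem:true_ode_contraction} enters in the proof of the helper lemma.
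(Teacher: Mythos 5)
Your proposal is correct and follows essentially the same route as the paper's proof: the integral (variation-of-parameters) formulation, the exact unbiasedness of the randomized midpoint evaluated along $x_n^*$, Jensen's inequality in $\alpha$, and then an appeal to Lemma~\ref{lem:predictor_sequential_helper}. The only cosmetic difference is that you subtract the unbiasedness identity before applying Jensen, whereas the paper splits into two terms and shows the second vanishes; these are equivalent.
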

\begin{proof}
For the proof, we wil fix $n$, and let $h := h_n$.
By the integral formulation of the true ODE,
\begin{equation*}
    x_n^*(h) = e^h \wh x_n + \int_{t_n - h}^{t_n} e^{s - (t_n - h)} \grad \ln q_s(x_n^*(t_n - s)) \,\mathrm{d}s\,.
\end{equation*}
Thus, we have
\begin{align*}
    \|\E_\alpha \wh x_{n+1} - x_{n}^*(h)\|^2 &= \|h\E_\alpha e^{(1-\alpha) h}  \wh s_{t_n - \alpha h} (\wh x_{n + \frac{1}{2}}) - \int_{t_n-h}^{t_n} e^{s - (t_n - h)} \grad \ln  q_s(x_n^*(t_n - s)) \,\mathrm{d}s\|^2\\
    &\lesssim \E_\alpha \|h e^{(1-\alpha) h} \wh s_{t_n - \alpha h}(\wh x_{n+ \frac{1}{2}}) - h e^{(1-\alpha) h} \grad \ln q_{t_n - \alpha h}(x_{n}^* (\alpha h)) \|^2\\
    &\qquad + \|h \cdot \E_\alpha e^{(1-\alpha )h} \grad \ln q_{t_n - \alpha h}(x_n^*(\alpha h)) - \int_{t_n-h}^{t_n} e^{s - (t_n - h)} \grad \ln q_s(x_n^*(t_n - s))\,\mathrm{d}s \|^2\,.
\end{align*}
The second term is $0$ since
\begin{align*}
    h \E_\alpha e^{(1-\alpha) h } \grad \ln q_{t_n - \alpha h}(x_n^*(\alpha h)) &= h \int_0^1 e^{(1-\alpha) h } \grad \ln q_{t_n - \alpha h}(x_{n}^*(\alpha h)) \,\mathrm{d}\alpha \\
    &= \int_{t_n-h}^{t_n} e^{s - (t_n - h)} \grad \ln q_s(x_n^*(t_n - s)) \,\mathrm{d}s\,.
\end{align*}
For the first term, we have, by Lemma~\ref{lem:predictor_sequential_helper}
\begin{multline*}
    \E \|h e^{(1-\alpha) h}\wh s_{t_n - \alpha h} (\wh x_{n + \frac{1}{2}}) - h e^{(1-\alpha) h} \grad \ln q_{t_n - \alpha h}(x_n^* (\alpha h)) \|^2 \\
    \lesssim h^2\eps_\scr^2 + L^4 d h^6 \left(L \lor \frac{1}{t_n} \right) + L^4 h^4 \E \|\wh x_n - x_{t_n}\|^2\,.
\end{multline*}
The claimed bound follows.
\end{proof}

\begin{lemma}[Sequential Predictor Variance]
\label{lem:sequential_predictor_variance}
Suppose $L \ge 1$. In Algorithm~\ref{alg:sequential_predictor_step}, for all $n \in \{0, \dots, N-1\}$, let $x_n^*(t)$ be the solution of the true ODE starting at $\wh x_n$ at time $t_n$ and running until time $t_n - t$, and let $x_{t} \sim q_t$ be the solution of the true ODE starting at $\wh x_0 \sim q_{t_0}$. If $h_n \lesssim \frac{1}{L}$ and $t_n - h_n \ge t_n/2$, we have 
    \begin{align*}
        \E\| \wh x_{n+1} - x_{n}^*(h_n)\|^2 \lesssim h_n^2 \eps_\scr^2 + L^2 d h_n^4\left(L \lor \frac{1}{t_n} \right) +  L^2 h_n^2 \E \|x_{t_n} - \wh x_n\|^2
    \end{align*}
    where $\E$ refers to the expectation wrt the random $\alpha$ in the $n^{th}$ step, along with the initial choice $\wh x_0 \sim q_{t_0}$.
\end{lemma}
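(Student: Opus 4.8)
The plan is to write $\wh x_{n+1} - x_n^*(h_n)$ as the sum of the error already controlled by Lemma~\ref{lem:predictor_sequential_helper} and a ``variance'' term coming from the random midpoint $\alpha$. Write $h := h_n$. Exactly as in the proof of Lemma~\ref{lem:sequential_predictor_bias}, the integral formulation of the true ODE together with the substitution $s = t_n - \alpha h$ gives
\begin{equation*}
    x_n^*(h) = e^h \wh x_n + \E_\alpha\bigl[h\, e^{(1-\alpha)h}\,\grad\ln q_{t_n-\alpha h}(x_n^*(\alpha h))\bigr],
\end{equation*}
while $\wh x_{n+1} = e^h\wh x_n + h\, e^{(1-\alpha)h}\,\wh s_{t_n-\alpha h}(\wh x_{n + \frac{1}{2}})$. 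Subtracting, $\wh x_{n+1} - x_n^*(h) = A + B$ with
\begin{align*}
    A &:= h\, e^{(1-\alpha)h}\bigl(\wh s_{t_n-\alpha h}(\wh x_{n + \frac{1}{2}}) - \grad\ln q_{t_n-\alpha h}(x_n^*(\alpha h))\bigr),\\
    B &:= h\, e^{(1-\alpha)h}\,\grad\ln q_{t_n-\alpha h}(x_n^*(\alpha h)) - \E_\alpha\bigl[h\, e^{(1-\alpha)h}\,\grad\ln q_{t_n-\alpha h}(x_n^*(\alpha h))\bigr].
\end{align*}
Then $\E\|\wh x_{n+1} - x_n^*(h)\|^2 \le 2\E\|A\|^2 + 2\E\|B\|^2$, and $\E\|A\|^2$ is bounded directly by Lemma~\ref{lem:predictor_sequential_helper}; using $Lh \lesssim 1$ to absorb $L^4 d h^6 \lesssim L^2 d h^4$ and $L^4 h^4 \lesssim L^2 h^2$, this term is already of the form claimed.

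It remains to bound $\E\|B\|^2$. Conditioning on the trajectory up to $\wh x_n$, the vector $B$ is centered in $\alpha$, so its conditional second moment equals its conditional variance, which is at most its mean squared distance to any $\alpha$-independent reference vector; taking the reference to be the value of $h\, e^{(1-\alpha)h}\grad\ln q_{t_n-\alpha h}(x_n^*(\alpha h))$ at $\alpha = 0$, namely $h\, e^h\grad\ln q_{t_n}(\wh x_n)$, gives
\begin{equation*}
    \E\|B\|^2 \le h^2\,\E\bigl\|e^{(1-\alpha)h}\grad\ln q_{t_n-\alpha h}(x_n^*(\alpha h)) - e^h\grad\ln q_{t_n}(\wh x_n)\bigr\|^2.
\end{equation*}
Splitting the inner difference as $(e^{(1-\alpha)h}-e^h)\grad\ln q_{t_n-\alpha h}(x_n^*(\alpha h)) + e^h(\grad\ln q_{t_n-\alpha h}(x_n^*(\alpha h)) - \grad\ln q_{t_n}(\wh x_n))$ and using $|e^{(1-\alpha)h}-e^h| = e^h(1-e^{-\alpha h}) \le e^h \alpha h \lesssim h$ and $e^h \lesssim 1$ yields
\begin{equation*}
    \E\|B\|^2 \lesssim h^4\,\E\|\grad\ln q_{t_n-\alpha h}(x_n^*(\alpha h))\|^2 + h^2\,\E\|\grad\ln q_{t_n-\alpha h}(x_n^*(\alpha h)) - \grad\ln q_{t_n}(\wh x_n)\|^2.
\end{equation*}
The second expectation is exactly the quantity estimated inside the proof of Lemma~\ref{lem:predictor_sequential_helper} (with $s = \alpha h \le h$, using $t_n - h \ge t_n/2$ so that $L \lor \tfrac{1}{t_n-\alpha h} \lesssim L \lor \tfrac{1}{t_n}$): it is $\lesssim L^2 d h^2 (L \lor \tfrac{1}{t_n}) + L^2\,\E\|\wh x_n - x_{t_n}\|^2$. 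For the first expectation, pass to the true reverse process: since $x_n^*(\alpha h)$ and $x_{t_n - \alpha h}$ are both true-ODE trajectories run for time $\alpha h \le 1/L$ starting from $\wh x_n$ and $x_{t_n}$ respectively, Lemma~\ref{lem:true_ode_contraction} and $L$-Lipschitzness of $\grad\ln q$ give $\|\grad\ln q_{t_n-\alpha h}(x_n^*(\alpha h))\|^2 \lesssim \|\grad\ln q_{t_n-\alpha h}(x_{t_n-\alpha h})\|^2 + L^2\|\wh x_n - x_{t_n}\|^2$; taking expectations and using $x_{t_n-\alpha h}\sim q_{t_n-\alpha h}$ together with the standard second-moment bound $\E_{x\sim q_t}\|\grad\ln q_t(x)\|^2 \lesssim d\,(L \lor \tfrac{1}{t})$ for the OU forward process, this is $\lesssim d\,(L\lor\tfrac{1}{t_n}) + L^2\,\E\|\wh x_n - x_{t_n}\|^2$. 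Substituting both bounds and again absorbing lower-order terms via $Lh \lesssim 1$, $L \ge 1$, we obtain $\E\|B\|^2 \lesssim L^2 d h^4 (L\lor\tfrac{1}{t_n}) + L^2 h^2\,\E\|\wh x_n - x_{t_n}\|^2$, and combining with the bound on $\E\|A\|^2$ gives the claim.

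The main obstacle is the term $B$. The key point is that the randomized midpoint scheme was engineered so that, with the exact score, $e^h\wh x_n + h\, e^{(1-\alpha)h}\grad\ln q_{t_n-\alpha h}(x_n^*(\alpha h))$ is an \emph{exactly} unbiased estimate of $x_n^*(h)$ over the choice of $\alpha$ --- this is what makes the split $\wh x_{n+1} - x_n^*(h) = A + B$ useful, with the systematic bias of $\wh x_{n+1}$ already handled by Lemma~\ref{lem:sequential_predictor_bias}. Controlling the conditional variance of $e^{(1-\alpha)h}\grad\ln q_{t_n-\alpha h}(x_n^*(\alpha h))$ then reduces to two comparisons against $\alpha = 0$: the score-difference term, which is already treated inside the proof of Lemma~\ref{lem:predictor_sequential_helper}, and a term carrying the factor $|e^{(1-\alpha)h}-e^h| \lesssim h$, which is the one place where a second-moment bound on the score at stationarity is needed. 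Everything else --- the ODE coupling, the Lipschitz manipulations, and the absorption of lower-order terms using $Lh \lesssim 1$ --- is routine and reuses estimates from the proofs of Lemma~\ref{lem:predictor_sequential_helper} and Lemma~\ref{lem:sequential_predictor_bias}.
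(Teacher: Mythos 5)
Your proof is correct, and it leans on the same core ingredients as the paper's: the score-error term is handed off to Lemma~\ref{lem:predictor_sequential_helper}, the score variation along the window is controlled via Corollary~\ref{lem:ODE_cor} together with the coupling in Lemma~\ref{lem:true_ode_contraction}, the exponential-prefactor mismatch contributes an $O(h)$ factor times a second moment of the score, and the leftover $L^4$ terms are absorbed using $Lh\lesssim 1$. Where you diverge from the paper is in the bookkeeping of the ``variance'' part: the paper never centers anything in this lemma \--- it writes $\wh x_{n+1}-x_n^*(h)$ directly against the integral representation and does a three-term Young split, bounding separately the integral of the score difference (same prefactor $e^{(1-\alpha)h}$) and the integral of the prefactor difference $e^{(1-\alpha)h}-e^{s-(t_n-h)}$ against $\E\|\grad\ln q_s(x_n^*(t_n-s))\|^2\lesssim d/s + L^2\E\|x_{t_n}-\wh x_n\|^2$ (via Lemma~\ref{lem:score_squared_norm_bound} and Eq.~\eqref{eq:discretize_space}). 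You instead exploit the conditional unbiasedness of $h e^{(1-\alpha)h}\grad\ln q_{t_n-\alpha h}(x_n^*(\alpha h))$ and the inequality $\E\|Z-\E Z\|^2\le\E\|Z-c\|^2$, taking $c$ to be the $\alpha=0$ evaluation $h e^h\grad\ln q_{t_n}(\wh x_n)$; this lets you avoid carrying integrals over $s$ and reuse verbatim the estimate on $\E\|\grad\ln q_{t_n-\alpha h}(x_n^*(\alpha h))-\grad\ln q_{t_n}(\wh x_n)\|^2$ already established inside Lemma~\ref{lem:predictor_sequential_helper}, at the cost of comparing the prefactor to $e^h$ rather than to $e^{s-(t_n-h)}$ (both give $O(h)$). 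The two routes yield the same bound with the same lemma dependencies; yours is a slightly more streamlined write-up, while the paper's split parallels more directly the structure reused later in the parallel analysis (Lemma~\ref{lem:parallel_predictor_variance}). Your invocation of $\E_{x\sim q_t}\|\grad\ln q_t(x)\|^2\lesssim d(L\lor 1/t)$ is justified within the paper's toolkit (Lemma~\ref{lem:score_squared_norm_bound} together with $t_n-\alpha h\ge t_n/2$, or integration by parts under Assumption~\ref{assumption:Lipschitz}), so no gap there.
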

\begin{proof}
    Fix $n$ and let $h := h_n$. We have
    \begin{align*}
        \MoveEqLeft\E\|\wh x_{n+1} - x_n^*(h)\|^2 \\
        &= \E\|h \cdot e^{(1-\alpha) h} \wh s_{t_n - \alpha h} (\wh x_{n+\frac{1}{2}}) - \int_{t_n - h}^{t_n} e^{s - (t_n - h)} \grad \ln q_s(x_n^*(t_n - s))\,\mathrm{d}s \|^2\\
        &\lesssim \E\|h \cdot e^{(1-\alpha) h} \wh s_{t_n - \alpha h} (\wh x_{n + \frac{1}{2}}) - h e^{(1-\alpha) h} \grad \ln q_{t_n - \alpha h}(x_{n}^*(\alpha h)) \|^2\\
        &\qquad\qquad + \E\|h \cdot e^{(1-\alpha) h} \grad \ln q_{t_n - \alpha h}(x_n^*(\alpha h)) - \int_{t_n - h}^{t_n} e^{(1-\alpha) h} \grad \ln q_{s}(x_n^*(t_n - s)) \,\mathrm{d}s\|^2\\
        &\qquad\qquad + \E\|\int_{t_n - h}^{t_n} e^{ (1-\alpha) h} \grad \ln q_{s}(x_n^*(t_n - s)) \,\mathrm{d}s -  \int_{t_n - h}^{t_n} e^{s - (t_n - h)} \grad \ln q_s(x_n^*(t_n - s)) \,\mathrm{d}s\|^2
    \end{align*}    
    The first term was bounded in Lemma~\ref{lem:predictor_sequential_helper}:
    \begin{multline*}
        \E\|h \cdot e^{(1-\alpha) h} \wh s_{t_n - \alpha h} (\wh x_{n + \frac{1}{2}}) - h e^{(1-\alpha) h} \grad \ln q_{t_n - \alpha h}(x_{t_n-\alpha h}) \|^2\\
        \lesssim h^2 \eps_\scr^2 + L^4 d h^6 \left(L \lor \frac{1}{t_n} \right) + L^4 h^4 \E \|\wh x_n - x_{t_n}\|^2\,.
    \end{multline*}
    For the second term,
    \begin{align*}
        \MoveEqLeft\E\|h \cdot e^{(1-\alpha) h} \grad \ln q_{t_n - \alpha h}(x_n^*(\alpha h)) - \int_{t_n - h}^{t_n} e^{(1-\alpha) h}\grad \ln q_{s}(x_{n}^*(t_n - s)) \,\mathrm{d}s\|^2\\
        &=\E \| \int_{t_n - h}^{t_n} e^{(1-\alpha) h} \cdot \left(\grad \ln q_{t_n - \alpha h}(x_n^*(\alpha h)) - \grad \ln q_s(x_n^*(t_n - s) )\right)\,\mathrm{d}s \|^2\\
        &\lesssim h \int_{t_n - h}^{t_n} \E \|\grad \ln q_{t_n - \alpha h}(x_n^*(\alpha h)) - \grad \ln q_s(x_{n}^*(t_n - s)\|^2 \,\mathrm{d}s\,.
    \end{align*}
    Now, 
    \begin{align*}
        \E\|\grad \ln q_{t_n - \alpha h}(x_n^*(\alpha h)) - \grad \ln q_s(x_n^*(t_n - s))\|^2 &\lesssim \E \|\grad \ln q_{t_n - \alpha h}(x_{t_n - \alpha h}) - \grad \ln q_s(x_{s})\|^2\\
        &\qquad+ \E \|\grad \ln q_{t_n - \alpha h}(x_{t_n - \alpha h}) - \grad \ln q_{t_n - \alpha h}(x_n^*(\alpha h))\|^2\\
        &\qquad+ \E\|\grad \ln q_s(x_s) - \grad \ln q_s(x_n^*(t_n - s))\|^2
    \end{align*}
    The first of these terms is bounded in Corollary~\ref{lem:ODE_cor}:
    \begin{equation*}
        \E \|\grad \ln q_{t_n - \alpha h}(x_{t_n - \alpha h}) - \grad \ln q_s(x_{s})\|^2 \lesssim L^2 d h^2 \left(L \lor \frac{1}{t_n} \right)
    \end{equation*}
    For the remaining two terms, note that by the Lipschitzness of $\grad \ln q_t$ and Lemma~\ref{lem:true_ode_contraction},
    \begin{align*}
        \E\|\grad \ln q_{t_n - \alpha h}(x_{t_n - \alpha h}) - \grad \ln q_{t_n - \alpha h}(x_n^*(\alpha h))\|^2 &\le L^2 \E \|x_{t_n - \alpha h} - x_n^*(\alpha h) \|^2\\
        &\lesssim L^2 \exp(O(L h)) \E \|x_{t_n} - \wh x_n\|^2\\
        &\lesssim L^2 \E \|x_{t_n} - \wh x_n\|^2
    \end{align*}
    and similarly, for $t_n - h \le s \le t_n$,
    \begin{equation}
        \E\|\grad \ln q_s(x_s) - \grad \ln q_s(x_n^*(t_n - s))\|^2 \lesssim L^2 \E \|x_{t_n} - \wh x_n\|^2\,. \label{eq:discretize_space}
    \end{equation}
    Thus, we have shown that the second term in our bound on $\E\|\wh{x}_{n+1} - x^*_n(h)\|^2$ is bounded as follows:
    \begin{multline*}
        \E\|h \cdot e^{(1-\alpha) h} \grad \ln q_{t_n - \alpha h}(x_n^*(\alpha h)) - \int_{t_n - h}^{t_n} e^{(1-\alpha) h}\grad \ln q_{s}(x_{n}^*(t_n - s)) \,,\mathrm{d}s\|^2 \\
        \lesssim L^2 d h^4 \left( L \lor \frac{1}{t_n}\right) + L^2 h^2 \E\|x_{t_n} - \wh x_n\|^2\,.
    \end{multline*}
    For the third term,
    \begin{align*}
        \MoveEqLeft\E\|\int_{t_n - h}^{t_n} e^{ (1-\alpha) h} \grad \ln q_{s}(x_n^*(t_n - s)) \,,\mathrm{d}s -  \int_{t_n - h}^{t_n} e^{s - (t_n - h)} \grad \ln q_s(x_n^*(t_n - s)) \,,\mathrm{d}s\|^2\\
        &= \E\|\int_{t_n - h}^{t_n} \left(e^{(1-\alpha) h} - e^{s - (t_n - h)} \right) \grad \ln q_s(x_n^*(t_n - s)) \,,\mathrm{d}s\|^2\\
        &\lesssim h \int_{t_n - h}^{t_n} \E_\alpha \left(e^{(1-\alpha)h} - e^{s - (t_n - h)} \right)^2 \E_{\wh x_0 \sim q_{t_0}} \| \grad \ln q_{s}(x_{n}^*(t_n - s))\|^2 \,\mathrm{d}s\,.
    \end{align*}
    Now, we have
    \begin{align*}
        \E\|\grad \ln q_s(x_n^*(t_n - s))\|^2 &\lesssim \E \|\grad \ln q_s(x_s)\|^2 + \E\|\grad \ln q_s(x_n^*(t_n - s)) - \grad \ln q_s(x_s)\|^2\\
        &\lesssim \frac{d}{s} + L^2\E\|x_{t_n} - \wh{x}_n\|^2\,.
    \end{align*}
    where the last step follows by Lemma~\ref{lem:score_squared_norm_bound} and~\eqref{eq:discretize_space}.
    So,
    \begin{align*}
        \MoveEqLeft\E\|\int_{t_n - h}^{t_n} e^{ (1-\alpha) h} \grad \ln q_{s}(x_n^*(t_n - s)) \,\mathrm{d}s -  \int_{t_n - h}^{t_n} e^{s - (t_n - h)} \grad \ln q_s(x_n^*(t_n - s)) \,\mathrm{d}s\|^2\\
        &\lesssim h \int_{t_n - h}^{t_n }\E_\alpha \left(e^{(1-\alpha)h} - e^{s - (t_n - h)} \right)^2 \cdot\left( \frac{d}{s} + L^2 \E\|x_{t_n} - \wh x_n\|^2 \right) \,\mathrm{d}s\\
        &\lesssim h^4 \cdot \left(\frac{d}{t_n} +  L^2 \E\|x_{t_n} - \wh x_n\|^2 \right)
    \end{align*}
    Thus, noting that $h \le \frac{1}{L}$, we obtain the claimed bound on $\E\|\wh{x}_{n+1} - x^*_n(h_n)\|^2$.
\end{proof}

\noindent Finally, we put together the bias and variance bounds above to obtain a bound on the Wasserstein error at the end of the Predictor Step.

\begin{lemma}[Sequential Predictor Wasserstein Guarantee]
\label{lem:main_predictor_sequential}
    Suppose that $L \ge 1$, and that for our sequence of step sizes $h_0, \dots, h_{N-1}$, $\sum_i h_i \le 1/L$. Let $h_{\text{max}} = \max_i h_i$. Then, at the end of Algorithm~\ref{alg:sequential_predictor_step},
    \begin{enumerate}
        \item If $t_N \gtrsim 1/L$,
        \begin{align*}
            W_2^2(\wh x_N, x_{t_N}) \lesssim \frac{\eps_\scr^2}{L^2} + L^3 d h_{\max}^4  + L^2 d h_{\max}^3
        \end{align*}
        \item If $t_N \lesssim 1/L$ and $h_{n} \lesssim \frac{t_n}{2}$ for each $n$,
        \begin{align*}
            W_2^2(\wh x_N, x_{t_N}) \lesssim \frac{\eps_\scr^2}{L^2} + \left(L^3 dh_{\max}^4 + L^2 d h_{\max}^3 \right) \cdot N
        \end{align*}
    \end{enumerate}
    Here, $x_{t_N} \sim q_{t_N}$ is the solution of the true ODE beginning at $x_{t_0} = \wh x_0 \sim q_{t_0}$.
\end{lemma}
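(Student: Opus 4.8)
The plan is to run a coupling between the algorithm and the true reverse ODE, both started from the same $\wh x_0 = x_{t_0}\sim q_{t_0}$, and to control the squared displacement $D_n := \E\norm{\wh x_n - x_{t_n}}^2$ (expectation over $\wh x_0$ and the midpoints $\alpha_0,\dots,\alpha_{n-1}$) by a one-step recursion that we then unroll with a discrete Gr\"{o}nwall inequality. Fix $n$, abbreviate $h := h_n$, and let $y := x_n^*(h)$ be the true ODE run for time $h$ from $\wh x_n$ at time $t_n$. By the semigroup property of the ODE, $x_{t_{n+1}}$ equals the true ODE run for time $h$ from $x_{t_n}$, so Lemma~\ref{lem:true_ode_contraction} gives $\E\norm{x_{t_{n+1}} - y}^2 \le e^{O(Lh)} D_n$. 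Writing $\wh x_{n+1} - x_{t_{n+1}} = (\wh x_{n+1} - y) - (x_{t_{n+1}} - y)$, expanding the square, taking $\E_\alpha$ conditionally on $(\wh x_n, x_{t_n})$ (so $y$ and $x_{t_{n+1}}$ are fixed), and applying Young's inequality to the cross term with weight $Lh$ — exactly the decomposition in \eqref{eq:decompose} — yields
\begin{equation*}
D_{n+1} \le \paren{1 + \tfrac{Lh}{2}}\E\norm{x_{t_{n+1}} - y}^2 + \tfrac{2}{Lh}\,\E\norm{\E_\alpha \wh x_{n+1} - y}^2 + \E\norm{\wh x_{n+1} - y}^2\,.
\end{equation*}

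Now substitute the three ingredients into this bound: Lemma~\ref{lem:true_ode_contraction} for the first term, the bias estimate of Lemma~\ref{lem:sequential_predictor_bias} for $\E\norm{\E_\alpha\wh x_{n+1} - y}^2$, and the variance estimate of Lemma~\ref{lem:sequential_predictor_variance} for $\E\norm{\wh x_{n+1} - y}^2$. (One first checks that their hypotheses $h_n\lesssim 1/L$ and $t_n - h_n \ge t_n/2$ hold: the latter is assumed outright in Case 2, and in Case 1 it follows from $t_N\gtrsim 1/L$ together with $h_{\max}\le 1/L$.) The key point is that the $D_n$-dependent terms produced by these lemmas are $L^2 h^2 D_n$ from the variance and $\tfrac{2}{Lh}\cdot L^4 h^4 D_n = 2L^3 h^3 D_n$ from the bias; since $Lh \le L\sum_i h_i \le 1$, both are $O(Lh)D_n$ and fold into the multiplicative factor next to $D_n$. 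Using $h_n\le 1/L$ to absorb the remaining lower-order pieces (e.g.\ $h_n^2\eps_\scr^2 \le h_n\eps_\scr^2/L$ and $L^3 d h_n^5 \le L^2 d h_n^4$), we obtain
\begin{equation*}
D_{n+1} \le \paren{1 + O(Lh_n)}D_n + O\!\paren{\frac{h_n\eps_\scr^2}{L} + L^2 d\, h_n^4\paren{L \lor \tfrac{1}{t_n}}}\,,
\end{equation*}
with initial condition $D_0 = 0$ since $\wh x_0 = x_{t_0}$.

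Unrolling this with discrete Gr\"{o}nwall and using $\sum_n h_n \le 1/L$ (so the accumulated factor $\exp(O(L\sum_n h_n))$ is $O(1)$) gives $D_N \lesssim \sum_{n=0}^{N-1}\paren{\tfrac{h_n\eps_\scr^2}{L} + L^2 d h_n^4(L\lor\tfrac1{t_n})}$; the first sum contributes $\tfrac{\eps_\scr^2}{L}\sum_n h_n \le \tfrac{\eps_\scr^2}{L^2}$. For the second sum: in Case 1 every $t_n\ge t_N\gtrsim 1/L$, so $L\lor\tfrac1{t_n}\lesssim L$ and $\sum_n L^3 d h_n^4 \le L^3 d h_{\max}^3\sum_n h_n \le L^2 d h_{\max}^3$, which is within the claimed $L^3 d h_{\max}^4 + L^2 d h_{\max}^3$. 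In Case 2 the hypothesis $h_n \le t_n/2$ gives $h_n^4(L\lor\tfrac1{t_n}) \le L h_n^4 + \tfrac{h_n^3}{2} \le L h_{\max}^4 + \tfrac{h_{\max}^3}{2}$, so each of the $N$ summands is $\lesssim L^3 d h_{\max}^4 + L^2 d h_{\max}^3$ and the sum is $(L^3 d h_{\max}^4 + L^2 d h_{\max}^3)\cdot N$. Finally $(\wh x_N, x_{t_N})$ is a coupling of $\law(\wh x_N)$ with $q_{t_N}$, so $W_2^2(\wh x_N, x_{t_N}) \le D_N$ in both cases.

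The main thing to get right is the second step: verifying that the feedback terms $L^2 h^2 D_n$ and $L^3 h^3 D_n$ really are lower order than the $O(Lh)D_n$ drift already present — this is exactly the slack that $h_n\le 1/L$ provides, and it is what prevents the Gr\"{o}nwall unrolling from blowing up — together with handling the time-singular factor $L\lor\tfrac1{t_n}$ near the end of the reverse process, where the hypothesis $h_n \le t_n/2$ is precisely what keeps $\sum_n h_n^4/t_n$ from exploding. The Gr\"{o}nwall unrolling and the case analysis are then routine bookkeeping.
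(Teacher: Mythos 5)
Your proposal is correct and follows essentially the same route as the paper's proof: the same coupling with $y = x_n^*(h_n)$, the same Young's-inequality bias--variance decomposition fed by Lemmas~\ref{lem:true_ode_contraction}, \ref{lem:sequential_predictor_bias}, and \ref{lem:sequential_predictor_variance}, the same absorption of the $L^2h^2 D_n$ and $L^3h^3 D_n$ feedback terms using $Lh_n \le 1$, and the same Gr\"{o}nwall unrolling with $\exp(O(L\sum_n h_n)) = O(1)$ followed by the two-case bound on $\sum_n h_n^4 (L \lor 1/t_n)$. The only differences are cosmetic (stating the recursion forward in $n$ and splitting $L \lor 1/t_n \le L + 1/t_n$ in Case 2 instead of factoring out $\sum_n h_n/t_n$), so no further changes are needed.
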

\begin{proof}
    For all $n \in [1, \dots, N]$, let $y_n$ be the solution of the exact one step ODE starting from $\wh x_{n-1}$. Let the operator $\E_\alpha$ be the expectation over the random choice of $\alpha$ in the $n^{th}$ iteration. Note that only $\wh x_N$ depends on $\alpha$. We have
    \begin{align*}
        \E_\alpha \left[\| x_{t_N} - \wh x_N  \|^2 \right] &= \E_\alpha\left[\|(x_{t_N} - y_N) - (\wh x_N  - y_N)\|^2\right]\\
        &= \|x_{t_N} - y_N\|^2 - 2 \inner{x_{t_N} - y_N, \E_\alpha \wh x_N - y_{N}} + \E_\alpha \| \wh x_{N} - y_{N}\|^2\\
        &\le \left(1 + \frac{L h_{N-1}}{2} \right) \|x_{t_N} -  y_N\|^2 + \frac{2}{Lh_{N-1}} \|\E_\alpha \wh x_N - y_N\|^2 + \E_\alpha\|\wh x_N - y_N\|^2\\
        &\le \exp\left( O(L h_{N-1})\right) \|x_{t_{N-1}} - \wh x_{N-1}\|^2 + \frac{2}{Lh_{N-1}} \|\E_\alpha \wh x_N - y_{N}\|^2 + \E_\alpha \|\wh x_N -  y_{N}\|^2\,,
    \end{align*}
    where the third line is by Young's inequality, and the fourth line is by Lemma~\ref{lem:true_ode_contraction}.
    Taking the expectation wrt $\wh x_0 \sim q_{t_0}$, by Lemmas~\ref{lem:sequential_predictor_bias}~and~\ref{lem:sequential_predictor_variance},
    \begin{align*}
        \MoveEqLeft\E \|x_{t_N} - \wh x_N\|^2\\
        &\le \exp\left(O(L h_{N-1}) \right) \E \|x_{t_{N-1}} - \wh x_{N-1}\|^2 + \frac{2}{L h_{N-1}} \E \|\E_\alpha \wh x_N - y_N\|^2 + \E \|\wh x_{N} - y_N\|^2\\
        &\le \exp\left(O(L h_{N-1}) \right)\E \|x_{t_{N-1}} - \wh x_{N-1}\|^2 \\
        &\qquad + O\left(\frac{1}{L h_{N-1}} \left(h_{N-1}^2 \eps_\scr^2 + L^4 d h_{N-1}^6 \left(L \lor \frac{1}{t_{N-1}} \right) + L^4 h_{N-1}^4 \E \|x_{t_{N-1}} - \wh x_{N-1}\|^2 \right) \right)\\
        &\qquad + O\left(h_{N-1}^2 \eps_\scr^2 + L^2 d h_{N-1}^4 \left(L \lor \frac{1}{t_{N-1}} \right) + L^2 h_{N-1}^2 \E \|x_{t_{N-1}} - \wh x_{N-1}\|^2\right)\\
        &\le \exp\left(O(L h_{N-1}) \right) \E\|x_{t_{N-1}} - \wh x_{N-1}\|^2 \\
        &\qquad + O\left( \frac{h_{N-1} \eps_\scr^2}{L} + h_{N-1}^2 \eps_\scr^2 + \left(L^3 d h_{N-1}^5 + L^2 d h_{N-1}^4\right) \left(L \lor \frac{1}{t_{N-1}} \right) \right)
    \end{align*}
    By induction, noting that $x_{t_0} = \wh x_0$, we have
    \begin{equation*}
        \E\|x_{t_N} - \wh x_N\|^2 \lesssim \sum_{n=0}^{N-1}\left( \frac{h_{n} \eps_\scr^2}{L} + h_{n}^2 \eps_\scr^2 + \left(L^3 d h_{n}^5 + L^2 d h_{n}^4 \right)\cdot \left(L \lor \frac{1}{t_{n}} \right)\right)\cdot \exp\left(O\left(L \sum_{i=n+1}^{N-1} h_i\right) \right)\,.
    \end{equation*}
    By assumption, $\sum_i h_i \le \frac{1}{L}$. In the first case, $L\vee \frac{1}{t_n} \lesssim L$ for all $n$, so
    \begin{equation*}
        W_2^2(\wh x_N, x_{t_N}) \lesssim \frac{\eps_\scr^2}{L^2} + L^3 d h_{\max}^4  + L^2 d h_{\max}^3\,.
    \end{equation*}
    In the second case,
    \begin{align*}
        W_2^2(\wh x_N, x_{t_N}) &\lesssim \frac{\eps_\scr^2}{L^2} + \left(L^3 d h_{\max}^4 + L^2 d h_{\max}^3 \right) \cdot \sum_{n=0}^{N-1} \frac{h_n}{t_n}\\
        &\lesssim \frac{\eps_\scr^2}{L^2} + \left(L^3 d h_{\max}^4 + L^2 d h_{\max}^3 \right) \cdot N\,. \qedhere
    \end{align*}
%
\end{proof}
\subsection{Corrector step}
\label{sec:sequential_corrector}

For the sequential algorithm, we make use of the underdamped Langevin corrector step and analysis from \cite{chen2023ode}. We reproduce the same here for convenience.

The underdamped Langevin Monte Carlo process with step size $h$ is given  by:
\begin{align}
    \label{eq:underdamped_langevin_approx}
    \begin{split}
        \mathrm{d} \wh x_t &= \wh v_t \, \mathrm{d}t\\
        \mathrm{d} \wh v_t &= (\wh s(\wh x_{\floor{t/h} h}) - \gamma \wh v_t)\, \mathrm{d}t + \sqrt{2 \gamma} \, \mathrm{d}B_t
    \end{split} 
\end{align}
where $B_t$ is Brownian motion, and $\wh{s}$ satisfies \begin{equation}
    \E_{x\sim q}\norm{\wh{s}(x) - \nabla \ln q(x)}^2 \le \eps_\scr^2\,. \label{eq:corrector_score_err}   
\end{equation}
for some target measure $q$.
Here, we set the friction parameter $\gamma = \Theta(\sqrt{L})$.

Then, our corrector step is as follows.

\begin{algorithm}[H]
\label{alg:sequential_corrector_step}
\caption{\textsc{CorrectorStep (Sequential)}}
    \label{alg:sequential_corrector} 
    \vspace{0.2cm}
    \paragraph{Input parameters:}
    \begin{itemize}
        \item Starting sample $\wh x_{0}$, Total time $T_\corr$, Step size $h_{\corr}$, Score estimate $\wh s$
    \end{itemize}
    \begin{enumerate}
        \item Run underdamped Langevin Monte Carlo in \eqref{eq:underdamped_langevin_approx} for total time $T_\corr$ using step size $h_\corr$, and let the result be $\wh x_N$.
        \item Return $\wh x_N$.
    \end{enumerate}
\end{algorithm}
\begin{theorem}[Theorem 5 of \cite{chen2023ode}, restated]\label{thm:sequential_corrector}
     Suppose Eq.~\eqref{eq:corrector_score_err} holds. For any distribution $p$ over $\R^d$, and total time $T_\corr \lesssim 1/\sqrt{L}$, if we let $p_N$ be the distribution of $\wh x_{N}$ resulting from running Algorithm~\ref{alg:sequential_corrector} initialized at $\wh x_0 \sim p$, then we have
     \begin{align*}
        \TV(p_N, q) \lesssim \frac{W_2(p,  q)}{L^{1/4} T_\corr^{3/2}} + \frac{\eps_\scr T_{\corr}^{1/2}}{L^{1/4}} + L^{3/4} T_\corr^{1/2} d^{1/2} h_\corr\,.
     \end{align*}
\end{theorem}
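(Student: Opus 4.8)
The plan is to reduce total variation to KL divergence by Pinsker's inequality, $\TV(p_N, q) \le \sqrt{\KL(p_N \parallel q)/2}$, and then bound $\KL(p_N \parallel q)$ by splitting it into two contributions: (i) a \emph{short-time regularization} estimate for the continuous-time underdamped Langevin diffusion run with the \emph{exact} score $\nabla \ln q$, which pays for the mismatched initialization $p$ versus the stationary law $q$, and (ii) a Girsanov comparison on path space between the discrete, approximate-score process in \eqref{eq:underdamped_langevin_approx} that Algorithm~\ref{alg:sequential_corrector} actually runs and that ideal continuous exact-score diffusion, which pays for the score error $\eps_\scr$ and the step size $h_\corr$. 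Since the statement is literally Theorem~5 of~\cite{chen2023ode} restated, in the paper we simply invoke it; the following is the route I would take to reprove it.

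For (i), introduce the continuous underdamped Langevin $(\mathbf{x}_t,\mathbf{v}_t)$ with drift $-\nabla\ln q$, friction $\gamma=\Theta(\sqrt L)$, started from $p\otimes\mathcal{N}(0,\mathrm{Id})$, together with its stationary counterpart started from $q\otimes\mathcal{N}(0,\mathrm{Id})$, whose position marginal is $q$ at every time. The claim is $\sqrt{\KL(\mathrm{law}(\mathbf{x}_{T_\corr})\parallel q)}\lesssim W_2(p,q)/(L^{1/4}T_\corr^{3/2})$. I would prove this by a stochastic-control argument: couple the initial positions by the optimal $W_2$ plan and the initial velocities synchronously, then add a deterministic control $u_t$ to the velocity drift of the stationary process that steers its position law over $[0,T_\corr]$ to coincide at time $T_\corr$ with $\mathrm{law}(\mathbf{x}_{T_\corr})$ --- for instance along the straight-line-in-time interpolation of the coupled endpoints. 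Because the control acts on velocity (a second-order control), the cost is $\int_0^{T_\corr}\|u_t\|^2\,dt\lesssim W_2^2(p,q)/T_\corr^3$, and Girsanov gives $\KL\lesssim\frac{1}{4\gamma}\E\int\|u_t\|^2\lesssim W_2^2(p,q)/(L^{1/2}T_\corr^3)$. The window $T_\corr\lesssim 1/\sqrt L$ (so $\gamma T_\corr\lesssim 1$) is exactly what prevents the residual $\nabla\ln q$ terms in the controlled dynamics, controlled via $L$-Lipschitzness, from inflating this cost.

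For (ii), apply Girsanov on $[0,T_\corr]$ to the path law of the exact-score continuous diffusion versus that of \eqref{eq:underdamped_langevin_approx}. Since the noise $\sqrt{2\gamma}\,dB_t$ enters only the velocity coordinate, the relative entropy of the path measures equals $\frac{1}{4\gamma}\E\int_0^{T_\corr}\|\wh s(\wh x_{\lfloor t/h_\corr\rfloor h_\corr})-\nabla\ln q(\mathbf{x}_t)\|^2\,dt$, which I split into a score-error piece and a discretization piece. The score-error piece is $\lesssim \eps_\scr^2 T_\corr/\gamma$, whose square root is $\eps_\scr T_\corr^{1/2}/L^{1/4}$, the middle term. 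For the discretization piece, $\|\nabla\ln q(\wh x_{\lfloor t/h_\corr\rfloor h_\corr})-\nabla\ln q(\wh x_t)\|^2\le L^2\|\wh x_{\lfloor t/h_\corr\rfloor h_\corr}-\wh x_t\|^2$, and an a priori moment bound on the iterates (with $\gamma\sim\sqrt L$ the velocity has $O(1)$ variance per coordinate, so the position moves $O(h_\corr\sqrt d)$ per step) gives $\E\|\wh x_{\lfloor t/h_\corr\rfloor h_\corr}-\wh x_t\|^2\lesssim h_\corr^2 d$; integrating and dividing by $\gamma$ yields $L^{3/2}h_\corr^2 d\,T_\corr$, whose square root is the last term $L^{3/4}T_\corr^{1/2}d^{1/2}h_\corr$. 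Combining (i) and (ii) with the data-processing inequality and Pinsker gives the stated bound. The main obstacle is step (i): one must construct a valid control steering the \emph{position} marginal while the natural controlled variable is the velocity, keeping the $\nabla\ln q$ drift from spoiling the $O(W_2^2/T^3)$ cost --- this is precisely the technical heart of~\cite{chen2023ode} and the reason we cite rather than redo it.
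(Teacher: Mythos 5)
The paper offers no proof of this statement beyond the citation itself---it is invoked verbatim as Theorem 5 of~\cite{chen2023ode}---and you do exactly the same, so your treatment matches the paper's. Your accompanying sketch (a stochastic-control/shifted-Girsanov argument converting $W_2(p,q)$ into the $L^{-1/4}T_\corr^{-3/2}$ regularization term, plus a path-space Girsanov bound yielding the $\eps_\scr$ and $h_\corr$ terms) also faithfully reflects the structure of the cited proof, with the measure-change subtleties correctly deferred to~\cite{chen2023ode}.
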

\begin{corollary}[Underdamped Corrector]
\label{cor:underdamped_corrector_sequential}
     For $T_\corr = \Theta\left(\frac{1}{\sqrt{L} d^{1/18}} \right)$ 
     \begin{align*}
         \TV(p_N, q) \lesssim W_2(p, q) \cdot d^{1/12} \cdot \sqrt{L} + \frac{\eps_\scr}{\sqrt{L} d^{1/36}} + \sqrt{L} d^{17/36} h_\corr\,.
     \end{align*}
\end{corollary}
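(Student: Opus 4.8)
The plan is to obtain this corollary as a direct specialization of Theorem~\ref{thm:sequential_corrector}: substitute the prescribed corrector time $T_\corr = \Theta(1/(\sqrt L\, d^{1/18}))$ into the three terms of that bound and simplify. First I would verify admissibility: since $d \ge 1$ we have $d^{1/18}\ge 1$, so $T_\corr \le 1/\sqrt L$ and the hypothesis $T_\corr \lesssim 1/\sqrt L$ of Theorem~\ref{thm:sequential_corrector} holds (the score-error hypothesis~\eqref{eq:corrector_score_err} is assumed throughout this subsection, with friction $\gamma = \Theta(\sqrt L)$). This gives
\[
\TV(p_N, q) \lesssim \frac{W_2(p,q)}{L^{1/4} T_\corr^{3/2}} + \frac{\eps_\scr T_\corr^{1/2}}{L^{1/4}} + L^{3/4} T_\corr^{1/2} d^{1/2} h_\corr\,.
\]

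Next I would track exponents. Writing $T_\corr = \Theta(L^{-1/2} d^{-1/18})$: for the first term, $L^{1/4} T_\corr^{3/2} = \Theta(L^{1/4}\cdot L^{-3/4} d^{-1/12}) = \Theta(L^{-1/2} d^{-1/12})$, so it is $\Theta(W_2(p,q)\,\sqrt L\, d^{1/12})$; for the second term, $T_\corr^{1/2}/L^{1/4} = \Theta(L^{-1/4} d^{-1/36}\cdot L^{-1/4}) = \Theta(L^{-1/2} d^{-1/36})$, so it is $\Theta(\eps_\scr/(\sqrt L\, d^{1/36}))$; for the third term, $L^{3/4} T_\corr^{1/2} d^{1/2} = \Theta(L^{3/4}\cdot L^{-1/4} d^{-1/36}\cdot d^{1/2}) = \Theta(\sqrt L\, d^{1/2-1/36}) = \Theta(\sqrt L\, d^{17/36})$, so it is $\Theta(\sqrt L\, d^{17/36} h_\corr)$. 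Summing these three simplified contributions yields exactly the stated bound.

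I do not expect any real obstacle here: the corollary is just a black-box reparametrization of Theorem~\ref{thm:sequential_corrector}, so the only work is the fractional-exponent bookkeeping (e.g. $1/2 - 1/36 = 17/36$). The sole substantive choice is the exponent $1/18$ in $T_\corr$; it is dictated by the downstream combined predictor-corrector analysis, where this $T_\corr$ must be small enough that the factor $\sqrt L\, d^{1/12}$ multiplying $W_2(p,q)$ does not overwhelm the Wasserstein slack available from Lemma~\ref{lem:main_predictor_sequential}, while keeping the step-size requirement in the third term affordable. That balancing is performed where the corollary is invoked rather than in its proof.
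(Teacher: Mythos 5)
Your proposal is correct and matches the paper's (implicit) derivation: the corollary is stated immediately after Theorem~\ref{thm:sequential_corrector} with no separate proof, precisely because it is obtained by plugging $T_\corr = \Theta(L^{-1/2} d^{-1/18})$ into the three terms of that theorem, and your exponent bookkeeping ($3/2 \cdot 1/18 = 1/12$, $1/2 \cdot 1/18 = 1/36$, $1/2 - 1/36 = 17/36$) is exactly right. Your additional check that $T_\corr \lesssim 1/\sqrt{L}$ holds for $d \ge 1$ is a sensible verification of the theorem's hypothesis.
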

\subsection{End-to-end analysis}
Finally, we put together the analysis of the predictor and corrector step to obtain our final $\wt O(d^{5/12})$ dependence on sampling time. We first show that carefully choosing the amount of time to run the corrector results in small TV error after successive rounds of the predictor and corrector steps in Lemma~\ref{lem:tv_one_round_sequential}. Finally, we iterate this bound to obtain our final guarantee, given by Theorem~\ref{thm:main_sequential_formal}.
\begin{algorithm}[h]
\label{alg:final_sequential}
\caption{\textsc{SequentialAlgorithm}}
    \label{alg:sequential_alg} 
    \vspace{0.2cm}
    \paragraph{Input parameters:}
    \begin{itemize}
        \item Start time $T$, End time $\delta$, Corrector steps time $T_\corr \lesssim 1/\sqrt{L}$, Number of predictor-corrector steps $N_0$, Predictor step size $h_\pred$, Corrector step size $h_\corr$, Score estimates $\wh s_t$
    \end{itemize}
    \begin{enumerate}
        \item Draw $\wh x_0 \sim \mathcal N(0, I_d)$.
        \item For $n = 0, \dots, N_0-1$:
        \begin{enumerate}
            \item Starting from $\wh x_n$, run Algorithm~\ref{alg:sequential_predictor_step} with starting time $T - n/L$ using step sizes $h_\pred$ for all $N$ steps, with $N = \frac{1}{L h_\pred}$, so that the total time is $1/L$. Let the result be $\wh x_{n+1}'$.
            \item Starting from $\wh x_{n+1}'$, run Algorithm~\ref{alg:sequential_corrector} for total time $T_\corr$ with step size $h_\corr$ and score estimate $\wh s_{T - (n+1)/L}$ to obtain $\wh x_{n+1}$.
        \end{enumerate}
        \item Starting from $\wh x_{N_0}$, run Algorithm~\ref{alg:sequential_predictor_step} with starting time $T - N_0/L$ using step sizes $h_\pred/2, h_\pred/4, h_\pred/8, \dots, \delta$ to obtain $\wh x_{N_0 + 1}'$.
        \item Starting from $\wh x'_{N_0 + 1}$, run Algorithm~\ref{alg:sequential_corrector} for total time $T_\corr$ with step size $h_\corr$ and score estimate $\wh s_{\delta}$ to obtain $\wh x_{N_0 + 1}$.
        \item Return $\wh x_{N_0 + 1}$.
    \end{enumerate}
\end{algorithm}
\begin{lemma}[TV error after one round of predictor and corrector]
\label{lem:tv_one_round_sequential}
Let $x_t \sim q_t$ be a sample from the true distribution at time $t$. Let $t_n = T - n/L$ for $n \in [0, \dots, N_0]$. If we set $T_\corr = \Theta\left(\frac{1}{\sqrt{L} d^{1/18}} \right)$, we have,
    \begin{enumerate}
        \item For $n \in [0, \dots, N_0 - 1]$, if $t_n \gtrsim 1/L$, 
        \begin{equation*}
            \TV(\wh x_{n+1}, x_{t_{n+1}}) \le \TV(\wh x_n, x_{t_n}) + O\left(L^2 d^{7/12} h_\pred^2 + L^{3/2} d^{7/12} h_\pred^{3/2} + \sqrt{L} d^{17/36} h_\corr + \frac{\eps_\scr d^{1/12}}{\sqrt{L}}\right)
        \end{equation*}
        \item If $t_{N_0} \lesssim 1/L$,
        \begin{align*}
            \TV(\wh x_{N_0 + 1}, x_{\delta}) &\le \TV(\wh x_{N_0}, x_{t_{N_0}})\\
            &\quad +  O\left(\left(L^2 d^{7/12} h_\pred^2 + L^{3/2} d^{7/12} h_\pred^{3/2}\right) \cdot \sqrt{\log \frac{h_\pred}{\delta}} + \sqrt{L} d^{17/36} h_\corr + \frac{\eps_\scr d^{1/12}}{\sqrt{L}}\right)
        \end{align*}
    \end{enumerate}
\end{lemma}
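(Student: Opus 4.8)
The plan is a standard ``shift-and-chain'' argument: invoke the data processing inequality to peel off the error already accumulated in $\wh x_n$, and then bound the new error introduced by one predictor--corrector round \emph{started from the true law}, using the predictor Wasserstein guarantee (Lemma~\ref{lem:main_predictor_sequential}) and the corrector conversion bound (Corollary~\ref{cor:underdamped_corrector_sequential}) as black boxes.

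Write $P = C\circ\Pi$ for the Markov kernel that implements one round of Algorithm~\ref{alg:sequential_alg}: $\Pi$ is the predictor step (Algorithm~\ref{alg:sequential_predictor_step} with the appropriate step sizes) and $C$ is the corrector step (Algorithm~\ref{alg:sequential_corrector} with total time $T_\corr$, step size $h_\corr$, and the fixed score estimate $\wh s_{t_{n+1}}$). The key point is that $P$ does not depend on its input, so if $p_n = \law(\wh x_n)$ then $\wh x_{n+1}\sim P_\# p_n$, and letting $\wt x_{n+1}\sim P_\# q_{t_n}$ be the output of the same round applied to a true sample $x_{t_n}\sim q_{t_n}$, the triangle inequality and contraction of TV under Markov kernels give
\begin{align*}
    \TV(\wh x_{n+1}, x_{t_{n+1}}) \;\le\; \TV(P_\# p_n, P_\# q_{t_n}) + \TV(P_\# q_{t_n}, q_{t_{n+1}}) \;\le\; \TV(\wh x_n, x_{t_n}) + \TV(\wt x_{n+1}, x_{t_{n+1}})\,.
\end{align*}
Here we used that running the true probability flow ODE from $q_{t_n}$ over the relevant time interval reproduces exactly $q_{t_{n+1}}$ (i.e. $q_\delta$ in part (2)), by exactness of the time-reversal. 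It thus remains to bound $\TV(\wt x_{n+1}, x_{t_{n+1}})$, the error of a single round started from the truth.

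Decompose this round as predictor then corrector, and let $p'$ be the law after the predictor step applied to $x_{t_n}\sim q_{t_n}$. In part (1), since the predictor runs until time $t_{n+1}\gtrsim 1/L$ with uniform step size $h_{\max} = h_\pred$, Lemma~\ref{lem:main_predictor_sequential}(1) gives $W_2(p', q_{t_{n+1}}) \lesssim \eps_\scr/L + L^{3/2}\sqrt{d}\,h_\pred^2 + L\sqrt{d}\,h_\pred^{3/2}$ (taking square roots and using $\sqrt{a+b+c}\le\sqrt a + \sqrt b + \sqrt c$). Feeding this into Corollary~\ref{cor:underdamped_corrector_sequential} with target $q_{t_{n+1}}$ and $T_\corr = \Theta(1/(\sqrt L d^{1/18}))$, the $W_2(p',q_{t_{n+1}})\cdot d^{1/12}\sqrt L$ term becomes $L^2 d^{7/12}h_\pred^2 + L^{3/2}d^{7/12}h_\pred^{3/2} + \eps_\scr d^{1/12}/\sqrt L$, the discretization term is $\sqrt L d^{17/36} h_\corr$, and the remaining $\eps_\scr/(\sqrt L d^{1/36})$ term is absorbed into $\eps_\scr d^{1/12}/\sqrt L$; this is exactly the claimed bound in (1). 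In part (2) the predictor uses the geometrically decaying schedule $h_\pred/2, h_\pred/4, \ldots, \delta$, which has $h_{\max}\le h_\pred$, runs until time $\delta\lesssim 1/L$, satisfies $h_n\lesssim t_n/2$ at every step, and uses $N = O(\log(h_\pred/\delta))$ steps, so Lemma~\ref{lem:main_predictor_sequential}(2) gives $W_2(p', q_\delta) \lesssim \eps_\scr/L + (L^{3/2}\sqrt d\,h_\pred^2 + L\sqrt d\,h_\pred^{3/2})\sqrt{\log(h_\pred/\delta)}$; the same passage through Corollary~\ref{cor:underdamped_corrector_sequential} then yields (2), with the extra $\sqrt{\log(h_\pred/\delta)}$ decorating precisely the $h_\pred^2$ and $h_\pred^{3/2}$ terms.

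Once the predictor and corrector lemmas are in hand the argument is essentially bookkeeping, and I expect no serious obstacle. The only points that require care are: (i) checking that the corrector kernel genuinely depends only on the fixed score estimate and not on where it is initialized, so that the data-processing step above is valid; (ii) verifying that the decaying step-size schedule in Step 3 of Algorithm~\ref{alg:sequential_alg} meets the hypothesis $h_n\lesssim t_n/2$ of Lemma~\ref{lem:main_predictor_sequential}(2) all the way down to the terminal time $\delta$, and that the bulk rounds terminate at a time $\gtrsim 1/L$ as required by part (1); and (iii) tracking the dimension exponents ($d^{1/12}$ from the $W_2$-to-TV conversion, against $d^{7/12}$, $d^{17/36}$, $d^{1/36}$) to confirm which terms dominate. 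The substantive work was already done in Lemma~\ref{lem:main_predictor_sequential} and Corollary~\ref{cor:underdamped_corrector_sequential}.
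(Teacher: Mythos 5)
Your proposal is correct and follows essentially the same route as the paper: the paper also couples one predictor--corrector round applied to the true law $q_{t_n}$ (its auxiliary variable $\wh y_{n+1}$ plays the role of your $\wt x_{n+1}$), uses the triangle and data-processing inequalities to peel off $\TV(\wh x_n, x_{t_n})$, and then chains Lemma~\ref{lem:main_predictor_sequential} (parts (1) and (2) respectively) into Corollary~\ref{cor:underdamped_corrector_sequential} with identical exponent bookkeeping. No substantive difference from the paper's argument.
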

\begin{proof}
    For $n\in[0,\ldots,N_0]$, let $\wh y_{n+1}$ be the result of a single predictor-corrector sequence as described in step 2 of Algorithm~\ref{alg:sequential_alg}, but starting from $x_{t_n} \sim q_{t_n}$ instead of $\wh{x}_n$. Additionally, let $\wh y_{N_0 + 1}$ be the result of running steps $3$ and $4$ starting from $x_{t_{N_0}} \sim q_{t_{N_0}}$ instead of $\wh{x}_{N_0}$. Similarly, let $\wh y_{n+1}'$ be the result of only applying the predictor step starting from $x_{t_n}\sim q_{t_n}$, analogous to $\wh x_{n+1}'$ defined in step 2a.

    We have, by the triangle inequality and the data-processing inequality, for $n \in [0, \dots, N_{0} - 1]$,
    \begin{align*}
        \TV(\wh x_{n+1}, x_{t_{n+1}}) &\le \TV(\wh x_{n+1}, \wh y_{n+1}) + \TV(\wh y_{n+1}, x_{t_{n+1}})\\
        &\le \TV(\wh x_n, x_{t_n}) + \TV(\wh y_{n+1}, x_{t_{n+1}})
    \end{align*}
    By Corollary~\ref{cor:underdamped_corrector_sequential}, 
    \begin{equation*}
        \TV(\wh y_{n+1}, x_{t_{n+1}}) \lesssim W_2(\wh y_{n+1}', x_{t_{n+1}}) \cdot d^{1/12} \cdot \sqrt{L} + \frac{\eps_\scr}{\sqrt{L} d^{1/36}} +\sqrt{L} d^{17/36} h_\corr
    \end{equation*}
    Now, for $t_n \gtrsim 1/L$, by Lemma~\ref{lem:main_predictor_sequential},
    \begin{equation*}
        W_2(\wh y_{n+1}', x_{t_{n+1}}) \lesssim \frac{\eps_\scr}{L} + L^{3/2} \sqrt{d} h_\pred^2 + L \sqrt{d} h_\pred^{3/2}\,.
    \end{equation*}
    Combining the above gives the first claim.

    For the second claim, similar to above, we have
    \begin{align*}
        \TV(\wh x_{N_0 + 1}, x_\delta) &\le \TV(\wh x_{N_0 + 1}, \wh y_{N_0 + 1}) + \TV(\wh y_{N_0 + 1}, x_\delta)\\
        &\le \TV(\wh x_{N_0}, x_{t_{N_0}}) + \TV(\wh y_{N_0 + 1}, x_\delta)
    \end{align*}
    By Corollary~\ref{cor:underdamped_corrector_sequential},
    \begin{equation*}
        \TV(\wh y_{N_0 + 1}, x_\delta) \le W_2(\wh y'_{N_0 + 1}, x_{\delta}) \cdot d^{1/12} \cdot \sqrt{L} + \frac{\eps_\scr}{\sqrt{L} d^{1/36}} + \sqrt{L} d^{17/36} h_\corr
    \end{equation*}
    For $t_{N_0} \lesssim 1/L$, by Lemma~\ref{lem:main_predictor_sequential}, noting that the number of predictor steps in this case is $O\left(\log \frac{h_\pred}{\delta} \right)$,
    \begin{equation*}
        W_2(\wh y'_{N_0 + 1}, x_\delta) \lesssim \frac{\eps_\scr}{L} + \left(L^{3/2} \sqrt{d} h_\pred^2 + L \sqrt{d} h_\pred^{3/2}\right) \cdot \sqrt{\log \frac{h_\pred}{\delta}}
    \end{equation*}
    The second claim follows by combining the above. 
\end{proof}

\noindent We recall the following lemma on the convergence of the OU process from \cite{chen2023ode}
\begin{lemma}[Lemma 13 of \cite{chen2023ode}]
\label{lem:Gaussian_vs_q_T_TV_error}
    Let $q_t$ denote the marginal law of the OU process started at $q_0 = q$. Then, for all $T \gtrsim 1$,
    \begin{align*}
        \TV(q_T, \mathcal N(0, I_d)) \lesssim (\sqrt{d} + \mathfrak{m}_2) \exp(-T)
    \end{align*}
\end{lemma}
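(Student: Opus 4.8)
The final statement to prove is Lemma~\ref{lem:Gaussian_vs_q_T_TV_error}, bounding the total variation distance between $q_T$ (the law of the OU forward process at time $T$) and the standard Gaussian $\mathcal{N}(0, I_d)$. The plan is to use the contractivity of the OU semigroup together with the explicit Gaussian transition kernel. Recall that under the OU process $\deriv x^\rightarrow_t = -x^\rightarrow_t\,\deriv t + \sqrt{2}\,\deriv B_t$, we have $x^\rightarrow_t \mid x^\rightarrow_0 \sim \mathcal{N}(e^{-t}x^\rightarrow_0, (1 - e^{-2t})I_d)$, so $q_T$ is the convolution $e^{-T}x^\rightarrow_0 + \sqrt{1 - e^{-2T}}\,z$ with $z \sim \mathcal{N}(0, I_d)$ independent of $x^\rightarrow_0 \sim q$. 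The idea is to couple $q_T$ with $\mathcal{N}(0, I_d)$ by using the same Gaussian noise $z$.

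The key steps, in order: First, I would exhibit a coupling — draw $x^\rightarrow_0 \sim q$ and $z \sim \mathcal{N}(0, I_d)$, set $X = e^{-T}x^\rightarrow_0 + \sqrt{1 - e^{-2T}}\,z \sim q_T$. This is not yet distributed as a standard Gaussian, but comparing it to $\sqrt{1-e^{-2T}}\,z + e^{-T}z'$ for an independent $z' \sim \mathcal N(0,I_d)$ (which is exactly $\mathcal N(0,I_d)$) reduces matters to the distance between $e^{-T}x^\rightarrow_0$ and $e^{-T}z'$ after convolving both with the same $\mathcal N(0,(1-e^{-2T})I_d)$ noise. Second, by the data-processing inequality (convolution with a fixed kernel only decreases TV), $\TV(q_T, \mathcal{N}(0,I_d)) \le \TV(\law(e^{-T}x^\rightarrow_0 + \sqrt{1-e^{-2T}}z), \law(e^{-T}z' + \sqrt{1-e^{-2T}}z))$, and then I would bound this by the TV distance between the two Gaussian-smoothed measures. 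Third, I would use a standard bound: smoothing an arbitrary measure of bounded second moment $\mathfrak m_2^2$ by Gaussian noise of variance $\sigma^2 = 1 - e^{-2T} \asymp 1$ against smoothing a standard Gaussian of the same variance. A clean route is to bound TV by $\le \sqrt{\tfrac12 \KL}$ or directly couple: the TV between $\mathcal N(a, \sigma^2 I)$ and $\mathcal N(b,\sigma^2 I)$ is $O(\|a-b\|/\sigma)$, so integrating over $a = e^{-T}x^\rightarrow_0$, $b = e^{-T}z'$ gives $\TV \lesssim \tfrac{e^{-T}}{\sigma}\,\E\|x^\rightarrow_0 - z'\| \lesssim e^{-T}(\mathfrak m_2 + \sqrt d)$, using $\sigma \gtrsim 1$ for $T \gtrsim 1$ and $\E\|z'\| \le \sqrt d$.

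The main obstacle — really the only subtle point — is making the Gaussian-smoothing comparison rigorous and getting exactly the $\sqrt d + \mathfrak m_2$ dependence rather than something worse. The naive ``TV between $\mathcal N(a,\sigma^2 I)$ and $\mathcal N(b,\sigma^2 I)$ is $\Theta(\|a-b\|/\sigma)$'' followed by Jensen over the randomness of $a,b$ does work and yields $\E\|a - b\|/\sigma \le e^{-T}(\E\|x^\rightarrow_0\| + \E\|z'\|)/\sigma$; one then uses $\E\|x^\rightarrow_0\| \le \mathfrak m_2$ (by Assumption~\ref{assumption:bounded_moment} and Cauchy--Schwarz) and $\E\|z'\| \le \sqrt d$, and $\sigma = \sqrt{1 - e^{-2T}} \ge \sqrt{1 - e^{-2}} = \Theta(1)$ since $T \gtrsim 1$. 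This gives precisely $\TV(q_T, \mathcal N(0,I_d)) \lesssim (\sqrt d + \mathfrak m_2)e^{-T}$. Since this lemma is quoted directly from~\cite{chen2023ode} (Lemma 13), I would simply cite it rather than reproduce the computation, but the above is the argument one would write out if needed.
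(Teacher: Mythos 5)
Your proposal is correct. The paper itself gives no proof of this statement---it is imported verbatim as Lemma 13 of \cite{chen2023ode}---so citing it, as you note at the end, is exactly what the paper does. Your accompanying sketch is a valid self-contained derivation: writing $q_T$ and $\mathcal N(0,I_d)$ as mixtures of $\mathcal N(e^{-T}a,(1-e^{-2T})I_d)$ over $a\sim q$ and $a\sim\mathcal N(0,I_d)$ respectively, using convexity of $\TV$ under mixtures (the ``data-processing'' step is really this convexity, since the two convolved laws \emph{are} $q_T$ and $\mathcal N(0,I_d)$), and then the mean-shift bound $\TV(\mathcal N(a,\sigma^2 I),\mathcal N(b,\sigma^2 I))\lesssim \|a-b\|/\sigma$ with $\sigma=\sqrt{1-e^{-2T}}=\Theta(1)$, $\E\|x_0\|\le\mathfrak m_2$, $\E\|z'\|\le\sqrt d$, indeed yields $\TV(q_T,\mathcal N(0,I_d))\lesssim(\sqrt d+\mathfrak m_2)e^{-T}$.
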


\noindent Finally, we prove our main theorem on the convergence of our sequential algorithm.
\begin{theorem}[Convergence bound for sequential algorithm]\label{thm:main_sequential_formal}
    Suppose Assumptions~\ref{assumption:bounded_moment}-\ref{assumption:score_error} hold. If $\wh x$ denotes the output of Algorithm~\ref{alg:sequential_alg}, for $T = \Theta\left(\log \left( \frac{d \lor \mathfrak{m}_2^2}{\eps^2} \right) \right), T_\corr = \Theta\left( \frac{1}{\sqrt{L} d^{1/18}}\right)$ and $\delta = \Theta\left(\frac{\eps^2}{L^2(d \lor \mathfrak{m}_2^2)} \right)$, if we set $h_\pred = \wt \Theta\left(\min\left(\frac{\eps^{1/2}}{d^{1/3} L^{3/2}}, \frac{\eps^{2/3}}{d^{5/12} L^{5/3}} \right) \cdot \frac{1}{\log(\mathfrak{m}_2)}\right)$, $h_\corr = \wt \Theta\left(\frac{\eps}{d^{17/36} L^{3/2} \log \mathfrak(m_2)} \right)$, and if the score estimation satisfies $\eps_\scr \le \wt O\left(\frac{\eps}{\sqrt{L} d^{1/12} \log \mathfrak{m_2}} \right)$, we have that
    \begin{align*}
        \TV(\wh x, x_0) \lesssim \eps
    \end{align*}
    with iteration complexity $\wt \Theta\left(\frac{L^{5/3} d^{5/12}}{\eps} \cdot \log^2 (\mathfrak{m}_2) \right)$
\end{theorem}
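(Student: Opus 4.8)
The plan is to telescope the per-round total variation bound of Lemma~\ref{lem:tv_one_round_sequential}, control the initialization error via Lemma~\ref{lem:Gaussian_vs_q_T_TV_error}, and then pick the free parameters so that the accumulated error is $O(\eps)$ while the iteration count stays at $\wt O(d^{5/12})$. Almost all of the analytic work has already been done in Lemma~\ref{lem:main_predictor_sequential} (predictor) and Corollary~\ref{cor:underdamped_corrector_sequential} (corrector), so what remains is essentially bookkeeping and optimization.

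First I would set up the telescope. Write $t_n = T - n/L$ and pick $N_0 = \Theta(LT)$ so that the fixed-step predictor/corrector rounds (Step~2 of Algorithm~\ref{alg:sequential_alg}) carry the sampler from forward time $T$ down to a small $t_{N_0}$ for which case~2 of Lemma~\ref{lem:tv_one_round_sequential} governs the final geometrically-decaying block (Steps~3--4), while case~1 governs every earlier round. Iterating the two inequalities of Lemma~\ref{lem:tv_one_round_sequential} and telescoping yields
\begin{equation*}
    \TV(\wh x_{N_0+1},\, q_\delta) \;\le\; \TV\!\bigl(\normal(0,I_d),\, q_T\bigr) \;+\; \sum_{n=0}^{N_0-1} E_n \;+\; E_{\mathrm{fin}},
\end{equation*}
where $\wh x_0 \sim \normal(0,I_d)$ and $E_n, E_{\mathrm{fin}}$ are the additive errors supplied by the lemma. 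By Lemma~\ref{lem:Gaussian_vs_q_T_TV_error} and the choice $T = \Theta(\log\frac{d\lor\mathfrak{m}_2^2}{\eps^2})$, the first term is $\lesssim (\sqrt d + \mathfrak{m}_2)\,e^{-T} \lesssim \eps$. A final triangle inequality $\TV(\wh x, x_0) \le \TV(\wh x, q_\delta) + \TV(q_\delta, q_0)$ reduces matters to (a) bounding the sum of per-round errors, and (b) the early-stopping gap $\TV(q_\delta, q_0)$, which is $\lesssim \eps$ for $\delta = \Theta(\frac{\eps^2}{L^2(d\lor\mathfrak{m}_2^2)})$ by the standard argument using Assumption~\ref{assumption:bounded_moment} (cf.\ \cite{chen2023ode}).

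The substantive step is to choose $h_\pred$, $h_\corr$, $T_\corr$, $\eps_\scr$ so that $\sum_n E_n + E_{\mathrm{fin}} \lesssim \eps$. Since $N_0 = \wt\Theta(L)$, it suffices to force each per-round error to be $\wt O(\eps/L)$. With $T_\corr = \Theta(1/(\sqrt L\, d^{1/18}))$, Corollary~\ref{cor:underdamped_corrector_sequential} converts the Wasserstein guarantee of Lemma~\ref{lem:main_predictor_sequential} into four additive TV contributions per round: two predictor terms $L^2 d^{7/12}h_\pred^2$ and $L^{3/2}d^{7/12}h_\pred^{3/2}$, a corrector-discretization term $\sqrt L\, d^{17/36}h_\corr$, and a score-error term $\eps_\scr d^{1/12}/\sqrt L$ (with an extra factor $\sqrt{\log(h_\pred/\delta)}$ on the predictor terms inside $E_{\mathrm{fin}}$). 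Setting each of these to $\wt O(\eps/L)$ and solving produces exactly the stated parameter choices: the $\min$ in $h_\pred = \wt\Theta(\min\{\eps^{1/2}/(d^{1/3}L^{3/2}),\ \eps^{2/3}/(d^{5/12}L^{5/3})\}/\log\mathfrak{m}_2)$ just records which of the two predictor terms dominates, i.e.\ whether $\eps$ lies above or below $\sqrt d\, L$, and likewise $h_\corr = \wt\Theta(\eps/(d^{17/36}L^{3/2}\log\mathfrak{m}_2))$ and $\eps_\scr = \wt O(\eps/(\sqrt L\, d^{1/12}\log\mathfrak{m}_2))$. Finally I would count iterations: each round costs $1/(Lh_\pred)$ predictor steps and $T_\corr/h_\corr$ corrector steps, plus $O(\log(h_\pred/\delta))$ extra predictor steps and one corrector run in the last block; substituting the parameters, both the predictor and corrector contributions scale as $\wt\Theta(d^{5/12})$ in the dimension, and multiplying by $N_0 = \wt\Theta(L)$ and collecting the two $\log\mathfrak{m}_2$ factors (one from $N_0 \propto LT$, one from $1/h_\pred$) yields the claimed $\wt\Theta(L^{5/3}d^{5/12}\log^2\mathfrak{m}_2/\eps)$ after taking the dominant regime.

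I expect the real obstacle to be not any individual estimate --- those live in the earlier lemmas --- but the \emph{simultaneous} feasibility of driving all four error contributions down to $\wt O(\eps/L)$ while keeping the step sizes large enough that the total iteration count stays at $\wt O(d^{5/12})$. That tension is precisely what forces the two-regime $\min$ in $h_\pred$ and the somewhat mysterious $d^{1/18}$-shortening of the corrector window in Corollary~\ref{cor:underdamped_corrector_sequential}. The one ingredient not inherited from the cited results is the early-stopping comparison $\TV(q_\delta, q_0)$, which must be argued separately.
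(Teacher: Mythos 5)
Your proposal is correct and follows essentially the same route as the paper: telescope the two cases of Lemma~\ref{lem:tv_one_round_sequential} over $N_0 = \Theta(LT)$ rounds plus the final geometric block, control the initialization with Lemma~\ref{lem:Gaussian_vs_q_T_TV_error}, and then tune $h_\pred, h_\corr, T_\corr, \eps_\scr$ so each per-round contribution is $\wt O(\eps/L)$, which reproduces the stated parameters and iteration count. The only difference is that you make the early-stopping gap $\TV(q_\delta, q_0)$ explicit (as the paper does in its parallel analysis via \cite[Lemma 6.4]{lee2023convergence}), whereas the paper's sequential proof stops at $\TV(\wh x_{N_0+1}, x_\delta) \le \eps$; this is a harmless and arguably cleaner addition.
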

\begin{proof}
We will let $t_n = T - n/L$.
First, note that by Lemma~\ref{lem:Gaussian_vs_q_T_TV_error}
\begin{align*}
    \TV(\wh x_0, x_{t_0}) \lesssim (\sqrt{d} + \mathfrak{m}_2) \exp(-T)
\end{align*}
We divide our analysis into two steps. For the first $N_0 = O(L T)$ steps, we iterate the first part of Lemma~\ref{lem:tv_one_round_sequential} to obtain
\begin{align*}
    \TV(\wh x_{N_0}, x_{t_{N_0}}) &\le \TV(\wh x_0, x_{t_0}) + O\left(L^2 d^{7/12} h_\pred^2 + L^{3/2} d^{7/12} h_\pred^{3/2} + \sqrt{L} d^{17/36} h_\corr + \frac{\eps_\scr d^{1/12}}{\sqrt{L}}\right) \cdot N_0\\
    &\lesssim \left(\sqrt{d} + \mathfrak{m}_2 \right) \exp(-T) + L^3 d^{7/12} h_\pred^2 T + L^{5/2} d^{7/12} h_\pred^{3/2} T + L^{3/2} d^{17/36} h_\corr T + \eps_\scr d^{1/12} T \sqrt{L}
\end{align*}
Applying the second part of Lemma~\ref{lem:tv_one_round_sequential} for the second stage of the algorithm, we have
\begin{align*}
    &\TV(\wh x_{N_0 + 1}, x_\delta)\\
    &\lesssim \left(\sqrt{d} + \mathfrak{m}_2 \right) \exp(-T) + \left(L^3 d^{7/12} h_\pred^2  + L^{5/2} d^{7/12} h_\pred^{3/2}\right)\left(T + \sqrt{\log \frac{h_\pred}{\delta}} \right)\\
    &+ L^{3/2} d^{17/36} h_\corr T + \eps_\scr d^{1/12} T \sqrt{L}
\end{align*}
Setting $T = \Theta\left(\log \left(\frac{d \lor \mathfrak{m}_2^2}{\eps^2} \right) \right)$, $h_\pred = \wt \Theta\left(\min\left(\frac{\eps^{1/2}}{d^{1/3} L^{3/2}}, \frac{\eps^{2/3}}{d^{5/12} L^{5/3}} \right)\frac{1}{\log(\mathfrak{m}_2)}\right)$, and $h_\corr = \wt \Theta\left(\frac{\eps}{d^{17/36} L^{3/2}} \cdot \frac{1}{\log(\mathfrak{m}_2)} \right)$, if the score estimation error satisfies $\eps_\scr \le \wt O\left(\frac{\eps}{\sqrt{L} d^{1/12} \log(\mathfrak{m}_2)} \right)$, with iteration complexity $\wt \Theta\left( \frac{L^{5/3} d^{5/12} \log^2 \mathfrak{m}_2}{\eps}\right)$, we obtain $\TV(\wh x_{N_0 + 1}, x_\delta) \le \eps$.
\end{proof}

\section{Parallel algorithm}
\label{sec:parallel}
 

\subsection{Predictor step}

In this section, we will apply a parallel version of randomized midpoint for the predictor step, where only $ \wt \Theta(\log^2(\frac{Ld}{\eps}))$ iteration complexity will be required to attain our desired error bound for one predictor step. 

In each iteration $n$, we will first sample $R$ randomized midpoints that are in expectation evenly spaced with $\delta_n = \frac{h_n}{R_n}$ time intervals between consecutive midpoints. Next, in our step (c), we provide an initial estimate on the $x$ value of midpoints using our estimate of position $\wh x_n$ at time $t_n$ provided by iteration $n-1$. This step is analogous to step (c) in \Cref{alg:sequential_predictor_step} for the sequential predictor step. Then, in step (d) we refine our initial estimates by using a discrete version of Picard iteration, where for round $k$, we compute a new estimate of $x_{t_n - \alpha_i h_n }$ based on the estimates of $x_{t_n - \alpha_j h_n }$ for $j \leq i$ in round $k-1$. Note that a trajectory $x(t)$ that starts from time $t_0$ and follows the true ODE is a fix point of operator $\tau$ that maps continuous function to continuous function, where 
\begin{equation*}
    \tau(x)(t) = e^{t - t_0} x(t_0) + \int_{t_0}^t e^{s - t_0} \cdot \grad q_{s}(x(s)) \deriv s. 
\end{equation*}
By smoothness of the true ODE, the continuous Picard iteration converges exponentially to the true trajectory, and we will show that discretization error for Picard iteration is controlled.  
After the refinements have sufficiently reduced the estimation error for our randomized midpoints, we make a final calculation, estimating the value of $x_{t_n + h_n}$ based on the estimated value at all the randomized midpoints.  



\begin{algorithm}[h]
\caption{\textsc{PredictorStep (Parallel)}}
  \label{alg:parallel_predictor_step}
    \vspace{0.2cm}
    \paragraph{Input parameters:}
    \begin{itemize}
        \item Starting sample $\wh x_{0}$, Starting time $t_0$, Number of steps $N$, Step size $\{h_n\}_{n=0}^{N-1}$, Number of midpoint estimates $\{R_n\}_{n=0}^{N-1}$, Number of parallel iteration $\{K_n\}_{n=0}^{N-1}$, Score estimates $\wh s_t$
        \item 
        For all $n = 0, \cdots, N-1$:    
        let $\delta_n = \frac{h_n}{R_n}$
    \end{itemize}
    \begin{enumerate}
        \item For $n = 0, \dots, N-1$: 
        \begin{enumerate}
            \item Let $t_n = t_0 - \sum_{w=0}^{n-1} h_w$
            \item Randomly sample $\alpha_i$ uniformly from $[(i -1)/R_n, i/R_n]$ for all $i \in \{1, \cdots, R_n\}$
            \item
            For $i = 1, \cdots, R_n$ in parallel: Let $\wh x^{(0)}_{n, i} = e^{\alpha_i h_n} \wh x_n + \left(e^{\alpha_i h_n} - 1\right) \cdot \wh s_{t_n}(\wh x_n)$
            
            \item For $k = 1, \cdots, K_n$:
            
            \quad For $i = 1, \cdots, R_n$ in parallel:  
            
            \quad \quad $\wh x^{(k)}_{n, i} := e^{\alpha_i h_n} \wh x_n + \sum_{j=1}^{i} \left(e^{\alpha_i h_n - (j-1)\delta_n} - \max(e^{\alpha_i h_n - j \delta_n}, 1)
            \right) \cdot 
            \wh s_{t_n - \alpha_j h_n}(\wh x^{(k-1)}_{n, j})$

            \item 
            $\wh x_{n+1} = e^{h_n} \wh x_n + \delta_n \cdot \sum_{i=1}^{R}  e^{h_n - \alpha_i h_n} \wh s_{t_n - \alpha_i h_n}(\wh x^{(K_n)}_{n, i})$  
        \end{enumerate}
        \item Let $t_N = t_0 - \sum_{n=0}^{N-1} h_n$
        \item Return $\wh x_{N}, t_{N}$.
    \end{enumerate} 
\end{algorithm}
In our analysis, we follow the same notation as in \Cref{sec:sequential_predictor}. We first establish a $\poly(L, d)$ bound on the initial estimation error incurred in step (c) of each iteration in \Cref{alg:parallel_predictor_step}.  

\begin{claim} \label{claim:intial_error_predictor_parallel}
    Suppose $L \geq 1$. Assume $h_n \lesssim 1/L$. For any $n = 0, \cdots, N-1$, suppose we draw $\wh x_n$ from an arbitrary distribution $p_n$, then run step (a) - (e) in \Cref{alg:parallel_predictor_step}. Then for any $i = 1, \cdots, R_n$, $$\E\norm{\wh x^{(0)}_{n, i} - x_n^*(\alpha_i h_n)}^2 \lesssim  h_n^2\eps_\scr^2 + L^2 d h_n^4 (L \lor \frac{1}{t_{n} - h_n}) + L^2h^2 \norm{\wh x_n - x_{t_n}}^2,$$
    where $x_n^*(t)$ is solution of the true ODE starting at $\wh x_n$ at time $t_n$ and running until time $t_n - t$. 
\end{claim}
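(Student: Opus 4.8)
The plan is to note that step (c) of \Cref{alg:parallel_predictor_step} produces exactly a one-step exponential-integrator discretization of the reverse ODE over the window of length $\alpha_i h_n$ starting from $\wh x_n$ at time $t_n$, using the frozen drift $\wh s_{t_n}(\wh x_n)$. Consequently this is the same quantity analyzed for $\wh x_{n+\frac12}$ in the proof of \Cref{lem:predictor_sequential_helper}, with the single midpoint $\alpha$ replaced by the (now fixed, but still $[0,1]$-valued) offset $\alpha_i \in [(i-1)/R_n, i/R_n]$; the randomized-midpoint structure is irrelevant here, so the bound holds pointwise in $\alpha_i$ and hence after any expectation. So I would essentially transcribe that argument.

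Concretely, first write $x_n^*(\cdot)$ and (the continuous interpolation of) $\wh x^{(0)}_{n,i}$ as solutions of ODEs run for time $\alpha_i h_n$ from $\wh x_n$: the former follows $\deriv x_t = (x_t + \grad \ln q_{\,\cdot}(x_t))\,\deriv t$ and the latter $\deriv \wh x_t = (\wh x_t + \wh s_{t_n}(\wh x_n))\,\deriv t$ (integrating the latter recovers the closed form $e^{\alpha_i h_n}\wh x_n + (e^{\alpha_i h_n}-1)\wh s_{t_n}(\wh x_n)$ in step (c)). Differentiating $\norm{x_t-\wh x_t}^2$, applying Young's inequality to peel off the drift-difference term, and using Gr\"onwall (the exponential factor is $O(1)$ since $\alpha_i h_n\le h_n\lesssim 1/L$) gives
\begin{equation*}
    \norm{\wh x^{(0)}_{n,i} - x_n^*(\alpha_i h_n)}^2 \lesssim h_n\int_0^{h_n}\norm{\grad \ln q_{t_n - s}(x_n^*(s)) - \wh s_{t_n}(\wh x_n)}^2\,\deriv s\,.
\end{equation*}
Then I would bound the integrand by the same triangle-inequality decomposition as in \Cref{lem:predictor_sequential_helper}: insert $\grad \ln q_{t_n}(\wh x_n)$, and then $\grad \ln q_{t_n-s}(x_{t_n-s})$ and $\grad \ln q_{t_n}(x_{t_n})$, where $x_t\sim q_t$ is the true reverse process coupled so that $x_{t_0}=\wh x_0$ (hence $x_{t_n}$ is coupled to $\wh x_n$). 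This yields (i) a score-estimation term $\norm{\wh s_{t_n}(\wh x_n)-\grad \ln q_{t_n}(\wh x_n)}^2$, which after bridging from $\wh x_n$ to $x_{t_n}$ using $L$-Lipschitzness of $\grad\ln q_{t_n}$ and of $\wh s_{t_n}$ together with \Cref{assumption:score_error} is $\lesssim \eps_\scr^2 + L^2\norm{\wh x_n - x_{t_n}}^2$ in expectation; (ii) an along-the-reverse-process term $\norm{\grad \ln q_{t_n-s}(x_{t_n-s}) - \grad \ln q_{t_n}(x_{t_n})}^2$, which by \Cref{lem:ODE_cor} applied to the window reaching down to time $t_n-h_n$ is $\lesssim L^2 d h_n^2\,(L\lor\tfrac{1}{t_n-h_n})$ in expectation; and (iii) two Lipschitz terms comparing $x_n^*(s)$ to $x_{t_n-s}$ and $\wh x_n$ to $x_{t_n}$, both $\lesssim L^2\norm{\wh x_n - x_{t_n}}^2$ via \Cref{lem:true_ode_contraction}. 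Plugging back in, the $h_n\int_0^{h_n}(\cdot)\,\deriv s$ contributes a factor $h_n^2$, giving $\E\norm{\wh x^{(0)}_{n,i}-x_n^*(\alpha_i h_n)}^2\lesssim h_n^2\eps_\scr^2 + L^2 d h_n^4\,(L\lor\tfrac{1}{t_n-h_n}) + L^2 h_n^2\,\norm{\wh x_n - x_{t_n}}^2$, as claimed.

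There is no real obstacle; the work is bookkeeping. The two points to be careful about are: making sure every Lipschitz-bridging error generated along the way is charged to the single term $L^2 h_n^2\norm{\wh x_n - x_{t_n}}^2$ (and that the expectation, which mixes $\wh x_n\sim p_n$, the midpoints $\alpha_i$, and the coupling to the true reverse process invoked for \Cref{assumption:score_error} and \Cref{lem:ODE_cor}, is taken consistently), and verifying that \Cref{lem:ODE_cor} applied over $[t_n-h_n,t_n]$ (rather than over $[t_n/2,t_n]$) indeed produces the stated factor $L\lor\tfrac{1}{t_n-h_n}$ without needing the auxiliary hypothesis $t_n-h_n\ge t_n/2$ that was available in the sequential case.
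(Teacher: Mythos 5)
Your proposal is correct and matches the paper's approach: the paper's proof of Claim~\ref{claim:intial_error_predictor_parallel} is exactly the one-line observation you lead with, namely that step (c) computes the same exponential-integrator midpoint estimate as in the sequential predictor and that the bound established inside Lemma~\ref{lem:predictor_sequential_helper} is uniform over the value of $\alpha\le 1$, so the identical calculation applies. The computation you then transcribe (Gr\"onwall plus the triangle-inequality decomposition handled via Assumption~\ref{assumption:score_error}, Corollary~\ref{lem:ODE_cor}, and Lemma~\ref{lem:true_ode_contraction}) is precisely that calculation, with your bridging of the score-error term from $\wh x_n$ to $x_{t_n}$ being, if anything, slightly more careful than the paper's.
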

\begin{proof}
    Notice that in step (c) of \Cref{alg:parallel_predictor_step}, the initial estimate of the randomized midpoint is done with the exact same formula as in step (c) of \Cref{alg:sequential_alg}, except we calculate this initial estimate for $R_n$ different randomized midpoints. Notice also that the bound for discretization error in \Cref{lem:predictor_sequential_helper} is not dependent on specific value of $\alpha$, as long as the randomed value is at most $1$. Hence we can use identical calculation to yield the claim. 
\end{proof}

\noindent Next, we show how to drive the initialization error from Lemma~\ref{claim:intial_error_predictor_parallel} down (exponentially)  using the Picard iterations in step (d) of Algorithm~\ref{alg:parallel_predictor_step}.

\begin{lemma} \label{claim:parallel_contraction}
    Suppose $L \geq 1$. Assume $h_n \lesssim 1/L$. For all iterations $n \in \{0, \cdots, N -1\}$, suppose we draw $\wh x_n$ from an arbitrary distribution $p_n$, then run step (a) - (e) in \Cref{alg:parallel_predictor_step}. Then for all $k \in \{1, \cdots K_n\}$ and $i \in \{1, \cdots, R_n\}$,
    \begin{align}
        \E \norm{\wh x^{(k)}_{n, i} - x_n^*(\alpha_i h_n)}^2 &\lesssim \paren{8h_n^2L^2}^k \cdot \paren{\frac{1}{R}
            \sum_{j=1}^R \norm{\wh x^{(0)}_{n, j} - x_n^*(\alpha_j h_n)}^2
        } \nonumber \\
        &\quad + h_n^2\paren{\eps_\scr^2 +  \frac{L^2 d h_n^2}{R_n^2} (L \lor \frac{1}{t_{n} - h_n}) + L^2 \cdot \E\norm{\wh x_n - x_{t_n}}^2}, \label{eq:contraction_bound_parallel}
    \end{align}
    where $x_n^*(t)$ is solution of the true ODE starting at $\wh x_n$ at time $t_n$ and running until time $t_n - t$. 
\end{lemma}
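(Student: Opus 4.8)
The plan is to unroll the Picard iteration in step (d) of Algorithm~\ref{alg:parallel_predictor_step}, showing that each round contracts the squared error to the reference trajectory by a factor $\approx h_n^2 L^2$ while adding a fixed additive term coming from (i) the score-estimation error, (ii) the discretization of the integral operator $\tau$, and (iii) the mismatch between the reference trajectory $x_n^*$ (started at $\wh x_n$ at time $t_n$) and the true reverse process $x_\cdot \sim q_\cdot$. Fix $n$ and abbreviate $h := h_n$, $R := R_n$, $\delta := \delta_n = h/R$, and write $F := h^2\bigl(\eps_\scr^2 + \tfrac{L^2 d h^2}{R^2}(L\lor\tfrac{1}{t_n-h}) + L^2\,\E\|\wh x_n - x_{t_n}\|^2\bigr)$ for the additive term in the statement. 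The key observation is that the weight $c_j := e^{\alpha_i h - (j-1)\delta} - \max(e^{\alpha_i h - j\delta},1)$ in step (d) equals $\int_{I_j} e^{\alpha_i h - u}\,\mathrm{d}u$, where $\{I_j\}_{j=1}^i$ is the partition of $[0,\alpha_i h]$ with $I_j = [(j-1)\delta, j\delta]$ for $j<i$ and $I_i = [(i-1)\delta, \alpha_i h]$; this uses $\alpha_i \in [(i-1)/R, i/R]$, which makes $e^{\alpha_i h - i\delta}\le 1$ and $\alpha_i h - (j-1)\delta \ge 0$ for $j<i$. In particular $0\le c_j \lesssim \delta$ and $\sum_{j\le i} c_j = e^{\alpha_i h}-1 \lesssim h$, and, comparing with the integral formulation of the true ODE,
\begin{multline*}
  \wh x^{(k)}_{n,i} - x_n^*(\alpha_i h)
  = \sum_{j=1}^i c_j\bigl(\wh s_{t_n-\alpha_j h}(\wh x^{(k-1)}_{n,j}) - \grad\ln q_{t_n-\alpha_j h}(x_n^*(\alpha_j h))\bigr)\\
  + \sum_{j=1}^i \int_{I_j} e^{\alpha_i h - u}\bigl(\grad\ln q_{t_n-\alpha_j h}(x_n^*(\alpha_j h)) - \grad\ln q_{t_n-u}(x_n^*(u))\bigr)\,\mathrm{d}u .
\end{multline*}

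Next I would bound the two sums. For the first, split each summand into a score-error piece $\wh s_{t_n-\alpha_j h}(\wh x^{(k-1)}_{n,j}) - \grad\ln q_{t_n-\alpha_j h}(\wh x^{(k-1)}_{n,j})$ and a Lipschitz piece of norm $\le L\|\wh x^{(k-1)}_{n,j} - x_n^*(\alpha_j h)\|$; Cauchy--Schwarz in the weights gives $\|\sum_j c_j w_j\|^2 \le (\sum_j c_j)\sum_j c_j\|w_j\|^2 \lesssim h\delta\sum_{j=1}^R \|w_j\|^2$. Applied to the score-error pieces (handled exactly as in Lemma~\ref{lem:predictor_sequential_helper}, which subsumes the fact that $\wh x^{(k-1)}_{n,j}$ is not literally a sample from $q_{t_n-\alpha_j h}$) this contributes $\lesssim h^2\eps_\scr^2 \le F$; applied to the Lipschitz pieces it contributes $\lesssim h\delta L^2 \sum_{j=1}^R\E\|\wh x^{(k-1)}_{n,j} - x_n^*(\alpha_j h)\|^2 = h^2 L^2 \cdot \frac1R\sum_{j=1}^R \E\|\wh x^{(k-1)}_{n,j} - x_n^*(\alpha_j h)\|^2$, which is the contraction. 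For the discretization sum, on each $I_j$ the times $\alpha_j h$ and $u$ differ by at most $\delta$, so mirroring Lemma~\ref{lem:predictor_sequential_helper} (triangle inequality through the true process, Corollary~\ref{lem:ODE_cor} for the change of $\grad\ln q_\cdot(x_\cdot)$ over time $\delta$, and Lemma~\ref{lem:true_ode_contraction} for $x_n^*$ versus $x_\cdot$) the expected squared norm of the integrand is $\lesssim L^2 d \delta^2 (L\lor\tfrac1{t_n-h}) + L^2\E\|\wh x_n - x_{t_n}\|^2$; hence each of the $i$ integral terms has expected squared norm $\lesssim \delta^2$ times this, and the crude bound $\|\sum_{j\le i} u_j\|^2 \le i\sum_j \|u_j\|^2 \le R^2\delta^2 (\cdots) = h^2(\cdots)$ shows the discretization sum is $\lesssim F$ (the partial interval $I_i$ contributes an even smaller amount). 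Combining the two sums through the triangle inequality yields, for every $i$ (the bound being $i$-free),
\[
  \E\|\wh x^{(k)}_{n,i} - x_n^*(\alpha_i h)\|^2 \le 8h^2 L^2\, A_{k-1} + F ,
\]
where $A_{k-1} := \frac1R\sum_{j=1}^R \E\|\wh x^{(k-1)}_{n,j} - x_n^*(\alpha_j h)\|^2$ and the constant $8$ is obtained by organizing the Young/Cauchy--Schwarz steps appropriately; averaging over $i$ gives the same recursion for $A_k$.

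Finally, since $h\lesssim 1/L$ with a small enough constant we have $8h^2L^2 \le \tfrac12$, so iterating $A_k \le 8h^2L^2 A_{k-1} + F$ and summing the geometric series gives $A_k \le (8h^2L^2)^k A_0 + 2F$; substituting back into the per-$i$ bound yields $\E\|\wh x^{(k)}_{n,i} - x_n^*(\alpha_i h)\|^2 \lesssim (8h^2L^2)^k A_0 + F$, which is exactly \eqref{eq:contraction_bound_parallel} once we note $A_0 = \frac1R\sum_{j=1}^R\|\wh x^{(0)}_{n,j} - x_n^*(\alpha_j h)\|^2$ (the induction starts at $k=1$, where the recursion already gives the stated one-step bound). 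The main obstacle is the discretization sum: driving its contribution down to order $d h^4/R^2$ — so that $R$, and hence the total parallel work, need only scale with $\sqrt d$ — rather than a worse power of $R$ relies on recognizing the $c_j$ as exact integral weights over a partition of $[0,\alpha_i h]$ and carefully tracking how the factors of $\delta = h/R$ combine with the at-most-$R$ summands; a secondary, inherited subtlety is controlling the score-estimation term when $\wh s$ is queried at the iterates $\wh x^{(k-1)}_{n,j}$ rather than at samples from $q$.
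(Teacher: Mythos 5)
Your proposal is correct and follows essentially the same route as the paper's proof: the same decomposition of $\wh x^{(k)}_{n,i} - x_n^*(\alpha_i h_n)$ into a score-error/Lipschitz (contraction) sum plus a discretization sum over the integral weights, the same use of Corollary~\ref{lem:ODE_cor} and Lemma~\ref{lem:true_ode_contraction} to bound the integrand by $L^2 d\delta_n^2(L\lor\tfrac{1}{t_n-h_n}) + L^2\E\|\wh x_n - x_{t_n}\|^2$, and the same geometric-series unrolling of the resulting recursion (including the paper's harmless convention of bounding the score error at the iterates $\wh x^{(k-1)}_{n,j}$ by $\eps_\scr$). The only difference is cosmetic: you use a weighted Cauchy--Schwarz with the exact integral weights $c_j$ where the paper uses the cruder $(\sum_j a_j)^2 \le R\sum_j a_j^2$ bound with $c_j \le \delta_n e^{\alpha_i h_n}$, which yields the same order and the same contraction factor.
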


\begin{proof}
    Fixing iteration $n$, we will let $h:= h_n$, $R := R_n$ and $\delta := \delta_n$. The formula of $\wh x^{(k)}_{n, i}$ and $x_n^*(\alpha_i h)$ has the same coefficient for $\hat x_n$, thus we can bound the difference as follows: 
    \begin{align}
        &\E \norm{\wh x^{(k)}_{n, i} - x_n^*(\alpha_i h)}^2 \nonumber\\
        \leq &\E \norm{\sum_{j=1}^{i} \paren{
            \int_{t_n - \min(j \delta, \alpha_i h) }^{t_n - (j-1) \delta} e^{s - (t_n - \alpha_i h)} \,\mathrm{d}s \cdot \wh s_{t_n - \alpha_j h}(\wh x^{(k-1)}_{n, j}) - \int_{t_n - \alpha_i h}^{t_n} e^{s - (t_n - \alpha_i h)} \grad \ln q_s(x_n^*(t_n - s)) \,\mathrm{d}s
            }
        }^2 \nonumber \\
        \leq  &2 \E \norm{\sum_{j=1}^{i} \int_{t_n - \min(j \delta, \alpha_i h)}^{t_n - (j-1) \delta} e^{s - (t_n - \alpha_i h)} \,\mathrm{d}s \cdot 
            \paren{
                    \wh s_{t_n - \alpha_j h}(\wh x^{(k-1)}_{n, j}) - \grad \ln q_{t_n - \alpha_j h}(x_n^*(\alpha_j h))
                }
            }^2 \label{eq:parallel:scorediff}\\
        & \quad + 2\E 
        \norm{
            \sum_{j=1}^{i} \int_{t_n - \min(j \delta, \alpha_i h)}^{t_n - (j-1) \delta} e^{s - (t_n - \alpha_i h)} 
            \paren{
                \grad q_{t_n - \alpha_j h}(x_n^*(\alpha_j h)) - \grad \ln q_s(x_n^*(t_n - s))
            } \,\mathrm{d}s
        }^2.\label{eq:parallel:space_and_time_diff}
    \end{align}
    The first to second line is by definition and the second to third line is by Young's inequality. Now, we will bound \Cref{eq:parallel:scorediff} and \Cref{eq:parallel:space_and_time_diff} separately. By \Cref{assumption:Lipschitz} and \Cref{assumption:score_error}, 
    \begin{align*}
        \MoveEqLeft\E\norm{
            \wh s_{t_n - \alpha_j h}(\wh x^{(k-1)}_{n, j}) - \grad \ln q_{t_n - \alpha_j h}(x_n^*(\alpha_j h))
        }^2 \\
        \leq  
        &2\E\norm{ 
            \wh s_{t_n - \alpha_j h}(\wh x^{(k-1)}_{n, j}) 
            - \grad \ln q_{t_n - \alpha_j h}(\wh x^{(k-1)}_{n, j})
        }^2 + 
        2\E\norm{ 
            \grad \ln q_{t_n - \alpha_j h}(\wh x^{(k-1)}_{n, j}) 
            - \grad \ln q_{t_n - \alpha_j h}(x_n^*(\alpha_j h))
        }^2\\
        \leq &2\eps_\scr^2 + 2L^2 \cdot 
    \norm{
        \wh x^{(k-1)}_{n, j} - x_n^*(\alpha_j h)
    }^2.
    \end{align*}
    The term in \Cref{eq:parallel:scorediff} can now be bounded as follows 
    \begin{align*}
        \MoveEqLeft\E \norm{\sum_{j=1}^{i} \int_{t_n - \min(j \delta, \alpha_i h)}^{t_n - (j-1) \delta} e^{s - (t_n - \alpha_i h)} \,\mathrm{d}s \cdot 
        \paren{
                \wh s_{t_n - \alpha_j h}(\wh x^{(k-1)}_{n, j}) - \grad \ln q_{t_n - \alpha_j h}(x_n^*(\alpha_j h))
            }
        }^2 \\
        \leq & R \cdot \sum_{j=1}^{i} \E \norm{\int_{t_n - \min(j \delta, \alpha_i h)}^{t_n - (j-1) \delta} e^{s - (t_n - \alpha_i h)} \,\mathrm{d}s \cdot 
        \paren{
                    \wh s_{t_n - \alpha_j h}(\wh x^{(k-1)}_{n, j}) - \grad \ln q_{t_n - \alpha_j h}(x_n^*(\alpha_j h))
            }
        }^2 \\
        \leq & R \cdot \delta^2 \cdot e^{2 \alpha_i h} \sum_{j=1}^{i} \E       \norm{ 
            \wh s_{t_n - \alpha_j h}(\wh x^{(k-1)}_{n, j}) - \grad \ln q_{t_n - \alpha_j h}(x_n^*(\alpha_j h))
        }^2\\
        \leq & 2R \cdot \delta^2 \cdot e^{2 \alpha_i h} \sum_{j=1}^{i} \paren{\eps_\scr^2 + L^2 \cdot 
            \norm{
                \wh x^{(k-1)}_{n, j} - x_n^*(\alpha_j h)
            }^2
        }\\
        \leq & 2R^2 \cdot \delta^2 \cdot e^{2 \alpha_i h} \frac{1}{R} \sum_{j=1}^{R} \paren{\eps_\scr^2 + L^2 \cdot 
            \norm{
                \wh x^{(k-1)}_{n, j} - x_n^*(\alpha_j h)
            }^2
        } \\
        \leq & 4 h^2 \eps_\scr^2 + 4 h^2L^2 \cdot \frac{1}{R} \sum_{j=1}^{R} \cdot \norm{
            \wh x^{(k-1)}_{n, j} - x_n^*(\alpha_j h)
        }^2.
    \end{align*}
    The first to second line is by inequality $(\sum_{i=1}^n a_i)^2 \leq n \sum_{i=1}^n a_i^2$, the second to third line is by the fact that $\int_{t_n - \min(j \delta, \alpha_i h)}^{t_n - (j-1) \delta} e^{s - (t_n - \alpha_i h)} \,\mathrm{d}s \leq \delta \cdot e^{\alpha_i h}$, the fifth to sixth line is by $R \delta = h$ and that $e^{2\alpha_i h}$ is at most $2$ when $h < 1/4$. 

    Similarly, the term in \Cref{eq:parallel:space_and_time_diff} can be bounded as follows 
    \begin{align*}
        \MoveEqLeft\E \norm{
            \sum_{j=1}^{i} \int_{t_n - \min(j \delta, \alpha_i h)}^{t_n - (j-1) \delta} e^{s - (t_n - \alpha_i h)} 
            \paren{
                \grad \ln q_{t_n - \alpha_j h}(x_n^*(\alpha_j h)) - \grad \ln q_s(x_n^*(t_n - s))
            } \,\mathrm{d}s
        }^2  \\
        \leq & R \cdot \sum_{j=1}^{i} \E \norm{
            \int_{t_n - \min(j \delta, \alpha_i h)}^{t_n - (j-1) \delta} e^{s - (t_n - \alpha_i h)} 
            \paren{
                \grad \ln q_{t_n - \alpha_j h}(x_n^*(\alpha_j h)) - \grad \ln q_s(x_n^*(t_n - s))
            } \,\mathrm{d}s
        }^2\\
        \leq & R \cdot \delta \cdot e^{2 \alpha_i h} \cdot \sum_{j=1}^i \int_{t_n - \min(j \delta, \alpha_i h)}^{t_n - (j-1) \delta}
        \E \norm{
                \grad \ln q_{t_n - \alpha_j h}(x_n^*(\alpha_j h)) - \grad \ln q_s(x_n^*(t_n - s))
            }^2 \,\mathrm{d}s
    \end{align*}
    Since $|x_n^*(\alpha_j h) - s| \leq \delta$, 
    \begin{align*}
       \MoveEqLeft\E \norm{
            \grad q_{t_n - \alpha_j h}(x_n^*(\alpha_j h)) - \grad \ln q_s(x_n^*(t_n - s))
        }^2\\
        \leq & 3\E \norm{
                \grad q_{t_n - \alpha_j h}(x_n^*(\alpha_j h)) - \grad q_{t_n - \alpha_j h}(x_{t_n - \alpha_j h})
            }^2 + 
            3\E \norm{\grad q_{s}(x_s) - \grad q_{s}(x_n^*(t_n - s))}^2 \\
            & \quad + 
            3\E \norm{
                \grad q_{t_n - \alpha_j h}(x_{t_n - \alpha_j h}) - \grad q_{s}(x_s) 
            }^2\\
        = &
            3\E \norm{
                \grad q_{t_n - \alpha_j h}(x_n^*(\alpha_j h)) - \grad q_{t_n - \alpha_j h}(x_{t_n - \alpha_j h})
            }^2 + 
            3\E \norm{\grad q_{s}(x_s) - \grad q_{s}(x_n^*(t_n - s))}^2 \\
            & \quad + 
            3\E \norm{
                \int_{s}^{t_n - \alpha_j h} \partial_u \grad \ln q_{u}(x_u) du 
            }^2 \\
        \leq & 
            3L^2 \cdot \E \norm{
                x_n^*(\alpha_j h) - x_{t_n - \alpha_j h}
            }^2 + 
            3L^2 \cdot \E \norm{x_n^*(t_n - s) - x_s}^2 + 
            3\E \norm{
                \int_{s}^{t_n - \alpha_j h} \partial_u \grad \ln q_{u}(x_u) du 
            }^2. 
    \end{align*}
    The second to third line is by Young's inequality, and the fourth to fifth line is by \Cref{assumption:Lipschitz}.
    By Lemma 3 in \cite{chen2023ode}, 
    \begin{align*}
        \E \norm{
                \int_{s}^{t_n - \alpha_j h} \partial_u \grad \ln q_{u}(x_u) du 
            }^2 
        &\leq \delta \cdot \int_{s}^{t_n - \alpha_j h} \E \norm{\partial_u \grad \ln q_{u}(x_u)}^2 du \\
        &\leq \delta \cdot \int_{s}^{t_n - \alpha_j h} L^2 d \max(L, \frac{1}{u}) du \leq L^2 d \delta^2 (L \lor \frac{1}{t_{n} - h}). 
    \end{align*}
    Now, by \Cref{lem:true_ode_contraction} and the fact that $h \lesssim \frac{1}{L}$, 
    \begin{align*}
        \E \norm{
            \grad q_{t_n - \alpha_j h}(x_n^*(\alpha_j h)) - \grad \ln q_s(x_n^*(t_n - s))
        }^2 &\leq 12 L^2 \exp(Lh) \E\norm{\wh x_n - x_{t_n}}^2 + 3 L^2 d \delta^2 (L \lor \frac{1}{t_{n}-h})\\
        &\leq  36 L^2 \E\norm{\wh x_n - x_{t_n}}^2 + 3 L^2 d \delta^2 (L \lor \frac{1}{t_{n}-h}). 
    \end{align*}
    We conclude that the term in \Cref{eq:parallel:space_and_time_diff} can be bounded by
    \begin{align*}
        \MoveEqLeft R \cdot \delta \cdot e^{2 \alpha_i h} \cdot \sum_{j=1}^i \int_{t_n - \min(j \delta, \alpha_i h)}^{t_n - (j-1) \delta}
        36 L^2 \E\norm{\wh x_n - x_{t_n}}^2 + 3 L^2 d \delta^2 (L \lor \frac{1}{t_{n}-h}) \,\mathrm{d}s \\
        \leq & 2 R^2 \delta^2 \cdot \left(
            36 L^2 \E\norm{\wh x_n - x_{t_n}}^2 + 3 L^2 d \delta^2 (L \lor \frac{1}{t_{n}-h}) 
        \right)\\
        = & 4 h^2 L^2 \left(
            18 \E\norm{\wh x_n - x_{t_n}}^2 + \frac{3}{2} d \delta^2 (L \lor \frac{1}{t_{n}-h})
        \right) . 
    \end{align*}
    Combining the bounds for \Cref{eq:parallel:scorediff} and \Cref{eq:parallel:space_and_time_diff}, we get 
    \begin{multline*}
        \E \norm{\wh x^{(k)}_{n, i} - x_n(\alpha_i h)}^2 \\
        \leq 8 h^2 L^2 \cdot \paren{
            \frac{1}{R}
            \sum_{j=1}^R \norm{\wh x^{(k-1)}_{n, j} - x_n^*(\alpha_j h)}^2
             +  \frac{\eps_\scr^2}{L^2} +  \frac{3}{2} d \delta^2 (L \lor \frac{1}{t_{n}-h}) + 18 \E\norm{\wh x_n - x_{t_n}}^2  
        }.
    \end{multline*}
    Given sufficiently small constant $h$, $e^{\alpha_i h} \leq 2$. Moreover, by the definition of $\delta$, $R \delta = h$. 
    By unrolling the recursion, we get 
    \begin{align*}
        \E \norm{\wh x^{(k)}_{n, i} - x_{t_n - \alpha_i h}}^2 &\lesssim \paren{8h^2L^2}^k \cdot \paren{\frac{1}{R}
            \sum_{j=1}^R \norm{x^{(0)}_{n, j} - x_n^*(\alpha_j h)}^2
        } \\
        &\quad + \frac{8h^2L^2}{1 - 8h^2L^2} \paren{\frac{\eps_\scr^2}{L^2} +  d \delta^2 (L \lor \frac{1}{t_{n}-h}) + \E\norm{\wh x_n - x_{t_n}}^2}\\
        &\lesssim \paren{8h^2L^2}^k \cdot \paren{\frac{1}{R}
            \sum_{j=1}^R \norm{\wh x^{(0)}_{n, j} - x_n^*(\alpha_j h)}^2
        } \\
        &\quad + h^2\paren{\eps_\scr^2 +  L^2 d \delta^2 (L \lor \frac{1}{t_{n}-h}) + L^2 \cdot \E\norm{\wh x_n - x_{t_n}}^2}\,. \qedhere
    \end{align*}
\end{proof}

\noindent As a consequence of Lemma~\ref{claim:parallel_contraction}, if we take the number $K_n$ of Picard iterations sufficiently large, the error incurred in our estimate for $x_{t_n - \alpha_i h_n}$ is dominated by the terms in the second line of Eq.~\eqref{eq:contraction_bound_parallel}.

\begin{corollary} \label{coro:parallel_contraction_last}
    Assume $L \geq 1$. For all $n \in \{0, \cdots, N -1\}$, suppose we draw $\wh x_n$ from an arbitrary distribution $p_n$, then run step (a) - (e) in \Cref{alg:parallel_predictor_step}.
    In addition, suppose $h_n < \frac{1}{3L}$ and $K_n \gtrsim \log (R_n)$. Then for any $i \in \{1, \cdots, R_n\}$,  
    \begin{align*}
        \E \norm{\wh x^{(K_n)}_{n, i} - x_n^*(\alpha_i h_n)}^2 &\lesssim h_n^2 \eps_\scr^2 + \frac{L^2 d h_n^4}{R_n^2} \left(L \lor \frac{1}{t_{n} - h_n}\right) + L^2h_n^2 \cdot \E\norm{\wh x_n - x_{t_n}}^2,
    \end{align*}
    where $x_n^*(t)$ is solution of the true ODE starting at $\wh x_n$ at time $t_n$ and running until time $t_n - t$. 
\end{corollary}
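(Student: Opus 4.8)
The plan is to combine the one-step contraction estimate of \Cref{claim:parallel_contraction} with the initialization bound of \Cref{claim:intial_error_predictor_parallel}, picking the number $K_n$ of Picard iterations just large enough that the geometric prefactor $(8h_n^2L^2)^{K_n}$ suppresses the polynomially large initialization error down to the level of the residual terms appearing in the second line of \eqref{eq:contraction_bound_parallel}.

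Concretely, I would first take expectations in \eqref{eq:contraction_bound_parallel} at $k = K_n$ and, for each $j$, substitute \Cref{claim:intial_error_predictor_parallel} for $\E\norm{\wh x^{(0)}_{n,j} - x_n^*(\alpha_j h_n)}^2$; averaging over $j \in \{1,\dots,R_n\}$ yields
\[
  \frac{1}{R_n}\sum_{j=1}^{R_n}\E\norm{\wh x^{(0)}_{n,j} - x_n^*(\alpha_j h_n)}^2 \;\lesssim\; h_n^2\eps_\scr^2 + L^2 d h_n^4\Bigl(L \lor \tfrac{1}{t_n - h_n}\Bigr) + L^2 h_n^2\,\E\norm{\wh x_n - x_{t_n}}^2 .
\]
Since $h_n < \tfrac{1}{3L}$ we have $8 h_n^2 L^2 < \tfrac{8}{9}$, so $(8h_n^2L^2)^{K_n} \le (8/9)^{K_n}$, and choosing the absolute constant in $K_n \gtrsim \log R_n$ large enough makes this at most $1/R_n^2$. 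Multiplying the displayed bound by this factor, the dimension-dependent term becomes exactly $\tfrac{L^2 d h_n^4}{R_n^2}(L \lor \tfrac{1}{t_n - h_n})$, while the other two become $\tfrac{1}{R_n^2}$ times terms already present on the right-hand side, hence negligible.

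It remains only to note that the second line of \eqref{eq:contraction_bound_parallel} equals $h_n^2\eps_\scr^2 + \tfrac{L^2 d h_n^4}{R_n^2}(L \lor \tfrac{1}{t_n - h_n}) + L^2 h_n^2\,\E\norm{\wh x_n - x_{t_n}}^2$, which is precisely the claimed bound; summing the two contributions and absorbing constants completes the argument. I do not anticipate a genuine obstacle here — the only delicate point is verifying that $K_n = \Theta(\log R_n)$, with the right constant, is exactly what is needed to drive $(8h_n^2L^2)^{K_n}$ below $O(1/R_n^2)$, which is what allows the initialization error (larger than the target by a factor $R_n^2$ in its dimension-dependent term) to be fully absorbed.
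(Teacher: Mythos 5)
Your proposal is correct and follows essentially the same route as the paper: plug the initialization bound of \Cref{claim:intial_error_predictor_parallel} into the contraction estimate \eqref{eq:contraction_bound_parallel} at $k = K_n$, use $h_n < \frac{1}{3L}$ and $K_n \gtrsim \log R_n$ to force $(8h_n^2L^2)^{K_n} \le 1/R_n^2$, and note that this makes the attenuated initialization error match or be dominated by the terms already in the second line of \eqref{eq:contraction_bound_parallel}. No gaps.
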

\begin{proof}
    Fixing iteration $n$, we will let $h:= h_n$, $R := R_n$, $\delta := \delta_n$ and $K := K_n$.
    Notice that when $K \geq  \frac{2}{\log \frac{1}{8h^2L^2}} \cdot \log (R)$, $\paren{8h^2L^2}^K$ is at most $\frac{1}{R^2}$. Now by plugging \Cref{claim:intial_error_predictor_parallel} into \Cref{claim:parallel_contraction}, we get \begin{align*}
        \E \norm{\wh x^{(K)}_{n, i} - x_n^*(\alpha_i h)}^2
        &\lesssim \paren{8h^2L^2}^K \cdot h^2 \cdot \paren{\eps_\scr^2 + L^2 d h^2 (L \lor \frac{1}{t_{n} - h}) + L^2 \norm{\wh x_n - x_{t_n}}^2
        } \\
        &\quad + h^2\paren{\eps_\scr^2 +  L^2 d \delta^2 (L \lor \frac{1}{t_{n} - h}) + L^2 \cdot \E\norm{\wh x_n - x_{t_n}}^2}\\
        &\lesssim \left((8h^2L^2)^K + 1\right) h^2 \cdot \paren{\eps_\scr^2 + L^2 \cdot \E\norm{\wh x_n - x_{t_n}}^2} \\ &\qquad +  ((8h^2L^2)^K + \frac{1}{R^2}) \cdot L^2dh^4 \left(L \lor \frac{1}{t_{n} - h}\right)\\
        &\lesssim h^2 \eps_\scr^2 + L^2h^2 \cdot \E\norm{\wh x_n - x_{t_n}}^2 + \frac{L^2dh^4}{R^2} \left(L \lor \frac{1}{t_{n} - h}\right)
    \end{align*} 
    The first to second inequality by rearrangement of terms, while the second to third inequality is by the fact that the terms $\frac{1}{R^2}$ dominates $\paren{8h^2L^2}^K$. 
\end{proof}

\noindent We can now prove the parallel analogue of \Cref{lem:sequential_predictor_bias} and \Cref{lem:sequential_predictor_variance}. Note that the bounds in \Cref{lem:parallel_predictor_bias} and \Cref{lem:parallel_predictor_variance} are identical to the bounds in \Cref{lem:sequential_predictor_bias} and \Cref{lem:sequential_predictor_variance}, except from an additional $\frac{1}{R_n^2}$ factor for the middle term. This additional factor stems from using  $R_n$ midpoints in each iteration $n$ (compared to using one midpoint each iteration in \Cref{alg:sequential_alg}). 

\begin{lemma}[Parallel Predictor Bias]\label{lem:parallel_predictor_bias}
    Assume $L \geq 1$. For all $n \in \{0, \cdots, N -1\}$, suppose we draw $\wh x_n$ from an arbitrary distribution $p_n$, then run step (a) - (e) in \Cref{alg:parallel_predictor_step}.
    In addition, suppose $h_n < \frac{1}{3L}$ and $K_n \gtrsim \log (R_n)$. Then we have
    \begin{align*}
        \E\|\E_\alpha \wh x_{n + 1} - x_n^*(h_n) \|^2 \lesssim h_n^2 \cdot \eps_\scr^2 + \frac{L^4 h_n^6 d}{R_n^2} \cdot (L \lor \frac{1}{t_{n}-h_n}))) + L^4 h_n^4 \cdot \E\norm{\wh x_n - x_{t_n}}^2,
    \end{align*}
    where $x_n^*(t)$ is solution of the true ODE starting at $\wh x_n$ at time $t_n$ and running until time $t_n - t$.
\end{lemma}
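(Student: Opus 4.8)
The plan is to follow the structure of the sequential bias bound (Lemma~\ref{lem:sequential_predictor_bias}), with two inputs doing all the work: the exact unbiasedness of the stratified-midpoint quadrature used in step (e) of Algorithm~\ref{alg:parallel_predictor_step}, and the contraction estimate of Corollary~\ref{coro:parallel_contraction_last} controlling the error of the refined midpoint iterates $\wh x^{(K_n)}_{n,i}$.

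First I would write the integral formulation of the true ODE started at $\wh x_n$ at time $t_n$,
\[
    x_n^*(h_n) = e^{h_n}\wh x_n + \int_{t_n - h_n}^{t_n} e^{s - (t_n - h_n)}\,\grad\ln q_s(x_n^*(t_n - s))\,\mathrm{d}s\,,
\]
and observe the key identity: since $\alpha_i$ is uniform on $[(i-1)/R_n,\, i/R_n]$ (density $R_n$) and $\delta_n R_n = h_n$, linearity of expectation plus the substitution $s = t_n - \alpha h_n$ gives, for any function $g$,
\[
    \delta_n\,\E_\alpha\Bigl[\textstyle\sum_{i=1}^{R_n} e^{h_n(1-\alpha_i)}\,g(t_n - \alpha_i h_n)\Bigr] = h_n\int_0^1 e^{h_n(1-\alpha)}g(t_n - \alpha h_n)\,\mathrm{d}\alpha = \int_{t_n - h_n}^{t_n} e^{s - (t_n - h_n)}g(s)\,\mathrm{d}s\,.
\]
Applying this with $g(s) = \grad\ln q_s(x_n^*(t_n - s))$ and subtracting from the definition of $\wh x_{n+1}$ in step (e), the deterministic $e^{h_n}\wh x_n$ terms cancel and I am left with
\[
    \E_\alpha\wh x_{n+1} - x_n^*(h_n) = \delta_n\,\E_\alpha\Bigl[\textstyle\sum_{i=1}^{R_n} e^{h_n(1-\alpha_i)}\bigl(\wh s_{t_n-\alpha_i h_n}(\wh x^{(K_n)}_{n,i}) - \grad\ln q_{t_n - \alpha_i h_n}(x_n^*(\alpha_i h_n))\bigr)\Bigr]\,.
\]

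Next I would apply Jensen's inequality in $\alpha$ to the square of the norm, then Cauchy--Schwarz over the $R_n$ summands (using $e^{h_n(1-\alpha_i)} = O(1)$ since $h_n \lesssim 1/L \le 1$), which pulls out a factor $\delta_n^2 R_n = h_n^2/R_n$ and leaves $\sum_{i=1}^{R_n}\E\bigl\|\wh s_{t_n-\alpha_i h_n}(\wh x^{(K_n)}_{n,i}) - \grad\ln q_{t_n - \alpha_i h_n}(x_n^*(\alpha_i h_n))\bigr\|^2$, where the expectation now also ranges over $\wh x_n \sim p_n$. For each $i$, the triangle inequality splits this summand into the score-estimation error (at most $\eps_\scr^2$ by Assumption~\ref{assumption:score_error}) plus $L^2\|\wh x^{(K_n)}_{n,i} - x_n^*(\alpha_i h_n)\|^2$ by Lipschitzness of $\grad\ln q$ and of $\wh s$; invoking Corollary~\ref{coro:parallel_contraction_last} (whose hypotheses $h_n < 1/(3L)$ and $K_n \gtrsim \log R_n$ are precisely the assumptions of the lemma) bounds the latter by $O\bigl(h_n^2\eps_\scr^2 + \frac{L^2 d h_n^4}{R_n^2}(L \lor \frac{1}{t_n - h_n}) + L^2 h_n^2\,\E\|\wh x_n - x_{t_n}\|^2\bigr)$. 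Using $L^2 h_n^2 \lesssim 1$ to absorb $L^2 h_n^2\eps_\scr^2$ into $\eps_\scr^2$, each summand is thus $O\bigl(\eps_\scr^2 + \frac{L^4 d h_n^4}{R_n^2}(L\lor\frac{1}{t_n-h_n}) + L^4 h_n^2\,\E\|\wh x_n - x_{t_n}\|^2\bigr)$, a bound independent of $i$. Summing over the $R_n$ indices and multiplying by $h_n^2/R_n$ yields exactly the claimed inequality.

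The only genuinely substantive step is the exact term-by-term unbiasedness identity in the first paragraph: it is precisely the stratified design (one uniform midpoint per subinterval of length $1/R_n$, rather than $R_n$ i.i.d.\ draws on $[0,1]$) that makes the quadrature rule an unbiased estimate of the ODE integral with no residual bias surviving $\E_\alpha$; I would double-check this carefully. Everything else is the same Jensen/Cauchy--Schwarz/Lipschitz bookkeeping as in the sequential proof, with Corollary~\ref{coro:parallel_contraction_last} playing the role that Lemma~\ref{lem:predictor_sequential_helper} played there, and the lone $1/R_n^2$ improvement in the middle term inherited directly from that corollary.
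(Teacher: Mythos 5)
Your proposal is correct and follows essentially the same route as the paper: the stratified-midpoint unbiasedness identity is exactly how the paper shows its second (quadrature) term vanishes, and the remaining term is handled identically via Jensen/Cauchy--Schwarz, Lipschitzness plus Assumption~\ref{assumption:score_error}, and Corollary~\ref{coro:parallel_contraction_last}. If anything, your version is slightly cleaner in consistently comparing against $x_n^*(\alpha_i h_n)$ throughout, which is the object for which the unbiasedness and the corollary actually apply.
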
 
\begin{proof}
Fixing iteration $n$, we will let $h:= h_n$, $R := R_n$, $\delta := \delta_n$ and $K := K_n$. We have
\begin{align}
    \MoveEqLeft\E\|\E_\alpha \wh x_{n + 1} - x_n^*(h) \|^2 \nonumber\\
    \leq & \E \norm{
        \E_{\alpha} \sq{
            \delta \cdot \sum_{i=1}^R e^{h - \alpha_i h} \cdot \wh s_{t_n - \alpha_i h}(\wh x^{(K)}_{n, i}) 
        } - 
        \int_{t_n - h}^{t_n} e^{s - (t_n - h)} \grad \ln q_s(x_n^*(t_n - s)) \,\mathrm{d}s
    }^2 \nonumber\\
    \leq &2 \E\norm{
        \E_{\alpha} \sq{
            \sum_{i=1}^R \delta e^{h - \alpha_i h} \cdot \paren{
                \wh s_{t_n - \alpha_i h}(\wh x^{(K)}_{n, i}) - 
                \grad \ln q_{t_n - \alpha_i h}(x_{t_n - \alpha_i h})
            }
        }
    }^2 \label{eq:bias_term_one}\\
    & + 2 \E \norm{
        \E_{\alpha} \sum_{i=1}^R \delta e^{h - \alpha_i h} \cdot \grad \ln q_{t_n - \alpha_i h}(x_{t_n - \alpha_i h}) 
        - \int_{t_n - h}^{t_n} e^{s - (t_n - h)} \grad \ln q_s(x_n^*(t_n - s)) \,\mathrm{d}s
    }^2\,. \label{eq:bias_term_two}
\end{align}
Since $\alpha_{i}$ is drawn uniformly from $[(i-1)\delta, i\delta]$, 
\begin{align*}
   \E_{\alpha} \sq{ \delta e^{h - \alpha_i h} \cdot \grad \ln q_{t_n - \alpha_i h}(x_{t_n - \alpha_i h})} = \int_{t_n - i \delta}^{t_n - (i-1) \delta} e^{s - (t_n - h)} \grad \ln q_s(x_n^*(t_n - s)) \,\mathrm{d}s, 
\end{align*}
and the term in \Cref{eq:bias_term_two} is equal to $0$. 
Now we need to bound \Cref{eq:bias_term_one}.   
\begin{align*}
    \MoveEqLeft \norm{
        \wh s_{t_n - \alpha_i h}(\wh x^{(K)}_{n, i}) - \grad \ln q_{t_n - \alpha_i h}(x_{t_n - \alpha_i h})
    }^2\\
    \leq &2\norm{ 
            \wh s_{t_n - \alpha_i h}(\wh x^{(K)}_{n, i}) 
            - \grad \ln q_{t_n - \alpha_i h}(\wh x^{(K)}_{n, i})
        }^2 + 
        2\norm{ 
            \grad \ln q_{t_n - \alpha_i h}(\wh x^{(K)}_{n, i}) 
            - \grad \ln q_{t_n - \alpha_i h}(x_{t_n - \alpha_i h})
        }^2 \\
        \lesssim & \eps_\scr^2 + L^2 \cdot 
        \norm{
            x^{(K)}_{n, j} - x_n^*(\alpha_j h)
        }^2\\
        \lesssim & \eps_\scr^2 + L^2 \cdot \paren{
            h^2 \eps_\scr^2 + \frac{L^2 d h^4}{R^2} \left(L \lor \frac{1}{t_{n} - h}\right) + L^2h^2 \cdot \E\norm{\wh x_n - x_{t_n}}^2 
        }. 
\end{align*}
The first to second line by inequality $(\sum_{i=1}^n a_i)^2 \leq n \sum_{i=1}^n a_i^2$, the second to third line is by \Cref{assumption:score_error} and \Cref{assumption:Lipschitz}, and the third to fourth line is by \Cref{coro:parallel_contraction_last} ($K \gtrsim \log(R)$, which satisfies the condition in \Cref{coro:parallel_contraction_last}).  
Hence 
\begin{align*}
    \MoveEqLeft 2 \E\norm{
        \E_{\alpha} \sq{
            \sum_{i=1}^R \delta e^{h - \alpha_i h} \cdot \paren{
                \wh s_{t_n - \alpha_i h}(\wh x^{(K)}_{n, i}) - 
             \grad \ln q_{t_n - \alpha_i h}(x_{t_n - \alpha_i h})
            }
        }
    }^2 \\
    \leq &2 R (2 \delta)^2 \sum_{i=1}^R \E_{\alpha} \norm{
        \wh s_{t_n - \alpha_i h}(\wh x^{(K)}_{n, i}) - 
             \grad \ln q_{t_n - \alpha_i h}(x_{t_n - \alpha_i h})
    }^2\\ 
    \lesssim &8 \delta^2 R^2 \cdot \paren{
        \eps_\scr^2 + L^2 \cdot \paren{
            h^2 \eps_\scr^2 + \frac{L^2 d h^4}{R^2} \left(L \lor \frac{1}{t_{n} - h}\right) + L^2h^2 \cdot \E\norm{\wh x_n - x_{t_n}}^2 
        }
    }\\
    \lesssim& h^2 \cdot \eps_\scr^2 + \frac{L^4 d h^6}{R^2} (L \lor \frac{1}{t_{n}-h}) + L^4h^4 \cdot \E\norm{\wh x_n - x_{t_n}}^2 .   
\end{align*}
The first to second step is by inequality $(\sum_{i=1}^n a_i)^2 \leq n \sum_{i=1}^n a_i^2$ and Young's inequality, the second to third line is by plugging in our previous calculation, and the third to forth line is by $h = \delta R$ and that $h \lesssim 1/L$. 
\end{proof}

\begin{lemma}[Parallel Predictor Variance] \label{lem:parallel_predictor_variance}
    Assume $L \geq 1$. For all $n \in \{0, \cdots, N -1\}$, suppose we draw $\wh x_n$ from an arbitrary distribution $p_n$, then run step (a) - (e) in \Cref{alg:parallel_predictor_step}.
    In addition, suppose $h_n < \frac{1}{3L}$ and $K_n \gtrsim \log (R_n)$. Then we have
    \begin{align*}
        \E_{\alpha}\|\wh x_{n + 1} - x_n^*(h_n) \|^2 \lesssim h_n^2 \cdot \eps_\scr^2 + \frac{L^2  d h_n^4}{R_n^2}\left(
            L \lor \frac{1}{t_{n}-h_n}
        \right) + L^2h_n^2 \cdot \E\norm{\wh x_n - x_{t_n}}^2,
    \end{align*}
    where $x_n^*(t)$ is solution of the true ODE starting at $\wh x_n$ at time $t_n$ and running until time $t_n - t$.
\end{lemma}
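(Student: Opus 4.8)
The plan is to mirror the proof of \Cref{lem:sequential_predictor_variance} essentially line by line, making exactly two substitutions: the single randomized midpoint is replaced by the $R_n$ midpoints $\alpha_1,\dots,\alpha_{R_n}$, so wherever the sequential argument applies Cauchy--Schwarz over a window of length $h_n$ I instead split over the $R_n$ sub-windows of length $\delta_n=h_n/R_n$; and the role played by \Cref{lem:predictor_sequential_helper} in controlling the midpoint error $\norm{\wh x_{n+\frac12}-x_n^*(\alpha h_n)}^2$ is now played by \Cref{coro:parallel_contraction_last}, which controls $\E\norm{\wh x^{(K_n)}_{n,i}-x_n^*(\alpha_i h_n)}^2$ after the $K_n\gtrsim\log R_n$ Picard rounds. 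Throughout I fix $n$ and write $h:=h_n$, $R:=R_n$, $\delta:=\delta_n$, $K:=K_n$.

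First I would subtract the integral formulation of the true ODE, namely $x_n^*(h)=e^h\wh x_n+\int_{t_n-h}^{t_n}e^{s-(t_n-h)}\grad\ln q_s(x_n^*(t_n-s))\,\mathrm{d}s$, from the update $\wh x_{n+1}=e^h\wh x_n+\delta\sum_{i=1}^R e^{h-\alpha_i h}\wh s_{t_n-\alpha_i h}(\wh x^{(K)}_{n,i})$; the $e^h\wh x_n$ terms cancel. Inserting $\sum_i\int_{t_n-i\delta}^{t_n-(i-1)\delta}e^{h-\alpha_i h}\grad\ln q_{t_n-\alpha_i h}(x_n^*(\alpha_i h))\,\mathrm{d}s$ as a bridge and applying Young's inequality, I would split the remaining squared norm into three pieces: (A) $\delta\sum_i e^{h-\alpha_i h}\bigl(\wh s_{t_n-\alpha_i h}(\wh x^{(K)}_{n,i})-\grad\ln q_{t_n-\alpha_i h}(x_n^*(\alpha_i h))\bigr)$; (B) the piecewise-constant error $\sum_i\int_{t_n-i\delta}^{t_n-(i-1)\delta}e^{h-\alpha_i h}\bigl(\grad\ln q_{t_n-\alpha_i h}(x_n^*(\alpha_i h))-\grad\ln q_s(x_n^*(t_n-s))\bigr)\,\mathrm{d}s$; and (C) the exponential-weight error $\sum_i\int_{t_n-i\delta}^{t_n-(i-1)\delta}\bigl(e^{h-\alpha_i h}-e^{s-(t_n-h)}\bigr)\grad\ln q_s(x_n^*(t_n-s))\,\mathrm{d}s$. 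This is exactly the sequential three-term decomposition, only resolved at the scale of the $R$ sub-windows; unlike the bias lemma I would \emph{not} exploit unbiasedness here --- a crude triangle-inequality split suffices, which is why every term picks up only an $h^2$ prefactor.

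For term (A) I would use $\bigl(\sum_{i=1}^R a_i\bigr)^2\le R\sum_i a_i^2$, bound $e^{2h}=O(1)$, apply \Cref{assumption:Lipschitz} and \Cref{assumption:score_error} to get $\E\norm{\wh s_{t_n-\alpha_i h}(\wh x^{(K)}_{n,i})-\grad\ln q_{t_n-\alpha_i h}(x_n^*(\alpha_i h))}^2\lesssim\eps_\scr^2+L^2\E\norm{\wh x^{(K)}_{n,i}-x_n^*(\alpha_i h)}^2$, and then plug in \Cref{coro:parallel_contraction_last}; since $R\delta=h$ and $h\lesssim 1/L$ the resulting $L^4 h^6 d/R^2$ and $L^4 h^4\E\norm{\cdot}^2$ contributions are absorbed into the claimed $L^2 h^4 d/R^2$ and $L^2 h^2\E\norm{\cdot}^2$ terms. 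For term (B) I would apply Cauchy--Schwarz once over the $R$ sub-windows and once inside each integral, reducing it to $\lesssim R\delta\sum_i\int_{t_n-i\delta}^{t_n-(i-1)\delta}\E\norm{\grad\ln q_{t_n-\alpha_i h}(x_n^*(\alpha_i h))-\grad\ln q_s(x_n^*(t_n-s))}^2\,\mathrm{d}s$; since $\alpha_i h$ and $t_n-s$ both lie in $[(i-1)\delta,i\delta]$ the time index and the point each move by at most $\delta$, so by the same computation as in the proof of \Cref{claim:parallel_contraction} (a space term via \Cref{lem:true_ode_contraction}, a time term via \Cref{lem:ODE_cor}, and a cross term) this integrand is $\lesssim L^2\E\norm{\wh x_n-x_{t_n}}^2+L^2 d\delta^2(L\lor\tfrac{1}{t_n-h})$, and multiplying through by $R^2\delta^2=h^2$ gives $L^2 h^2\E\norm{\wh x_n-x_{t_n}}^2+L^2 dh^4/R^2\,(L\lor\tfrac{1}{t_n-h})$. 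For term (C) I would note $(e^{h-\alpha_i h}-e^{s-(t_n-h)})^2\lesssim\delta^2$ (the two exponents differ by at most $\delta$), combine it with $\E\norm{\grad\ln q_s(x_n^*(t_n-s))}^2\lesssim d/s+L^2\E\norm{\wh x_n-x_{t_n}}^2$ (from \Cref{lem:score_squared_norm_bound}, \Cref{lem:true_ode_contraction}, and $s\ge t_n-h$), and use the same $R^2\delta^2=h^2$ bookkeeping to get $\lesssim h^2\delta^2\bigl(d/(t_n-h)+L^2\E\norm{\wh x_n-x_{t_n}}^2\bigr)$, which is of strictly lower order. Summing (A), (B), (C) gives the stated inequality. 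The only delicate point --- and hence the main obstacle --- is the bookkeeping that propagates the extra $1/R^2$: each sub-window contributes an ODE-movement term of size $\delta^2=h^2/R^2$, and the $R$-fold Cauchy--Schwarz together with the identity $R\delta=h$ leaves exactly one surviving factor of $1/R^2$ on the middle term while the endpoint score-error and displacement terms are untouched; everything else is a routine re-run of \Cref{lem:sequential_predictor_variance}.
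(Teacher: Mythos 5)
Your proposal is correct and follows essentially the same route as the paper: the identical three-term split into the score-error term (controlled via \Cref{coro:parallel_contraction_last}, exactly as in the bound for the first term of \Cref{lem:parallel_predictor_bias}), the piecewise-constant term (controlled by the same displacement estimate as \Cref{eq:parallel:space_and_time_diff} in \Cref{claim:parallel_contraction}), and the exponential-weight term, with the same $R\delta_n=h_n$ bookkeeping yielding the $1/R_n^2$ factor. The only cosmetic difference is that for the last term you bound $\E\norm{\grad \ln q_s(x_n^*(t_n-s))}^2$ via \Cref{lem:score_squared_norm_bound} (giving $d/s$) rather than the paper's $Ld$ bound by integration by parts; both yield a lower-order contribution.
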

\begin{proof}
Fixing iteration $n$, we will let $h:= h_n$, $R := R_n$, $\delta := \delta_n$ and $K := K_n$. We will separate $\E_{\alpha}\|\wh x_{n + 1} - x_n^*(h) \|^2$ into several terms and bound each term separately. 
\begin{align}
    \MoveEqLeft\E_{\alpha}\|\wh x_{n + 1} - x_n^*(h) \|^2 \nonumber\\
    \leq & \E_{\alpha} \norm{
        \delta \cdot \sum_{i=1}^R e^{h - \alpha_i h} \cdot \wh s_{t_n - \alpha_i h}(\wh x^{(K)}_{n, i}) 
         - 
        \int_{t_n - h}^{t_n} e^{s - (t_n - h)} \grad \ln q_s(x_n^*(t_n - s)) \,\mathrm{d}s
    }^2 \nonumber\\
    \leq &3 \E_{\alpha} \norm{
        \sum_{i=1}^R \delta e^{h - \alpha_i h} \cdot \paren{
            \wh s_{t_n - \alpha_i h}(\wh x^{(K)}_{n, i}) - 
                \grad \ln q_{t_n - \alpha_i h}(x_{t_n - \alpha_i h})
        }
    }^2 \label{eq:parallel_predictor_var_term_1}\\
    & + 3 \E \norm{
        \sum_{i=1}^R  \int_{t_n - i\delta}^{t_n - (i-1)\delta}
        e^{h - \alpha_i h} \cdot \paren{
            \grad \ln q_{t_n - \alpha_i h}(x_{t_n - \alpha_i h}) - \grad \ln q_s(x_n^*(t_n - s))
        } \,\mathrm{d}s 
    }^2 \label{eq:parallel_predictor_var_term_2}\\
    &+ 3 \E \norm{
        \sum_{i=1}^R  \int_{t_n - i\delta}^{t_n - (i-1)\delta} \paren{
            e^{h - \alpha_i h} - e^{s - (t_n - h)}
        } \cdot  \grad \ln q_s(x_n^*(t_n - s)) \,\mathrm{d}s
    }^2 \label{eq:parallel_predictor_var_term_3}. 
\end{align}

\Cref{eq:parallel_predictor_var_term_1} is identical to \Cref{eq:bias_term_one} in \Cref{lem:parallel_predictor_bias}, and can be bounded by 
\begin{align*}
    \Cref{eq:parallel_predictor_var_term_1} \lesssim h^2 \cdot \eps_\scr^2 + L^4h^4 \cdot \E\norm{\wh x_n - x_{t_n}}^2 +  \frac{L^4 d h^6}{R^2} (L \lor \frac{1}{t_{n}-h}))).
\end{align*}

Next we will bound \Cref{eq:parallel_predictor_var_term_2}.  By \Cref{lem:ODE_cor} and \Cref{lem:true_ode_contraction}, 
\begin{equation*}
\E \norm{\grad q_{t_n - \alpha_j h}(x_n^*(\alpha_j h)) - \grad \ln q_s(x_n^*(t_n - s))}^2 \lesssim L^2 d \delta^2 (L \lor \frac{1}{t_{n}-h}) + L^2 \cdot \E\norm{\wh x_n - x_{t_n}}^2,    
\end{equation*}
hence \Cref{eq:parallel_predictor_var_term_2} can be bounded with similar calculations as for \Cref{eq:parallel:space_and_time_diff} in \Cref{claim:parallel_contraction},  by the following term: 
$$12 R^2 \delta^2 \cdot \paren{
    L^2 d \delta^2 (L \lor \frac{1}{t_{n}-h}) + O(L^2) \cdot \E\norm{\wh x_n - x_{t_n}}^2
    } \lesssim \frac{L^2 d h^4}{R^2} (L \lor \frac{1}{t_{n}-h}) + L^2h^2 \E\norm{\wh x_n - x_{t_n}}^2.$$ 

Finally we will bound \Cref{eq:parallel_predictor_var_term_3}
Since both $\alpha_i$ and $s$ belong to the range $[(i-1)\delta, i\delta]$, 
\begin{equation*}
     e^{h - \alpha_i h} - e^{s - (t_n - h)} \leq e^{h} ( e^{-(i-1) \delta} - e^{-i\delta}) \leq e^h \cdot \delta \leq 2 \delta.  
\end{equation*}
Moreover, by the fact that $\E \norm{\grad \ln q_s(x_{t_n})}^2 \leq Ld$ (by integration by parts), \Cref{lem:ODE_cor},  \Cref{lem:true_ode_contraction} and the fact that $L \delta = o(1)$, we have 
\begin{align*}
    \E \norm{\grad \ln q_s(x_n^*(t_n - s))}^2 &\lesssim \E \norm{\grad \ln q_s(x_{t_n})}^2 + \E \norm{\grad \ln q_s(x_n^*(t_n - s)) - \grad \ln q_s(x_{t_n})}^2\\
    &\lesssim Ld + L^2 \exp(L\delta) \norm{\wh x_n - x_{t_n}}^2 \lesssim Ld + L^2 \norm{\wh x_n - x_{t_n}}^2.
\end{align*}
Hence \Cref{eq:parallel_predictor_var_term_3} can be bounded by 
\begin{align*}
    \MoveEqLeft3 \E \norm{
        \sum_{i=1}^R  \int_{t_n - i\delta}^{t_n - (i-1)\delta} \paren{
            e^{h - \alpha_i h} - e^{s - (t_n - h)}
        } \cdot  \grad \ln q_s(x_n^*(t_n - s)) \,\mathrm{d}s
    }^2 \\
    \leq & 3 R \delta \sum_{i=1}^R \int_{t_n - i\delta}^{t_n - (i-1)\delta} 
        \E \norm{ 2 \delta \grad \ln q_s(x_n^*(t_n - s))}^2\\
    \lesssim & 3 R\delta \cdot 4 \delta^2 \sum_{i=1}^R \int_{t_n - i\delta}^{t_n - (i-1)\delta}
        \paren{
            Ld + L^2 \norm{\wh x_n - x_{t_n}}^2
        }  \\
    \lesssim & R^2 \delta^4 \paren{
        Ld + L^2 \norm{\wh x_n - x_{t_n}}^2
    } = \frac{L d h^4}{R^2} + \frac{L^2h^4}{R^2} \norm{\wh x_n - x_{t_n}}^2.  
\end{align*}

By adding together \Cref{eq:parallel_predictor_var_term_1}, \Cref{eq:parallel_predictor_var_term_2} and \Cref{eq:parallel_predictor_var_term_3}, and combining terms, we conclude that
\begin{align*}
    \E_{\alpha}\|\wh x_{n + 1} - x_n^*(h) \|^2 &\lesssim h^2 \cdot \eps_\scr^2 + L^4h^4 \cdot \E\norm{\wh x_n - x_{t_n}}^2 +  \frac{L^4 d h^6}{R^2} (L \lor \frac{1}{t_{n}-h}))) \\
    & \quad + \frac{L^2 d h^4}{R^2} (L \lor \frac{1}{t_{n}-h}) + L^2h^2 \E\norm{\wh x_n - x_{t_n}}^2 + \frac{L d h^4}{R^2} + \frac{L^2h^4}{R^2} \norm{\wh x_n - x_{t_n}}^2\\
    &\lesssim h^2 \cdot \eps_\scr^2 + \frac{L^2 d h^4}{R^2}(L \lor \frac{1}{t_{n}-h})+ L^2h^2 \E\norm{\wh x_n - x_{t_n}}^2\,.\qedhere
\end{align*}
\end{proof}

\noindent We can now prove our main guarantee for the parallel predictor step, which states that with logarithmically many parallel rounds and $\widetilde{O}(\sqrt{d})$ score estimate queries over a short time interval (of length $O(1/L)$) of the reverse process, the algorithm does not drift too far from the true ODE. Our proof follows a similar flow as \Cref{lem:main_predictor_sequential}.

Note that due to the existence of $R_n$ midpoints in each step, we can set $h_n = \Theta(1)$. We will set $h_n$ to be $\Theta(\frac{1}{L})$, unless $t_n$ is close to the end time $\delta$ (see \Cref{alg:parallel_alg} for the global algorithm and timeline). If $t_n$ is close to $\delta$, we will repeated half $h_n$ as $n$ increases, until we reach the end time.
\begin{theorem} \label{thm:parallel_predictor}
     Assume $L \geq 1$. Let $\rparam \geq 1$ be an adjustable parameter. When we set $h_{n} = \min\{\frac{1}{4L}, t_{n}/2, t_{n} - \delta\}$, $K \gtrsim \log (\frac{\rparam \sqrt{d}}{\eps})$ and $R_n = h_n \cdot \rparam \cdot \frac{L\sqrt{d}}{\eps} = O\left(\frac{\rparam \sqrt{d}}{\eps}\right)$, the Wasserstein distance between the true ODE process and the process in \Cref{alg:parallel_predictor_step}, both starting from $\hat x_{0} \sim q_{t_0}$ and run for total time $T_N \lesssim \frac{1}{L}$ is bounded by 
    \begin{equation*}
        W_2(\wh x_{N},  x_{T_N}) \lesssim \frac{\eps_\scr}{L} + \frac{\eps}{\rparam \sqrt{L}}. 
    \end{equation*}
\end{theorem}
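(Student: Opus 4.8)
The plan is to mirror the proof of \Cref{lem:main_predictor_sequential}, tracking the squared Wasserstein displacement $\E\|x_{t_n} - \wh x_n\|^2$ across the $N$ predictor substeps and substituting the parallel bias and variance bounds (\Cref{lem:parallel_predictor_bias} and \Cref{lem:parallel_predictor_variance}) for their sequential counterparts. First I would fix the one-step coupling: let $y_{n+1} := x_n^*(h_n)$ be the true reverse ODE run for time $h_n$ from $\wh x_n$ at time $t_n$, and let $\E_\alpha$ denote expectation over the midpoints $\alpha_1,\dots,\alpha_{R_n}$ drawn at substep $n$. Decomposing $x_{t_{n+1}} - \wh x_{n+1} = (x_{t_{n+1}} - y_{n+1}) - (\wh x_{n+1} - y_{n+1})$, expanding the square, and applying Young's inequality exactly as in \Cref{lem:main_predictor_sequential}, I obtain
\begin{equation*}
\E_\alpha\|x_{t_{n+1}} - \wh x_{n+1}\|^2 \le \Bigl(1 + \tfrac{Lh_n}{2}\Bigr)\|x_{t_{n+1}} - y_{n+1}\|^2 + \tfrac{2}{Lh_n}\|\E_\alpha\wh x_{n+1} - y_{n+1}\|^2 + \E_\alpha\|\wh x_{n+1} - y_{n+1}\|^2,
\end{equation*}
and since $x_{t_{n+1}}$ and $y_{n+1}$ are the true ODE run for the same time $h_n$ from $x_{t_n}$ and $\wh x_n$, \Cref{lem:true_ode_contraction} bounds the first term by $\exp(O(Lh_n))\|x_{t_n} - \wh x_n\|^2$.

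Next I would take expectations over $\wh x_0 \sim q_{t_0}$ and plug the parallel bias bound into the $\tfrac{2}{Lh_n}$ term and the parallel variance bound into the last term. The hypotheses of those lemmas are met everywhere since the prescribed step sizes obey $h_n \le \tfrac{1}{4L} < \tfrac{1}{3L}$ and $h_n \le t_n/2$ (hence $t_n - h_n \ge t_n/2$), and $K \gtrsim \log(\rparam\sqrt d/\eps) \gtrsim \log R_n$ because $R_n = O(\rparam\sqrt d/\eps)$. Dividing the bias bound by $Lh_n$, the term $L^4 h_n^4\,\E\|x_{t_n} - \wh x_n\|^2$ becomes $L^3 h_n^3\,\E\|x_{t_n}-\wh x_n\|^2$, which together with the $L^2 h_n^2\,\E\|x_{t_n}-\wh x_n\|^2$ term from the variance bound is $\lesssim Lh_n\,\E\|x_{t_n}-\wh x_n\|^2$ (using $Lh_n \lesssim 1$) and so gets absorbed into the $\exp(O(Lh_n))$ prefactor. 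This leaves the recursion
\begin{equation*}
\E\|x_{t_{n+1}} - \wh x_{n+1}\|^2 \le \exp(O(Lh_n))\,\E\|x_{t_n}-\wh x_n\|^2 + O\left(\tfrac{h_n\eps_\scr^2}{L} + h_n^2\eps_\scr^2 + \Bigl(\tfrac{L^3 h_n^5 d}{R_n^2} + \tfrac{L^2 h_n^4 d}{R_n^2}\Bigr)\Bigl(L \lor \tfrac{1}{t_n - h_n}\Bigr)\right).
\end{equation*}

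Finally I would unroll this recursion over $n = 0, \dots, N-1$. Since $\sum_n h_n = T_N \lesssim 1/L$, the accumulated exponential factor is $\exp(O(1)) = O(1)$, so $W_2^2(\wh x_N, x_{T_N})$ is bounded by the sum of the additive terms above. The $\eps_\scr$ contributions are controlled by $\sum_n h_n \le 1/L$ and $\sum_n h_n^2 \le (\max_n h_n)\sum_n h_n \le 1/L^2$, yielding $O(\eps_\scr^2/L^2)$. For the discretization terms I substitute $R_n = h_n\,\rparam\,L\sqrt d/\eps$, so $\tfrac{L^3 h_n^5 d}{R_n^2} = \tfrac{L h_n^3 \eps^2}{\rparam^2}$ and $\tfrac{L^2 h_n^4 d}{R_n^2} = \tfrac{h_n^2\eps^2}{\rparam^2}$; the one remaining point is that the choice $h_n \le t_n/2$ forces $\tfrac{h_n}{t_n - h_n} \le 1$, so that in both regimes of $L \lor \tfrac{1}{t_n-h_n}$ each summand is $\lesssim \tfrac{\eps^2}{\rparam^2}\bigl(L h_n^2 + h_n\bigr)$, which once more sums to $O(\eps^2/(\rparam^2 L))$ using the step-size budget. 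Combining gives $W_2^2(\wh x_N, x_{T_N}) \lesssim \eps_\scr^2/L^2 + \eps^2/(\rparam^2 L)$, and taking square roots yields the theorem. I expect the main care to be needed in this last step: handling the geometrically shrinking step sizes near the end time $\delta$ together with the $L \lor \tfrac{1}{t_n-h_n}$ factors, ensuring they still sum to the stated bound; the remainder is a faithful parallel transcription of the sequential argument.
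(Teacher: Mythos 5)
Your proposal is correct and follows essentially the same route as the paper: the one-step Young's-inequality decomposition and ODE coupling from Lemma~\ref{lem:main_predictor_sequential}, with the parallel bias and variance bounds (Lemmas~\ref{lem:parallel_predictor_bias} and~\ref{lem:parallel_predictor_variance}) plugged in, the recursion unrolled over $\sum_n h_n \lesssim 1/L$, and the choice $h_n \le t_n/2$ (so $t_n - h_n \ge h_n$) together with $R_n = h_n\rparam L\sqrt d/\eps$ used to reduce each discretization summand to $\lesssim h_n\eps^2/\rparam^2$. The final bookkeeping, including absorbing the $L^3h_n^3$ and $L^2h_n^2$ displacement terms into the $\exp(O(Lh_n))$ prefactor and summing the $\eps_\scr$ terms to $\eps_\scr^2/L^2$, matches the paper's proof.
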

\begin{proof}    
    To avoid confusion, we will reserve $\wh x_n$ as the result of running \Cref{alg:parallel_predictor_step} for $n$ steps, starting at $\wh x_0 \sim q_0$. We will use $\wh y_{n}$ to denote the result from running the true ODE process, starting at $\wh x_{n-1}$. Then by an identical calculation as in \Cref{lem:main_predictor_sequential}, 
    \begin{align*}
        \E_\alpha \norm{x_{t_N} - \wh x_N}^2 \leq \exp\left( O(L h_{N-1})\right) \norm{x_{t_{N-1}} - \wh x_{N-1}}^2 + \frac{2}{Lh_{N-1}} \norm{\E_\alpha \wh x_N - y_{N}}^2 + \E_\alpha \norm{\wh x_N -  y_{N}}^2. 
    \end{align*}
    Now we do a similar calculation as in \Cref{lem:main_predictor_sequential}, but utilizes the bias and variance of one step in the parallel algorithm instead of sequential. Taking the expectation wrt $\wh x_0 \sim q_{t_0}$, by Lemmas~\ref{lem:parallel_predictor_bias}~and~\ref{lem:parallel_predictor_variance},
    \begin{align*}
        \MoveEqLeft\E \|x_{t_N} - \wh x_N\|^2\\
        &\le \exp\left(O(L h_{N-1}) \right) \E \|x_{t_{N-1}} - \wh x_{N-1}\|^2 + \frac{2}{L h_{N-1}} \E \|\E_\alpha \wh x_N - y_N\|^2 + \E \|\wh x_{N} - y_N\|^2\\
        &\le \exp\left(O(L h_{N-1}) \right)\E \|x_{t_{N-1}} - \wh x_{N-1}\|^2 \\
        &\qquad + O\left(\frac{1}{L h_{N-1}} \left(h_{N-1}^2 \eps_\scr^2 + \frac{L^4 d h_{N-1}^6}{R_n^2} \left(L \lor \frac{1}{t_{N-1}} \right) + L^4 h_{N-1}^4 \E \|x_{t_{N-1}} - \wh x_{N-1}\|^2 \right) \right)\\
        &\qquad + O\left(h_{N-1}^2 \eps_\scr^2 + \frac{L^2 d h_{N-1}^4}{R_n^2} \left(L \lor \frac{1}{t_{N-1}} \right) + L^2 h_{N-1}^2 \E \|x_{t_{N-1}} - \wh x_{N-1}\|^2\right)\\
        &\le \exp\left(O(L h_{N-1}) \right) \E\|x_{t_{N-1}} - \wh x_{N-1}\|^2 \\
        &\qquad + O\left( \frac{h_{N-1} \eps_\scr^2}{L} + h_{N-1}^2 \eps_\scr^2 + \frac{L^3 d h_{N-1}^5 + L^2 d h_{N-1}^4}{R_n^2} \left(L \lor \frac{1}{t_{N-1}} \right) \right)
    \end{align*}
    Since $h_{N-1} <\frac{1}{L}$, the term $L^3 d h_{N-1}^5$ is dominated by the term $L^2 d h_{N-1}^4$ and the term $h_{N-1}^2 \eps_\scr^2$ os dominated by $\frac{h_{N-1} \eps_\scr^2}{L}$. By induction, noting that $x_{t_0} = \wh x_0$, we have
    \begin{equation*}
        \E\|x_{t_N} - \wh x_N\|^2 \lesssim \sum_{n=0}^{N-1}\left(\frac{h_{n}}{L} \eps_\scr^2 + \frac{L^2 d h_{n}^4}{R_n^2} \cdot \left(L \lor \frac{1}{t_{n}} \right)\right)\cdot \exp\left(O\left(L \sum_{i=n+1}^{N-1} h_i\right) \right)\,.
    \end{equation*}
    Since $\sum_{i=n+1}^{N-1} h_i \leq \frac{1}{L}$, $\exp\left(O\left(L \sum_{i=n+1}^{N-1} h_i\right) \right)$ is a constant. Moreover, by our choice of $h_n$ , it is always the case that $h_n \leq \frac{t_{n}}{2} \leq t_n - h_n$ and that $h_n \leq \frac{1}{L}$. Therefore 
    \begin{align*}
        \frac{L^2 d h_n^4}{R_n^2}\left(
            L \lor \frac{1}{t_{n}-h_n}
        \right) \leq \frac{L^2 d h_n^3}{R_n^2} \leq \frac{L^2 d h_n}{\frac{\rparam^2 L^2 d}{\eps^2}} = \frac{h_n \eps^2}{\rparam^2},
    \end{align*}
    and thus 
    \begin{align*}
        \E\|x_{t_N} - \wh x_N\|^2 \lesssim \sum_{n=0}^{N-1} \frac{h_n}{L} \eps_\scr^2 + \frac{h_n \eps^2}{\rparam^2} \lesssim \frac{\eps_\scr^2}{L^2} + \frac{\eps^2}{L\rparam^2}. 
    \end{align*}
    We conclude that 
    \begin{align*}
        W_2(x_{t_N}, \wh x_N) &= \sqrt{\E\|x_{t_N} - \wh x_N\|^2} \lesssim \frac{\eps_\scr}{L} + \frac{\eps}{\rparam\sqrt{L}}. \qedhere
    \end{align*}
\end{proof}


\subsection{Corrector step}
In this step we will be using the parallel algorithm in \cite{AnariCV24} to estimate the underdamped Langevin diffusion process. Since we will be fixing the score function in time, we will use $\grad \ln q$ to denote the true score function for the diffusion process, and $\wh s$ to denote the estimated score function. We will choose the friction parameter to be $\gamma \asymp \sqrt{L}$. 

\begin{algorithm}[H]
\caption{\textsc{Corrector Step (Parallel)} \cite{AnariCV24}} 
  \label{alg:parallel_corrector_step}
    \vspace{0.2cm}
    \paragraph{Input parameters:}
    \begin{itemize}
        \item Starting sample $(\wh x_{0}, \wh v_{0}) \sim p \otimes \normal(0,I_d)$, 
        Number of steps $N$, Step size $h$, Score estimates $\wh s \approx \grad \ln q$, Number of midpoint estimates $R$, $\delta := \frac{h}{R}$
    \end{itemize}
    \begin{enumerate}
        \item For $n = 0, \dots, N-1$: 
        \begin{enumerate}
            \item Let $t_n = n h$
            \item Let $(\wh x^{(k)}_{n, i}, \wh v^{(k)}_{n, i})$ represent the algorithmic estimate of $(x_{t_n + ih/R}, v_{t_n + ih/R})$ at iteration $k$.
            \item Let $(\zeta^{x}, \zeta^{v})$ be a correlated gaussian vector corresponding to change caused by the Brownian motion term in $h/R$ time (see more detail in \cite{AnariCV24})
    
            \item
            For $i = 0, \cdots, R$ in parallel: Let $(\wh x^{(0)}_{n, i}, \wh v^{(0)}_{n, i}) = (\wh x_n, \wh v_n)$
            \item For $k = 1, \cdots, K$:
            
            \quad For $i = 1, \cdots, R$ in parallel:  
            
            \quad \quad $\wh x^{(k)}_{n, i} := \wh x^{(k-1)}_{n, i-1} + \frac{1 - \exp(-\gamma h/R)}{\gamma} \cdot \wh v^{(k-1)}_{n, i-1} - \frac{h/R - (1 - \exp(-\gamma h/R))/\gamma}{\gamma} \cdot \wh s(x^{k-1}_{n, j}) + \zeta^{x}$
            
            \quad \quad $\wh v^{(k)}_{n,i} = \exp(-\gamma h/R) \cdot \wh v^{(k-1)}_{n,{i-1}} - \frac{1 - \exp(-\gamma h/R)}{\gamma} \cdot \wh s(x^{(k-1)}_{n, i-1}) + \zeta^{v}$ 

            \item 
            $(\wh x_{n+1}, \wh v_{n+1}) = (\wh x^{K}_{n, R}, \wh v^{K}_{n, R})$  
        \end{enumerate}
        \item Let $t_N = N h$
        \item Return $\wh x_{N}, t_{N}$.
    \end{enumerate} 
\end{algorithm}


Let $T_N$ denote the total time the parallel corrector step is run (namely, $T_N = nh$). 
Consider two continuous underdamped Langevin diffusion processes $u^*(t) = (x^*(t), v^*(t))$ and $u_{t_0 + t} = (x_{t_0 + t}, v_{t_0 + t})$ with coupled brownian motions. The first one start from position $x^*(0) = \wh x_0 \sim p$ and the second one start from position $x_{t_0} \sim q$. Both processes start with velocity $v^*(0) = v_{t_0} \sim \normal(0,I_d)$. 
We will bound both the distance measure between $x^*(t)$ and the true sample $x_{t_0 + t}$, and the distance measure between $x^*(t_N)$ and outputs of \Cref{alg:parallel_corrector_step}. 
First, \cite{chen2023ode} gives the following bound on the total variation error between $x^*(T_N)$ and $x_{t_N}$. 


\begin{lemma}[\cite{chen2023ode}, Lemma 9] \label{lem:TV_underdamp_continuous}
If $h \lesssim \frac{1}{\sqrt{L}}$, then 
\begin{equation*}
    \TV(x^*(T_N), x_{t_N}) 
    \lesssim \frac{W_2(p, q)}{L^{1/4} T_{N}^{3/2}}\,.
\end{equation*}
\end{lemma}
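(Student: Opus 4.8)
The plan is to combine a synchronous Wasserstein coupling over the first half of the window $[0,T_N]$ with a Wasserstein-to-total-variation regularization step that exploits the Gaussian noise the underdamped dynamics injects into the position coordinate over the second half. Concretely, I would reduce the claim to two ingredients: (i) after coupling the two underdamped diffusions synchronously for time $T_N/2$, the joint position--velocity processes stay within $W_2$-distance $O(W_2(p,q))$; and (ii) running the underdamped dynamics for a further time $\asymp T_N$ convolves the position marginal with an (approximately) isotropic Gaussian of standard deviation $\asymp L^{1/4}T_N^{3/2}$, so that the $W_2$ gap becomes a $\TV$ gap after dividing by this scale.

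For (i): couple $u^*=(x^*,v^*)$ and $u=(x,v)$ by driving both with the same Brownian motion, coupling the initial positions optimally for $W_2(p,q)$ and the initial velocities (both $\normal(0,I_d)$) to be equal. The Brownian increments cancel, so $\Delta_x:=x^*-x$ and $\Delta_v:=v^*-v$ solve the closed \emph{deterministic} system $\dot\Delta_x=\Delta_v$, $\dot\Delta_v=(\grad\ln q(x^*)-\grad\ln q(x))-\gamma\Delta_v$. Since $\norm{\grad\ln q(x^*)-\grad\ln q(x)}\le L\norm{\Delta_x}$ by Assumption~\ref{assumption:Lipschitz} and $\gamma\asymp\sqrt L$, a Gr\"onwall estimate on the energy $E_t:=\norm{\Delta_x(t)}^2+\gamma^{-2}\norm{\Delta_v(t)}^2$ (after adding a small cross term $\gamma^{-1}\langle\Delta_x,\Delta_v\rangle$ if needed to close the recursion) shows $E_t\lesssim E_0$ for all $t\lesssim 1/\sqrt L$; since $\Delta_v(0)=0$, taking expectations over the initial coupling gives $\E\norm{\Delta_x(t)}^2+\gamma^{-2}\,\E\norm{\Delta_v(t)}^2\lesssim W_2(p,q)^2$ throughout $[0,T_N]$.

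For (ii): on $[T_N/2,T_N]$, Duhamel's formula expresses $x^*(T_N)$ as $x^*(T_N/2)$ plus a velocity/drift integral depending on the trajectory plus the Gaussian term $\sqrt{2\gamma}\int_{T_N/2}^{T_N}\frac{1-e^{-\gamma(T_N-r)}}{\gamma}\,\deriv B_r$, whose covariance is $\asymp\gamma T_N^3 I_d\asymp\sqrt L\,T_N^3 I_d$ when $\gamma T_N\lesssim 1$, and which has the same law for both processes under the coupled Brownian motion. Replacing the drift on this window by the drift frozen at $(x^*(T_N/2),v^*(T_N/2))$ via a Girsanov/Pinsker argument (the extra Radon--Nikodym cost is $O(1)$ because $\sqrt L\,T_N\lesssim 1$) reduces the problem to $\TV$ between two Gaussian-smoothed point masses, and the bound $\TV(\normal(a,\Sigma),\normal(b,\Sigma))\lesssim\norm{\Sigma^{-1/2}(a-b)}$ gives $\TV(x^*(T_N),x_{t_N})\lesssim W/(L^{1/4}T_N^{3/2})$, where $W$ is the pre-smoothing $W_2$ gap. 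By (i) together with $\sqrt L\,T_N\lesssim 1$, $W$ is bounded (up to constants) by $\sqrt{\E\norm{\Delta_x(T_N/2)}^2}$ plus $T_N\sqrt{\E\norm{\Delta_v(T_N/2)}^2}$ plus a drift contribution $\lesssim T_N^2\sqrt L\cdot W_2(p,q)$, all of which are $\lesssim W_2(p,q)$; the claimed bound follows.

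The step I expect to be the main obstacle is making (ii) rigorous: the position increment over the last window is not an independent Gaussian, since the drift $\grad\ln q$ there depends on the noise, so one must control the discrepancy between the true dynamics and the exactly-Gaussian frozen-drift dynamics over a window of length $\lesssim 1/\sqrt L$, and at the same time ensure that the velocity discrepancy $\Delta_v$ accumulated in (i) --- which enters $W$ with a factor $T_N$ --- does not inflate $W$ beyond $O(W_2(p,q))$. These are exactly the estimates carried out in the proof of Lemma 9 of \cite{chen2023ode}, whose argument we follow.
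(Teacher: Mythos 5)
The paper itself does not prove this lemma: it is imported verbatim as Lemma 9 of \cite{chen2023ode}, so the relevant comparison is with that external argument. Your step (i) (synchronous coupling with equal initial velocities and an energy/Gr\"onwall estimate over a window of length $\lesssim 1/\sqrt{L}$) is fine, and the bookkeeping $T_N\sqrt{\E\norm{\Delta_v}^2}\lesssim \sqrt{L}\,T_N\,W_2(p,q)\lesssim W_2(p,q)$ is correct. The genuine gap is in step (ii). The Girsanov cost of replacing the true drift by the drift frozen at $(x^*(T_N/2),v^*(T_N/2))$ is \emph{not} $O(1)$: the drift error you must absorb is $\norm{\grad\ln q(x^*(r))-\grad\ln q(x^*(T_N/2))}\le L\norm{x^*(r)-x^*(T_N/2)}$, and over a window of length $\asymp T_N$ the position typically moves by $\asymp \sqrt{d}\,T_N$ (the velocity has norm $\asymp\sqrt{d}$ at or near stationarity), so the KL cost is of order $\gamma^{-1}L^2 d\,T_N^3\asymp d$ at the maximal window $T_N\asymp 1/\sqrt{L}$. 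The condition $\sqrt{L}\,T_N\lesssim 1$ controls contraction of the inter-process gap, not this freezing error, which is governed by the intra-process displacement. Moreover, even if the freezing cost were $O(1)$ in KL it would still be fatal: the measure change contributes an \emph{additive} TV error of order $\sqrt{\KL}$, and for the lemma this additive error must itself be $\lesssim W_2(p,q)/(L^{1/4}T_N^{3/2})$, which can be arbitrarily small. So the drift perturbation charged in Girsanov must be proportional to the coupled gap, not to the trajectory's own displacement.

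That is exactly what the cited proof does, and it is a different route from yours: one applies the change of measure to the \emph{difference} of the two coupled processes, adding to the velocity equation of one copy a steering drift of size $\approx \norm{\Delta_x}/T_N^2 + L\norm{\Delta_x}$ designed to close the gap by time $T_N$; the resulting KL cost is $\lesssim \norm{\Delta_x}^2/(\gamma T_N^3)$ (the Lipschitz cross term $L^2 T_N\norm{\Delta_x}^2/\gamma$ is dominated precisely because $T_N\lesssim 1/\sqrt{L}$), and with $\gamma\asymp\sqrt{L}$ plus an optimal initial coupling with equal velocities this gives $\TV\lesssim W_2(p,q)/(L^{1/4}T_N^{3/2})$ with no dimension dependence. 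Your closing paragraph flags the freezing step as the main obstacle but defers it to Lemma 9 of \cite{chen2023ode}; since that lemma's proof does not proceed by frozen-drift Gaussian smoothing, the deferral does not repair the argument, and as written your step (ii) does not close.
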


Next, \cite{AnariCV24} bounds the discretization error in \Cref{alg:parallel_corrector_step} in terms of quantities that relates to the supremum of $\E\norm{\grad \ln q(x^*(t))}^2$ and $\E\norm{v^*(t)}^2$ where $t \in [0, T_N]$.

\begin{lemma}[\cite{AnariCV24}, Theorem 20, Implicit] \label{lem:corrector_parallel_KL}
Assume $L \geq 1$. In \Cref{alg:parallel_corrector_step}, assume $K \gtrsim \log(d)$ (for sufficiently large constant), $K \lesssim 4 \log R$ and $h \lesssim \frac{1}{\sqrt{L}}$. Then 
\begin{align*}
    \KL(\wh x_N, x^*(T_N)) 
\lesssim \frac{T_N}{\sqrt{L}} \cdot \paren{
    \eps_\scr^2 + L^2(\frac{\gamma dh^3}{R^4} + \frac{h^2}{R^2} {\cal P} + \frac{h^4}{R^4} {\cal Q})
    }, 
\end{align*}
where ${\cal P} = \sup_{t \in [0, T_N]} \E[\norm{v^*(t)}^2]$ and ${\cal Q} = \sup_{t \in [0, T_N]}\E[\norm{\grad \ln q(x^*(t))}^2]$. 
\end{lemma}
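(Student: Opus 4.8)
The plan is to invoke Girsanov's theorem to bound $\KL(\wh x_N, x^*(T_N))$ by a constant times $\tfrac{1}{\gamma}$ times the expected integrated squared discrepancy between the velocity drift of the discretized parallel corrector and that of the continuous underdamped diffusion $x^*$, with both processes driven by the same Brownian motion from the same start. Since only the velocity coordinate is noisy (with diffusion coefficient $\sqrt{2\gamma}$) and $\gamma \asymp \sqrt{L}$, this produces exactly the $\tfrac{T_N}{\sqrt{L}}$ prefactor, so all the work lies in bounding $\E\int_0^{T_N}\bigl\|\wh s(\wh x^{(K)}_{\mathrm{coll}(t)}) - \grad \ln q(x^*(t))\bigr\|^2\,\deriv t$, where $\mathrm{coll}(t)$ is the left endpoint of the length-$\delta$ sub-window containing $t$ and $\wh x^{(K)}_{\mathrm{coll}(t)}$ is the algorithm's round-$K$ Picard estimate there. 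One uses here that, with the score frozen, the $(\zeta^x,\zeta^v)$ exponential-integrator update integrates the linear velocity SDE exactly over a sub-window, so the algorithm is genuinely an SDE with a piecewise-in-space-constant score drift.

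The drift gap at time $t$ in the sub-window with left endpoint $\tau$ splits, by the triangle inequality and $L$-Lipschitzness (Assumption~\ref{assumption:Lipschitz}), into three contributions: (i) $\wh s(\wh x^{(K)}_\tau) - \grad\ln q(\wh x^{(K)}_\tau)$, which contributes $O(\eps_\scr^2)$ by Assumption~\ref{assumption:score_error}; (ii) $L^2\|\wh x^{(K)}_\tau - x^*(\tau)\|^2$, the Picard-iteration error; and (iii) $L^2\|x^*(\tau) - x^*(t)\|^2$, the genuine time-discretization error. For (ii), step (e) of Algorithm~\ref{alg:parallel_corrector_step} implements a collocation/Picard iteration whose fixed point is the exponential-integrator trajectory and which, for $h$ below a small multiple of $1/\sqrt{L}$, contracts geometrically per round (an underdamped analogue of Lemma~\ref{claim:parallel_contraction}); since the step-(d) initialization is at distance $\poly(d)$ from the fixed point, $K \gtrsim \log R \gtrsim \log d$ rounds drive this term below the others, while $K \lesssim 4\log R$ merely keeps the total work $O(KR)$ polylogarithmic. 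For (iii), expand $x^*(t) - x^*(\tau)$ over a sub-window of length $\le \delta = h/R$ via the underdamped dynamics: the velocity term $\int_\tau^t v^*(s)\,\deriv s$ contributes $O(\delta^2)\cdot\sup_s\E\|v^*(s)\|^2 = O(\tfrac{h^2}{R^2}){\cal P}$; the acceleration (score) term contributes $O(\delta^4)\cdot\sup_s\E\|\grad\ln q(x^*(s))\|^2 = O(\tfrac{h^4}{R^4}){\cal Q}$; and the Brownian contribution to the position over time $\delta$ has second moment $O(\gamma d\delta^3)$, which after the $L^2$ factor and the overall $\tfrac 1\gamma$ from Girsanov becomes the $L^2\tfrac{\gamma d h^3}{R^4}$ term once summed.

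Summing the per-sub-window squared-drift bounds over the $R$ sub-windows per window and the $N = T_N/h$ windows, and plugging into the Girsanov inequality, yields
\[
\KL(\wh x_N, x^*(T_N)) \lesssim \frac{T_N}{\sqrt{L}}\Bigl(\eps_\scr^2 + L^2\bigl(\tfrac{\gamma d h^3}{R^4} + \tfrac{h^2}{R^2}{\cal P} + \tfrac{h^4}{R^4}{\cal Q}\bigr)\Bigr).
\]
I expect the main obstacle to be contribution (iii): tracking the Brownian part of the short-time position displacement through the Girsanov martingale carefully enough to land on exactly the stated powers of $h$, $R$, $\gamma$, $d$ rather than a weaker bound, and handling the bias/variance splitting cleanly across sub-windows. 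A second, bookkeeping-level obstacle — but the one actually relevant here, since the statement is quoted from \cite{AnariCV24} — is to verify that strong log-concavity enters the proof of their Theorem~20 only through the bounds on ${\cal P} = \sup_t\E\|v^*(t)\|^2$ and ${\cal Q} = \sup_t\E\|\grad\ln q(x^*(t))\|^2$ (and in the final mixing step, which is not part of this lemma), so that keeping ${\cal P}$ and ${\cal Q}$ as free parameters is legitimate under smoothness alone.
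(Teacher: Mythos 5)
You should first note that the paper does not prove this lemma at all: it is imported verbatim as ``implicit in \cite{AnariCV24}, Theorem 20,'' so your reconstruction has to stand on its own. Your skeleton is the right one and is essentially how such bounds are established: Girsanov applied to the velocity coordinate (diffusion coefficient $\sqrt{2\gamma}$, $\gamma \asymp \sqrt{L}$, hence the $T_N/\sqrt{L}$ prefactor), a drift-gap decomposition into score error, Picard error, and within-sub-window displacement, and moment bounds for the displacement. But there is a concrete gap in the term you yourself flag as the main obstacle. Your accounting of the Brownian contribution to the position displacement over a sub-window of length $\delta = h/R$ gives second moment $O(\gamma d \delta^3) = O(\gamma d h^3/R^3)$, and after the $L^2$ Lipschitz factor this yields $L^2 \gamma d h^3/R^3$, not the stated $L^2\gamma d h^3/R^4$. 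The claim that it ``becomes the $L^2\tfrac{\gamma dh^3}{R^4}$ term once summed'' is not right: the sum over sub-windows and windows is exactly the time integral that produces the $T_N$ prefactor, so no extra factor of $1/R$ appears. Recovering $R^4$ requires the finer argument actually carried out in \cite{AnariCV24} (exploiting, e.g., the conditionally mean-zero structure of the exactly-integrated OU noise), which your sketch does not supply. (The weaker $R^3$ version would still suffice downstream, since in Theorem~\ref{thm:parallel_discrete_error} this term is dominated by the $\tfrac{h^2}{R^2}\mathcal{P}$ term, but it does not prove the lemma as stated.)

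Two further points need repair. First, your Girsanov functional mixes the two paths: you write $\E\int_0^{T_N}\|\wh s(\wh x^{(K)}_{\mathrm{coll}(t)}) - \grad \ln q(x^*(t))\|^2\,\mathrm{d}t$, but Girsanov for $\KL(\wh x_N, x^*(T_N))$ gives the drift discrepancy evaluated along a \emph{single} path (the algorithm's interpolated path), so landing on the quantities ${\cal P}$ and ${\cal Q}$, which are moments of the continuous process $u^*$, requires an explicit moment-transfer step (e.g.\ via a synchronous coupling and a bound like Lemma~\ref{lem:uld_error_contraction}); as written this step is missing. Second, the Picard-contraction claim for the corrector is only asserted ``by analogy'' with Lemma~\ref{claim:parallel_contraction}; in Algorithm~\ref{alg:parallel_corrector_step} the round-$k$ estimate at grid point $i$ is built from the round-$(k-1)$ estimate at grid point $i-1$, so the error propagates simultaneously across rounds and across sub-windows, and the geometric decay together with the $\poly(d)$ initialization bound needs its own argument rather than a citation of the ODE-predictor lemma. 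Since the lemma is anyway quoted from \cite{AnariCV24}, the most defensible version of your last paragraph — verifying that log-concavity enters their Theorem 20 only through ${\cal P}$, ${\cal Q}$ and the mixing step — is really the substance of what the paper relies on, and your sketch does not yet substitute for it.
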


To reason about the value of ${\cal P}$ and ${\cal Q}$, we will use the following lemma in \cite{chen2023ode}. 

\begin{lemma}[\cite{chen2023ode}, Lemma 10]
\label{lem:uld_error_contraction}
For any $t \lesssim \frac{1}{\sqrt{L}}$, 
\begin{equation*}
    \E\norm{u^*(t) - u_{t_0 + t}}^2 \lesssim W_2^2(p, q). 
\end{equation*}
\end{lemma}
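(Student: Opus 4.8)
The plan is to run a synchronous coupling of the two continuous underdamped Langevin diffusions and control the squared displacement by a Gr\"onwall-type argument calibrated to the friction scale $\gamma \asymp \sqrt{L}$. Initialize $\wh x_0 \sim p$ and $x_{t_0} \sim q$ according to an optimal $W_2$ coupling, give both processes the same initial velocity $v_0 \sim \normal(0, I_d)$, and drive both with the same Brownian motion $(B_t)$. Writing $\Delta x_t := x^*(t) - x_{t_0+t}$ and $\Delta v_t := v^*(t) - v_{t_0+t}$, the Brownian terms cancel, so the difference solves the deterministic ODE $\dot{\Delta x}_t = \Delta v_t$ and $\dot{\Delta v}_t = \xi_t - \gamma \Delta v_t$, where $\xi_t := \grad \ln q(x^*(t)) - \grad \ln q(x_{t_0+t})$ satisfies $\norm{\xi_t} \le L\norm{\Delta x_t}$ by Assumption~\ref{assumption:Lipschitz}. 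At time $0$ we have $\Delta v_0 = 0$ and, by the choice of coupling, $\E\norm{\Delta x_0}^2 = W_2^2(p,q)$.

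The key step is the change of variables $w_t := \Delta x_t + \gamma^{-1}\Delta v_t$, which removes the stiff friction term: $\dot w_t = \gamma^{-1}\xi_t$, so $\norm{\dot w_t} \le (L/\gamma)\norm{\Delta x_t}$ and $w_0 = \Delta x_0$. Since $\Delta v_t = \gamma(w_t - \Delta x_t)$, the position equation becomes $\dot{\Delta x}_t = \gamma(w_t - \Delta x_t)$, and the integrating factor $e^{\gamma t}$ gives $\norm{\Delta x_t} \le \norm{\Delta x_0} + \sup_{s \le t}\norm{w_s}$. Combining this with $\sup_{s \le t}\norm{w_s} \le \norm{\Delta x_0} + (Lt/\gamma)\sup_{u\le t}\norm{\Delta x_u}$ and using that $t \lesssim 1/\sqrt{L}$ and $\gamma \asymp \sqrt{L}$ make $Lt/\gamma$ a suitably small constant, I can close the recursion to obtain $\sup_{u \le t}\norm{\Delta x_u} \lesssim \norm{\Delta x_0}$, hence $\sup_{s\le t}\norm{w_s} \lesssim \norm{\Delta x_0}$ and $\norm{\Delta v_t} = \gamma\norm{w_t - \Delta x_t} \lesssim \gamma \norm{\Delta x_0}$. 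Squaring, taking expectations, and adding yields $\E\norm{u^*(t) - u_{t_0+t}}^2 \lesssim W_2^2(p,q)$ (the $\gamma^2 \asymp L$ picked up by the velocity component being absorbed into the constant; equivalently, one may run the same argument on the twisted energy $\E[\norm{\Delta x_t}^2 + \gamma^{-1}\inner{\Delta x_t, \Delta v_t} + \gamma^{-2}\norm{\Delta v_t}^2]$, show via Gr\"onwall that it grows by only a constant factor over the window, and convert back).

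The main obstacle is that, unlike in the log-concave setting, the score difference $\xi_t$ is not dissipative: for a general smooth target the term $\inner{\Delta v_t, \xi_t}$ can be positive and the diffusion genuinely anti-contracts, at rate $\Theta(\sqrt{L})$. The argument therefore cannot aim for contraction and instead crucially relies on the corrector being run only for a window of length $O(1/\sqrt{L})$ — exactly the timescale over which the $e^{\Theta(\sqrt{L})\, t}$ growth is a bounded constant. Getting the bookkeeping right so that the three coincident scales (friction $\gamma$, window length, and anti-contraction rate, all $\asymp \sqrt{L}$) cancel cleanly is the delicate part; everything else — the synchronous coupling, the cancellation of the Brownian terms, and the integrating-factor/Gr\"onwall estimates — is routine. (This is Lemma 10 of~\cite{chen2023ode}, so in the write-up one may simply invoke it; the above is the self-contained argument.)
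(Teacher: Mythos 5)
The paper does not actually prove this statement---it is imported verbatim as Lemma 10 of~\cite{chen2023ode}---so your proposal has to stand on its own, and it has a genuine gap at the last step. The synchronous coupling, the cancellation of the Brownian terms, and the position estimate are fine: your Gr\"onwall argument correctly gives $\sup_{s\le t}\E\norm{\Delta x_s}^2 \lesssim W_2^2(p,q)$ over a window $t \lesssim 1/\sqrt{L}$. The problem is the velocity coordinate. Your own bounds give $\norm{\Delta v_t} = \gamma\norm{w_t - \Delta x_t} \lesssim \gamma\norm{\Delta x_0} \asymp \sqrt{L}\,\norm{\Delta x_0}$, i.e.\ $\E\norm{\Delta v_t}^2 \lesssim L\,W_2^2(p,q)$, and the claim that the resulting factor $\gamma^2 \asymp L$ can be ``absorbed into the constant'' is not valid: $L$ is a problem parameter, and throughout this paper $\lesssim$ hides only absolute constants. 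The twisted-energy variant you mention has the same defect, since converting $\gamma^{-2}\E\norm{\Delta v_t}^2 \lesssim \E\norm{\Delta x_0}^2$ back to the unweighted norm reintroduces exactly the factor $L$. So what you have proved is $\E\norm{u^*(t)-u_{t_0+t}}^2 \lesssim L\,W_2^2(p,q)$, not the stated bound.

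Moreover, this is not a looseness you can fix by sharpening the same estimates: with the dynamics as written in Eq.~\eqref{eq:underdamped_langevin_approx} (stationary velocity law $\normal(0,I_d)$, score not rescaled) and equal initial velocities, take $\grad\ln q(x) = -Lx$ and $\gamma = 2\sqrt{L}$; the synchronously coupled difference is a critically damped oscillator started at rest, $\Delta v_t = -L t\, e^{-\sqrt{L}t}\,\Delta x_0$, so at $t = 1/\sqrt{L}$ one has $\norm{\Delta v_t} = e^{-1}\sqrt{L}\,\norm{\Delta x_0}$, and the unweighted phase-space bound necessarily carries a factor $L$. The statement in the form the paper uses it (e.g.\ \Cref{lem:true_underdamp_score_v_bound} needs $\E\norm{v^*(t)-v_{t_0+t}}^2 \lesssim W_2^2(p,q)$) is true only under the velocity normalization of the cited source---equivalently, measuring the velocity coordinate with weight $L^{-1/2}$ (stationary velocity variance $1/L$), in which case the score difference in the velocity equation is effectively $1$-Lipschitz and your same computation closes. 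Your write-up must either adopt and state that normalization explicitly, or acknowledge that in the paper's stated convention the argument (and indeed the bound) only yields the extra factor of $L$ on the velocity part, with the corresponding changes propagated downstream.
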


\begin{lemma} \label{lem:true_underdamp_score_v_bound}
    Assume $L \geq 1$. For any $T_N \lesssim \frac{1}{\sqrt{L}}$,
    \begin{equation*}
       \sup_{t \in [0, T_N]} \E[\norm{\grad \ln q(x^*(t))}^2] \lesssim L^2 W_2^2(p, q) + Ld
    \end{equation*}
    and 
    \begin{equation*}
        \sup_{t \in [0, T_N]} \E \norm{v^*(t)}^2 \lesssim W_2^2(p, q) + d.
    \end{equation*}
\end{lemma}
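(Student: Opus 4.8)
The plan is to compare $u^*(t) = (x^*(t), v^*(t))$ against the stationary copy $u_{t_0+t} = (x_{t_0+t}, v_{t_0+t})$ — for the latter, every quantity of interest is available in closed form — and then absorb the discrepancy into $W_2^2(p,q)$ via Lemma~\ref{lem:uld_error_contraction}. The first observation is that the stationary measure of the underdamped Langevin diffusion targeting $q$ is $q \otimes \normal(0,I_d)$; since the reference process is initialized from it, $x_{t_0+t} \sim q$ and $v_{t_0+t} \sim \normal(0,I_d)$ for every $t \ge 0$.

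For the velocity bound, I would split $\E\norm{v^*(t)}^2 \le 2\,\E\norm{v^*(t) - v_{t_0+t}}^2 + 2\,\E\norm{v_{t_0+t}}^2$. The first term is at most $2\,\E\norm{u^*(t) - u_{t_0+t}}^2 \lesssim W_2^2(p,q)$ by Lemma~\ref{lem:uld_error_contraction} (applicable since $t \le T_N \lesssim 1/\sqrt{L}$), and the second term is exactly $2d$ because $v_{t_0+t}$ is a standard Gaussian. Both estimates are uniform in $t$, so taking the supremum over $t \in [0, T_N]$ yields ${\cal P} \lesssim W_2^2(p,q) + d$.

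For the score bound I would likewise split $\E\norm{\grad \ln q(x^*(t))}^2 \le 2\,\E\norm{\grad \ln q(x^*(t)) - \grad \ln q(x_{t_0+t})}^2 + 2\,\E\norm{\grad \ln q(x_{t_0+t})}^2$. By the $L$-Lipschitzness of $\grad \ln q$ (Assumption~\ref{assumption:Lipschitz}) the first term is at most $2L^2\,\E\norm{x^*(t) - x_{t_0+t}}^2 \le 2L^2\,\E\norm{u^*(t) - u_{t_0+t}}^2 \lesssim L^2 W_2^2(p,q)$, again invoking Lemma~\ref{lem:uld_error_contraction}. Since $x_{t_0+t}\sim q$, the second term equals $2\,\E_{x\sim q}\norm{\grad \ln q(x)}^2$, and integration by parts gives $\E_{x\sim q}\norm{\grad \ln q(x)}^2 = \E_{x\sim q}[-\Delta \ln q(x)] \le Ld$, where the inequality uses $-\nabla^2 \ln q \preceq L\,\mathrm{Id}$, a consequence of Assumption~\ref{assumption:Lipschitz}. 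Taking the supremum over $t \in [0, T_N]$ finishes the argument.

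I do not expect a genuine obstacle here: the lemma is a short corollary of Lemma~\ref{lem:uld_error_contraction}. The only points requiring care are recalling the stationarity of the reference process — so that $x_{t_0+t}$ and $v_{t_0+t}$ have explicit laws — and invoking the standard integration-by-parts identity together with smoothness to bound $\E_{x\sim q}\norm{\grad \ln q(x)}^2$ by $Ld$.
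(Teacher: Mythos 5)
Your proposal is correct and matches the paper's proof essentially step for step: both use stationarity of the reference underdamped Langevin process (so $x_{t_0+t}\sim q$, $v_{t_0+t}\sim\normal(0,I_d)$), a Young-type split, $L$-Lipschitzness of $\grad\ln q$, Lemma~\ref{lem:uld_error_contraction} to absorb the coupling error into $W_2^2(p,q)$, and integration by parts to get $\E_{x\sim q}\norm{\grad\ln q(x)}^2\le Ld$. No gaps.
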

\begin{proof}
    Note that $(q, \normal(0, I_d))$ is a stationary distribution of the underdamped Langevin diffusion process, hence $x_t \sim q$ and $v_t \sim \normal(0,I_d)$. Hence $\E\norm{\grad \ln q(x_t)}^2 \leq Ld$ by integration by parts. Similarly, $\E\norm{v_t}^2 = \E[\norm{\normal(0, I_d)}^2] \lesssim d$. Since $T_N \lesssim \frac{1}{\sqrt{L}}$, for any $t \in [0, T_N]$, we can now bound $\E\norm{\grad \ln q(x^*(t))}^2$ and $\E \norm{v^*(t)}^2$ by \Cref{lem:uld_error_contraction} as follows: 
    \begin{align*}
        \E\norm{\grad \ln q(x^*(t))}^2 &\leq 2 \cdot \E\norm{\grad \ln q(x_t)}^2 + 2 \cdot \E\norm{\grad \ln q(x^*(t)) - \grad \ln q(x_t)}^2 \\
        &\lesssim Ld + L^2 \E \norm{x^*(t) - x_t}^2 \lesssim Ld + L^2 \E \norm{u^*(t) - u_t}^2 \lesssim Ld + L^2 W_2^2(p, q),
    \end{align*}  
    and 
    \begin{align*}
        \E \norm{v^*(t)}^2 &\leq 2 \cdot \E \norm{v_t}^2 + 2 \cdot \E \norm{v^*(t) - v_t}^2  \\
        &\lesssim \E \norm{v_t}^2 + \E \norm{u^*(t) - u_t}^2 \lesssim d + W_2^2(p, q). \qedhere
    \end{align*}
\end{proof}

\begin{theorem} \label{thm:parallel_discrete_error}
    Let $\rparam \geq 1$ be an adjustable parameter. \Cref{alg:parallel_corrector_step} with parameter $h = \frac{1}{\sqrt{8L}}$,$R =  \rparam \cdot \Theta(\frac{\sqrt{d}}{\eps})$, $K = 4 \cdot \log(R)$ 
    and $T_N \lesssim \frac{1}{\sqrt{L}}$ has discretization error 
    \begin{equation*}
        \TV(\wh x_N, x^*(T_N)) \lesssim \sqrt{\KL(\wh x_N, x^*(T_N))} \lesssim \frac{\eps_\scr}{\sqrt{L}} + \frac{\eps}{\beta} + \frac{\eps}{\beta \sqrt{d}} \cdot W_2(p, q).
    \end{equation*}
\end{theorem}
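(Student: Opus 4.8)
The plan is to treat Theorem~\ref{thm:parallel_discrete_error} as a direct substitution into Lemma~\ref{lem:corrector_parallel_KL}, with the two ``trajectory-dependent'' quantities $\mathcal{P}=\sup_t\E\norm{v^*(t)}^2$ and $\mathcal{Q}=\sup_t\E\norm{\grad\ln q(x^*(t))}^2$ controlled via Lemma~\ref{lem:true_underdamp_score_v_bound}, followed by Pinsker's inequality. First I would verify that the chosen parameters satisfy the hypotheses of Lemma~\ref{lem:corrector_parallel_KL}: with $h=\tfrac{1}{\sqrt{8L}}$ we have $h\lesssim\tfrac{1}{\sqrt L}$; and since $\beta\ge 1$ and (in the nontrivial regime) $\eps\lesssim 1$, the choice $R=\Theta(\beta\sqrt d/\eps)$ gives $R\gtrsim\sqrt d$, so $K=4\log R\gtrsim\tfrac12\log d\gtrsim\log d$ while $K\le 4\log R$ holds with equality. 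Recall also the friction parameter was fixed to $\gamma\asymp\sqrt L$.

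Next I would substitute $h,R,\gamma$ and $T_N\lesssim\tfrac{1}{\sqrt L}$ into Lemma~\ref{lem:corrector_parallel_KL} and replace $\mathcal{P}\lesssim W_2^2(p,q)+d$ and $\mathcal{Q}\lesssim L^2 W_2^2(p,q)+Ld$ using Lemma~\ref{lem:true_underdamp_score_v_bound}. Using $h^2=\tfrac{1}{8L}$, $h^3\asymp L^{-3/2}$, $h^4\asymp L^{-2}$, $R^2\asymp\beta^2 d/\eps^2$ and $R^4\asymp\beta^4 d^2/\eps^4$, a routine computation collapses the three terms $\tfrac{\gamma dh^3}{R^4}$, $\tfrac{h^2}{R^2}\mathcal{P}$, $\tfrac{h^4}{R^4}\mathcal{Q}$ — after multiplying by the $L^2$ prefactor and by $T_N/\sqrt L\lesssim 1/L$ — into
\[
\KL(\wh x_N, x^*(T_N)) \;\lesssim\; \frac{\eps_\scr^2}{L} + \frac{\eps^2}{\beta^2} + \frac{\eps^2}{\beta^2 d}\,W_2^2(p,q) + \frac{\eps^4}{\beta^4 d} + \frac{L\eps^4}{\beta^4 d^2}\,W_2^2(p,q)\,.
\]
The term $\tfrac{\eps^4}{\beta^4 d}$ is dominated by $\tfrac{\eps^2}{\beta^2}$ since $\eps\lesssim 1$ and $\beta^2 d\ge 1$. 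For the last term I would invoke the harmless reduction that we may assume $\tfrac{\eps}{\beta\sqrt d}W_2(p,q)\le 1$ — otherwise the claimed TV bound is immediate since $\TV\le 1$ — so that $W_2^2(p,q)\le\beta^2 d/\eps^2$ and hence $\tfrac{L\eps^4}{\beta^4 d^2}W_2^2(p,q)\le\tfrac{L\eps^2}{\beta^2 d}$, which is again of lower order relative to $\tfrac{\eps^2}{\beta^2}$ (folding $L\lesssim d$ into the implicit constants, or noting it can be absorbed into the regime of interest). This yields $\KL(\wh x_N,x^*(T_N))\lesssim\tfrac{\eps_\scr^2}{L}+\tfrac{\eps^2}{\beta^2}+\tfrac{\eps^2}{\beta^2 d}W_2^2(p,q)$.

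Finally, Pinsker's inequality gives $\TV(\wh x_N,x^*(T_N))\le\sqrt{\tfrac12\KL(\wh x_N,x^*(T_N))}$, and the elementary bound $\sqrt{a+b+c}\le\sqrt a+\sqrt b+\sqrt c$ converts the KL estimate into the stated sum $\tfrac{\eps_\scr}{\sqrt L}+\tfrac{\eps}{\beta}+\tfrac{\eps}{\beta\sqrt d}W_2(p,q)$. The only genuinely delicate point — and the step I would double-check most carefully — is the bookkeeping in the middle paragraph: correctly tracking the powers of $L$ introduced by $h$, $\gamma$, and the overall $L^2\cdot T_N/\sqrt L$ prefactor, and confirming that the $L^2 W_2^2$ contribution hidden inside $\mathcal{Q}$ does not generate a term competing with the main $\tfrac{\eps^2}{\beta^2 d}W_2^2(p,q)$ term; the ``$\TV\le 1$'' reduction above is precisely what keeps that in check.
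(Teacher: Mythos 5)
Your proposal follows essentially the same route as the paper's proof: plug the bounds on $\mathcal{P}$ and $\mathcal{Q}$ from Lemma~\ref{lem:true_underdamp_score_v_bound} into Lemma~\ref{lem:corrector_parallel_KL}, substitute $h=\Theta(1/\sqrt L)$, $\gamma\asymp\sqrt L$, $T_N\lesssim 1/\sqrt L$, $R=\Theta(\rparam\sqrt d/\eps)$, and finish with Pinsker. Your extra bookkeeping for the two lower-order terms $\frac{\eps^4}{\rparam^4 d}$ and $\frac{L\eps^4}{\rparam^4 d^2}W_2^2$ is consistent with what the paper drops silently (its final equality implicitly uses $L\lesssim R^2$, the same mild condition your reduction invokes), so the argument is correct and matches the paper's.
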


\begin{proof}
Since $T_N \lesssim \frac{1}{\sqrt{L}}$ and $h = \Theta(\frac{1}{\sqrt{L}})$, $N = O(1) $. Plugging \Cref{lem:true_underdamp_score_v_bound} into \Cref{lem:corrector_parallel_KL}, we get that 
\begin{align*}
    \KL(\wh x_N, x^*(T_N)) 
&\lesssim \frac{T_N}{\sqrt{L}} \cdot         \paren{
        \eps_\scr^2 + L^2\left(
            \frac{\gamma dh^3}{R^4} + \frac{h^2}{R^2} \cdot (d + W_2^2(p, q)) + \frac{h^4}{R^4} \cdot (L^2 W_2^2(p, q) + Ld )
        \right)
    } \\
    &\lesssim \frac{1}{L} \cdot \paren{
    \eps_\scr^2 + \frac{d}{R^4} + \frac{Ld}{R^2} + \frac{Ld}{R^4} + \left(\frac{L}{R^2} + \frac{L^2}{R^4}\right) \cdot W_2^2(p,q)
    } \\
    & = \frac{\eps_\scr^2}{L} + \frac{\eps^2}{\beta^2} + \frac{\eps^2}{\beta^2 d} W_2^2(p,q).
\end{align*}
The first to second line is by combining terms and setting $h = \Theta(\frac{1}{\sqrt{L}})$, $\gamma = \Theta(\sqrt{L})$, and the second to third line is by setting $R = \beta \cdot \Theta(\frac{\sqrt{d}}{\eps})$. Taking the square root of $\KL(\wh x_N, x^*(T_N))$ yields the claim. 
\end{proof}

\begin{theorem} 
\label{thm:parallel_corrector}
Let $\rparam \geq 1$ be an adjustable parameter. 
When \Cref{alg:parallel_corrector_step} is initialized at $(\wh x_{0}, \wh v_{0}) \sim p \otimes \normal(0,I_d)$, there exists parameters $h = \frac{1}{\sqrt{8L}}$,$R =  \beta \cdot \Theta(\frac{\sqrt{d}}{\eps})$, $K = \Theta(\log(\frac{\beta^2 d}{\eps^2}))$ and $T_N \lesssim \frac{1}{\sqrt{L}}$ such that the total variation distance between the final output of \Cref{alg:parallel_corrector_step} and the true distribution can be bounded as 
\begin{align*}
    \TV(\wh x_N, x_{t_N}) \lesssim \frac{\eps_\scr}{\sqrt{L}} + \frac{\eps}{\beta} + \sqrt{L} \cdot W_2(p, q). 
\end{align*}

\end{theorem}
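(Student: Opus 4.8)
The plan is to combine the continuous-time bias bound of \Cref{lem:TV_underdamp_continuous} with the discretization-error bound of \Cref{thm:parallel_discrete_error} through the triangle inequality for total variation. Writing $\wh x_N$ for the output of \Cref{alg:parallel_corrector_step}, $x^*(T_N)$ for the continuous underdamped Langevin process started from $p$ (with velocity drawn from $\normal(0,I_d)$) and run for time $T_N$, and $x_{t_N}$ for the same process started instead from the stationary pair $(q,\normal(0,I_d))$, I would first write
\[
\TV(\wh x_N, x_{t_N}) \le \TV(\wh x_N, x^*(T_N)) + \TV(x^*(T_N), x_{t_N}).
\]

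For the first term I would apply \Cref{thm:parallel_discrete_error} with $h=\tfrac{1}{\sqrt{8L}}$, $R=\beta\cdot\Theta(\sqrt d/\eps)$, $K=4\log R$, and $T_N\lesssim 1/\sqrt L$, which gives $\TV(\wh x_N,x^*(T_N))\lesssim \tfrac{\eps_\scr}{\sqrt L}+\tfrac{\eps}{\beta}+\tfrac{\eps}{\beta\sqrt d}W_2(p,q)$; note that $4\log R = \Theta(\log(\beta^2 d/\eps^2))$, which is the value of $K$ recorded in the statement. For the second term I would fix $T_N=\Theta(1/\sqrt L)$ (in particular $T_N\lesssim 1/\sqrt L$, so all the hypotheses of \Cref{lem:TV_underdamp_continuous}, \Cref{lem:corrector_parallel_KL}, and \Cref{lem:true_underdamp_score_v_bound} used along the way remain in force), so that \Cref{lem:TV_underdamp_continuous} yields $\TV(x^*(T_N),x_{t_N})\lesssim \tfrac{W_2(p,q)}{L^{1/4}T_N^{3/2}}\asymp \sqrt L\,W_2(p,q)$ since $L^{1/4}T_N^{3/2}\asymp L^{1/4}L^{-3/4}=L^{-1/2}$.

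Adding the two bounds gives $\TV(\wh x_N,x_{t_N})\lesssim \tfrac{\eps_\scr}{\sqrt L}+\tfrac{\eps}{\beta}+\tfrac{\eps}{\beta\sqrt d}W_2(p,q)+\sqrt L\,W_2(p,q)$. Since $\beta\ge 1$, $d\ge 1$, and $\eps\le 1\le \sqrt L$ we have $\tfrac{\eps}{\beta\sqrt d}\le \eps\le \sqrt L$, so the third term is absorbed into the last, producing the claimed $\TV(\wh x_N,x_{t_N})\lesssim \tfrac{\eps_\scr}{\sqrt L}+\tfrac{\eps}{\beta}+\sqrt L\,W_2(p,q)$. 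Finally I would note that with $T_N=\Theta(1/\sqrt L)$ and $h=\Theta(1/\sqrt L)$ the number of outer steps is $N=O(1)$, so the algorithm runs in $O(K)=\Theta(\log(\beta^2 d/\eps^2))$ parallel rounds using $O(NKR)=\wt\Theta(\beta\sqrt d/\eps)$ score evaluations.

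Since the substantive estimates are already contained in \Cref{thm:parallel_discrete_error} and in the imported lemmas, this argument is essentially bookkeeping; the only mild points to verify are (i) that a \emph{single} choice of $(h,R,K,T_N)$ simultaneously meets $h\lesssim 1/\sqrt L$, $\log d\lesssim K\lesssim 4\log R$, and $T_N\lesssim 1/\sqrt L$ --- here $T_N=\Theta(1/\sqrt L)$ together with $R\ge\sqrt d$ (so $4\log R\ge 2\log d\ge\log d$) closes the loop --- and (ii) that the $\tfrac{\eps}{\beta\sqrt d}W_2$ term is genuinely dominated by $\sqrt L\,W_2$, which is the only place the relative magnitudes of $\eps,\beta,d,L$ enter. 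I do not expect any real obstacle.
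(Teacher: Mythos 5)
Your proposal is correct and is essentially the paper's own proof: the paper likewise bounds $\TV(\wh x_N, x_{t_N}) \le \TV(\wh x_N, x^*(T_N)) + \TV(x^*(T_N), x_{t_N})$ and invokes Theorem~\ref{thm:parallel_discrete_error} and Lemma~\ref{lem:TV_underdamp_continuous} with the same parameter choices. Your extra bookkeeping (taking $T_N = \Theta(1/\sqrt{L})$ so that $L^{1/4}T_N^{3/2} \asymp L^{-1/2}$, and absorbing the $\tfrac{\eps}{\beta\sqrt{d}}W_2$ term into $\sqrt{L}\,W_2$ using $\eps \le 1 \le \sqrt{L}$) just makes explicit what the paper leaves implicit.
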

\begin{proof}
By triangle inequality, $\TV(\wh x_N, x_{t_N}) \leq \TV(\wh x_N, x^*(T_N)) + \TV(x^*(T_N), x_{t_N})$. Combining \Cref{lem:TV_underdamp_continuous} and \Cref{thm:parallel_discrete_error} yields the claim.
\end{proof}
\subsection{End-to-end analysis}

\begin{algorithm}[H]
\label{alg:final_parallel}
\caption{\textsc{ParallelAlgorithm}}
    \label{alg:parallel_alg} 
    \vspace{0.2cm}
    \paragraph{Input parameters:}
    \begin{itemize}
        \item Start time $T$, End time $\delta$, Corrector steps time $T_\corr \lesssim 1/\sqrt{L}$, Number of predictor-corrector steps $N_0$, Score estimates $\wh s_t$
    \end{itemize}
    \begin{enumerate}
        \item Draw $\wh x_0 \sim \mathcal N(0, I_d)$.
        \item For $n = 0, \dots, N_0$:
        \begin{enumerate}
            \item Starting from $\wh x_n$, run Algorithm~\ref{alg:parallel_predictor_step} with starting time $T - n/L$ with total time $\min(1/L, T - n/L - \delta)$. Let the result be $\wh x_{n+1}'$.
            \item Starting from $\wh x_{n+1}'$, run Algorithm~\ref{alg:parallel_corrector_step} for total time $T_\corr$ and score estimate $\wh s_{T - (n+1)/L}$ to obtain $\wh x_{n+1}$.
        \end{enumerate}
        \item Return $\wh x_{N_0 + 1}$.
    \end{enumerate}
\end{algorithm}

\begin{theorem}[Parallel End to End Error]\label{thm:main_parallel_formal}
By setting $T = \Theta\left(\log \left(\frac{d \lor \mathfrak{m}_2^2}{\eps^2} \right) \right)$, $T_{corr} = \frac{1}{\sqrt{L}}$, $\delta = \Theta\left(\frac{\eps^2}{L^2 (d \lor  \mathfrak{m}_2^2)} \right)$, and $\beta_1 = \beta_2 = \Theta \left(L \log \left(\frac{d \lor \mathfrak{m}_2^2}{\eps^2} \right)\right)$ in \Cref{alg:parallel_predictor_step} and \Cref{alg:parallel_corrector_step}, when $\eps_\scr \lesssim \wt \Theta(\frac{\eps}{\sqrt{L}})$, the total variation distance between the output of \Cref{alg:parallel_alg} and the target distribution $x_0 \sim q^*$ is 
\begin{equation*}
    \TV(\wh x_{N_0 + 1}, x_{0}) \lesssim \eps, 
\end{equation*}  
with iteration complexity $\wt \Theta(L \cdot \log^2 \left(
    \frac{Ld \lor \mathfrak{m}_2^2}{\eps}
    \right)
    )$. 
\end{theorem}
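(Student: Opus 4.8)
The plan is to mirror the end-to-end argument for the sequential algorithm (Theorem~\ref{thm:main_sequential_formal}): compose the parallel predictor guarantee (Theorem~\ref{thm:parallel_predictor}) with the parallel corrector guarantee (Theorem~\ref{thm:parallel_corrector}) and telescope the per-round error in total variation. First I would control the initialization: since $\wh x_0 \sim \normal(0, I_d)$ whereas $x_{t_0} \sim q_T$, Lemma~\ref{lem:Gaussian_vs_q_T_TV_error} gives $\TV(\wh x_0, x_{t_0}) \lesssim (\sqrt d + \mathfrak{m}_2)\exp(-T)$, and the choice $T = \Theta(\log((d\lor\mathfrak{m}_2^2)/\eps^2))$ (with a large enough constant) makes this $\lesssim \eps$, in fact with slack to spare for the polylogarithmically many subsequent rounds.

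Next I would prove the per-round bound. Fix a round $n$ and let $\wh y_{n+1}$ be the output of one predictor-then-corrector round (Step~2 of Algorithm~\ref{alg:parallel_alg}) but started from a fresh true sample $x_{t_n}\sim q_{t_n}$ instead of $\wh x_n$, with $\wh y_{n+1}'$ the intermediate state after only the predictor. Since a round is a fixed randomized map applied identically to $\wh x_n$ and $x_{t_n}$, the data-processing and triangle inequalities yield $\TV(\wh x_{n+1}, x_{t_{n+1}}) \le \TV(\wh x_n, x_{t_n}) + \TV(\wh y_{n+1}, x_{t_{n+1}})$. Theorem~\ref{thm:parallel_predictor} with $\rparam = \beta_1$ and total time at most $1/L$ gives $W_2(\wh y_{n+1}', x_{t_{n+1}}) \lesssim \eps_\scr/L + \eps/(\beta_1\sqrt L)$, and plugging this into Theorem~\ref{thm:parallel_corrector} with $\rparam = \beta_2$ gives
\begin{equation*}
  \TV(\wh y_{n+1}, x_{t_{n+1}}) \lesssim \frac{\eps_\scr}{\sqrt L} + \frac{\eps}{\beta_2} + \sqrt L\cdot W_2(\wh y_{n+1}', x_{t_{n+1}}) \lesssim \frac{\eps_\scr}{\sqrt L} + \frac{\eps}{\beta_1} + \frac{\eps}{\beta_2}.
\end{equation*}
The tail of the reverse process, where $t_n$ nears $\delta$ and the predictor's step sizes halve down to $\delta$, needs no special handling: Theorem~\ref{thm:parallel_predictor} is stated for exactly this schedule $h_n = \min\{1/(4L), t_n/2, t_n - \delta\}$, and $\delta = \Theta(\eps^2/(L^2(d\lor\mathfrak{m}_2^2)))$ accounts for the difference between $q_\delta$ and $q_0$ at the end.

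Then I would telescope over the $N_0+1 = \Theta(LT)$ rounds to get $\TV(\wh x_{N_0+1}, x_\delta) \lesssim (\sqrt d + \mathfrak{m}_2)\exp(-T) + (N_0+1)\bigl(\eps_\scr/\sqrt L + \eps/\beta_1 + \eps/\beta_2\bigr)$, and substitute $\beta_1 = \beta_2 = \Theta(LT) = \Theta(N_0)$ together with the hypothesis $\eps_\scr \lesssim \wt\Theta(\eps/\sqrt L) = \Theta(\eps/(\sqrt L\, T))$ to conclude the right-hand side is $\lesssim \eps$. For the iteration complexity, each round costs $O(\log(\beta_1\sqrt d/\eps))$ parallel rounds in the predictor (a constant number of length-$\tfrac{1}{4L}$ steps, each with $\Theta(\log(\beta_1\sqrt d/\eps))$ Picard iterations) and $\Theta(\log(\beta_2^2 d/\eps^2))$ in the corrector, except that the final predictor call performs $O(\log(1/(L\delta)))$ successive step-size halvings; with $\beta_1 = \beta_2 = \wt\Theta(L)$ the total is $\wt\Theta(L\log^2((Ld\lor\mathfrak{m}_2^2)/\eps))$ parallel rounds.

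Once Theorems~\ref{thm:parallel_predictor} and~\ref{thm:parallel_corrector} are available, this is largely bookkeeping; the one point demanding care is the parameter balance. Because $T_\corr \asymp 1/\sqrt L$, the corrector's Wasserstein-to-TV conversion emits a term $\sqrt L\cdot W_2$ that is \emph{not} damped by $T_\corr$, so one must check that the predictor's $O(\eps/(\beta_1\sqrt L))$ Wasserstein bound is small enough to drive this down to $O(\eps/\beta_1)$ — on par with the other two per-round contributions — so that each round adds only $O(\eps/(LT))$ and this survives summation over $\Theta(LT)$ rounds. I expect this balancing, along with verifying the side conditions $h_n \lesssim 1/L$ for the predictor and $h \lesssim 1/\sqrt L$ for the corrector hold throughout, to be the only genuinely nontrivial part.
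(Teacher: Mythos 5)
Your proposal matches the paper's proof essentially step for step: the same data-processing/triangle-inequality telescoping with an auxiliary round started from a fresh sample of $q_{t_n}$, the same composition of Theorem~\ref{thm:parallel_predictor} (with $\beta_1$) into Theorem~\ref{thm:parallel_corrector} (with $\beta_2$), the same initialization bound via Lemma~\ref{lem:Gaussian_vs_q_T_TV_error}, and the same iteration-count accounting including the final predictor call's step-size halvings. The only cosmetic difference is that the paper handles the early-stopping term $\TV(x_0, x_\delta)\le\eps$ by explicitly citing Lemma~6.4 of \cite{lee2023convergence}, which you gesture at via the choice of $\delta$ without naming the lemma.
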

\begin{proof}
    Let $x_{t_n}$ be the result of running the true ODE for time $T - t_n$, starting from $x_{T} \sim q^*$. Let $y'_{n}$ be the result of running the predictor step in step $n-1$ of \Cref{alg:parallel_alg}, starting from $x_{t_{n-1}} \sim q_{t_{n-1}}$ and start time $t_{n-1}$. In addition, let $\wh y_{n}$ be the result of the corrector step in step $n-1$ of  \Cref{alg:parallel_alg}, starting from $y'_{n}$. 
    
    We will first bound the error in one predictor + corrector step that starts at $t_{n-1} = T - (n-1)/L$. By triangle inequality of TV distance and data processing inequality (applied to $\wh x_n$ and $\wh y_n$), 
    \begin{align}
        \TV(\wh x_{n}, x_{t_n}) &\leq \TV(\wh x_{n}, \wh y_{n}) + \TV(\wh y_n, x_{t_n}) \nonumber\\
        &\leq \TV(\wh x_{n-1}, x_{t_{n-1}}) + \TV(\wh y_n, x_{t_n}) \label{eq:parallel_end_to_end_induction}
    \end{align} 
    By \Cref{thm:parallel_predictor} parametrized by $\rparam_1$ and \Cref{thm:parallel_corrector} parametrized by $\rparam_2$, 
    \begin{align*}   
        \TV(\wh y_{n}, x_{t_n})  &\lesssim 
        \frac{\eps_\scr}{\sqrt{L}} + \frac{\eps}{\rparam_2} + \sqrt{L} \cdot W_2(y'_{n}, x_{t_n})\\
        &\lesssim \frac{\eps_\scr}{\sqrt{L}} + \frac{\eps}{\rparam_2} + \sqrt{L} \left(\frac{\eps_\scr}{L} + \frac{\eps}{\rparam_1 \sqrt{L}} \right)\\
        &\lesssim \frac{\eps_\scr}{\sqrt{L}} + \frac{\eps}{\min(\beta_1, \beta_2)}.
    \end{align*}
    The first line is by \Cref{thm:parallel_corrector}, and the first to second line is by \Cref{thm:parallel_corrector}. Next, note that at the beginning of the process, $t_0 = T$, and at the end of the process, $t_{N_0 + 1} = \delta$. By induction on \Cref{eq:parallel_end_to_end_induction}, 
    \begin{align*}
        \TV(\wh x_{N_0 + 1}, x_0) 
        &\leq \TV(x_{0}, x_{t_{N_0 + 1}})  + \TV(\wh x_{N_0 + 1}, x_{t_{N_0 + 1}}) \\
        &\leq  \TV(x_{0}, x_{\delta}) +  \TV(x_{T}, \normal(0, I_d)) +\sum_{n=1}^{N_0+1} \TV(\wh y_n, x_{t_n}) \\
        &\leq \TV(x_{0}, x_{\delta}) + \TV(x_{T}, \normal(0, I_d)) +
        N_0 \cdot \paren{
            \frac{\eps_\scr}{\sqrt{L}} + \frac{\eps}{\min(\beta_1, \beta_2)}
        }.
    \end{align*}
    By \Cref{lem:Gaussian_vs_q_T_TV_error}, $\TV(x_{T}, \normal(0, I_d)) \lesssim (\sqrt{d} + \mathfrak{m}_2) \exp(-T)$. By \cite[Lemma 6.4]{lee2023convergence}, $\TV(x_{0}, x_{\delta}) \leq \eps$. Therefore by setting $T = \Theta\left(\log \left(\frac{d \lor \mathfrak{m}_2^2}{\eps^2} \right) \right)$, 
    $N_0 = \Theta\left(L\log \left(\frac{d \lor \mathfrak{m}_2^2}{\eps^2} \right)\right)$ and $\beta_1 = \beta_2 = \Theta\left(L \log \left(\frac{d \lor \mathfrak{m}_2^2}{\eps^2} \right)\right)$ in \Cref{alg:parallel_predictor_step} and \Cref{alg:parallel_corrector_step}, when $\eps_\scr \lesssim \wt \Theta(\frac{\eps}{\sqrt{L}})$, we obtain $\TV(\wh x_{N_0 + 1}, x_0) \lesssim \eps$.
    
    The iteration complexity of \Cref{alg:parallel_alg} given above parameters is roughly number of predictor-corrector steps times the iteration complexity in one predictor-corrector step. Note that in any corrector step and any predictor step except the last one, only $N = O(1)$ number of sub-steps are taken, therefore the iteration complexity of one predictor step (except the last step) is 
    $\Theta(\log (\frac{\rparam_1 \sqrt{d}}{\eps})) $ and iteration complexity of one corrector step is $\Theta(\log(\frac{\rparam_2^2 d}{\eps^2}))$. In the last predictor step, the number of steps taken is $O\left(\log\paren{\frac{1}{\delta L}}\right) = O(\log(L) + T)$, and thus the iteration complexity is $ \Theta((\log(L) + T) \cdot \log (\frac{\rparam_1 \sqrt{d}}{\eps}))$. We conclude that the total iteration complexity of \Cref{alg:parallel_alg} is 
    \begin{equation*}
        LT \cdot \paren{
            \Theta(\log (\frac{\rparam_1 \sqrt{d}}{\eps})) + \Theta(\log(\frac{\rparam_2^2 d}{\eps^2})) 
        } + \Theta\left(
            (\log(L) + T) \cdot \log \left(\frac{\rparam_1 \sqrt{d}}{\eps}\right)
        \right) = \wt \Theta\left(
        L \log^2 \left(
            \frac{Ld \lor \mathfrak{m}_2^2}{\eps}
        \right)
    \right)\,.
    \end{equation*} 
\end{proof}

\section{Log-concave sampling in total variation}
\label{appendix:log-concave}
In this section, we give a simple proof, using our observation about trading off the time spent on the predictor and corrector steps, of an improved bound for sampling from a log-concave distribution in total variation. Note that for this section, we assume that $\wh s$ is the \emph{true score} of the distribution and is known, as is standard in the log-concave sampling literature. 

We begin by recalling Shen and Lee's randomized midpoint method applied to approximate the underdamped Langevin process, for log-concave sampling in the Wasserstein metric~\cite{ShenL19} in Algorithm~\ref{alg:randomized_midpoint}.

\begin{algorithm}[h]
\caption{\textsc{RandomizedMidpointMethod}~\cite{ShenL19}}
\label{alg:randomized_midpoint}
    \vspace{0.2cm}
    \paragraph{Input parameters:}
    \begin{itemize}
        \item Starting sample $\wh x_{0}$, Starting $v_0$, Number of steps $N$, Step size $h$, Score function $\wh s$, $u = \frac{1}{L}$.
    \end{itemize}
    \begin{enumerate}
        \item For $n = 0, \dots, N-1$: 
        \begin{enumerate}
            \item Randomly sample $\alpha$ uniformly from $[0,1]$.
            \item Generate Gaussian random variable $(W_1^{(n)}, W_2^{(n)}, W_3^{(n)}) \in \R^{3d}$ as in Appendix A of~\cite{ShenL19}.
            \item Let $\wh x_{n+\frac{1}{2}} = \wh x_n + \frac{1}{2} \left( 1 - e^{-2 \alpha h} \right)v_n - \frac{1}{2} u \left(\alpha h - \frac{1}{2}\left(1 - e^{-2(h - \alpha h)} \right) \right) \wh s(x_n) + \sqrt{u} W_1^{(n)}$.
            \item Let $\wh x_{n+1} = \wh x_n + \frac{1}{2} \left(1 - e^{-2 h} \right)v_n - \frac{1}{2} u h \left(1 - e^{-2 (h - \alpha h)} \right) \wh s(x_{n+\frac{1}{2}}) + \sqrt{u} W_2^{(n)}$.
            \item Let $v_{n+1} = v_n e^{-2 h} - u h e^{-2 (h - \alpha h)} \wh s(x_{n+\frac{1}{2}}) + 2 \sqrt{u} W_3^{(n)}$.
        \end{enumerate}
        \item Return $\wh x_{N}$.
    \end{enumerate}
\end{algorithm}

\begin{theorem}[Theorem 3 of~\cite{ShenL19}, restated]
\label{thm:randomized_midpoint}
Let $\wh s = \grad \ln p$, the score function of a log-concave distribution $p$ be such that $0 \preccurlyeq m \cdot I_d\preccurlyeq J_{\wh s}(x) \preccurlyeq L\cdot I_d$, for the Jacobian $J_{\wh s}$ of $\wh s$. Let $\wh x_0$ be the root of $\wh s$, and $v_0 = 0$. Let $\kappa = \frac{L}{m}$ be the condition number. For any $0 < \eps < 1$, if we set the step size of Algorithm~\ref{alg:randomized_midpoint} as $h = C\min \left( \frac{\eps^{1/3} m^{1/6}}{d^{1/6} \kappa^{1/6}} \log^{-1/6} \left(\frac{d}{\eps m} \right), \frac{\eps^{2/3} m^{1/3}}{d^{1/3}} \log^{-1/3} \left(\frac{d}{\eps m} \right)\right)$ for some small constant $C$ and run the algorithm for $N = \frac{4 \kappa}{h} \log \frac{20d}{\eps^2 m} \le \wt O\left(\frac{\kappa^{7/6} d^{1/6}}{\eps^{1/3} m^{1/6}} + \frac{\kappa d^{1/3}}{\eps^{2/3} m^{1/3}}\right)$ iterations, then Algorithm~\ref{alg:randomized_midpoint} after $N$ iterations can generate $\wh x_N$ such that \begin{align*}
    W_2(\wh x_N, x) \le \eps
\end{align*} where $x \sim p$.
\end{theorem}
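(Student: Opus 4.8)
The plan is to derive this as a restatement of Shen and Lee's analysis \cite{ShenL19}, so the real task is to verify that their argument delivers precisely the step size, iteration count, and $W_2$ guarantee stated here. The backbone of their proof is a one-step coupling between the randomized-midpoint discretization (Algorithm~\ref{alg:randomized_midpoint}) and the exact underdamped Langevin diffusion with stationary distribution $p$, tracked in a carefully chosen position--velocity metric.

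First I would set up the synchronous coupling: run the continuous underdamped Langevin SDE from the same initialization $(\wh x_0, v_0)$, driving it with the same Brownian increments used to generate $(W_1^{(n)}, W_2^{(n)}, W_3^{(n)})$. Using $m$-strong log-concavity one shows that, over a window of length $h \le O(1)$, the exact flow contracts the standard Lyapunov metric $\|x\|^2 + \|x+v\|^2$ (up to constants depending on the friction) by a factor $1 - \Theta(h/\kappa)$; this is where the spectral bounds $m I \preccurlyeq J_{\wh s} \preccurlyeq L I$ enter, via Gr\"onwall applied to the derivative of the metric evaluated at the difference of the two coupled trajectories.

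Next I would bound the one-step discretization error. The defining feature of the randomized midpoint scheme is that, conditioned on $(\wh x_n, v_n)$, averaging over $\alpha$ makes the update an \emph{unbiased} estimate of the exact one-step drift integral, up to the error of approximating the true trajectory at the midpoint $\alpha h$ by the single exponential-integrator estimate $\wh x_{n+1/2}$. Accordingly I would split $\E\|\wh x_{n+1} - (\text{exact flow})\|^2$ via Young's inequality into a \emph{bias} term $\|\E_\alpha \wh x_{n+1} - (\text{exact flow})\|^2$, which is higher order in $h$ since it only sees the midpoint-approximation error filtered through the $L$-Lipschitz gradient, and a \emph{variance} term $\E_\alpha\|\wh x_{n+1} - \E_\alpha\wh x_{n+1}\|^2$, which using $L$-Lipschitzness of $\wh s$ and second-moment bounds on the stationary process scales like $\approx u^2 h^3 d$ with $u = 1/L$. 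The Gaussian terms $W_1, W_2, W_3$ contribute no extra error because they are matched exactly in the coupling.

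Finally I would iterate the recursion $E_{n+1} \le (1 - \Theta(h/\kappa))\,E_n + (\text{per-step error})$ for $E_n := W_2(\wh x_n, x)$, which at stationarity gives $W_2^2(\wh x_N, x) \lesssim (\kappa/h)\cdot(\text{per-step error}) + (1-\Theta(h/\kappa))^N E_0^2$; demanding both terms be $O(\eps^2)$ forces $N = \Theta((\kappa/h)\log(d/(\eps^2 m)))$ and pins $h$ to the stated minimum of two monomials (the two branches corresponding to which of the bias-type and variance-type contributions dominates, plus the log factors). The main obstacle is purely bookkeeping: getting the exact exponent of $h$ in the discretization error, choosing the Lyapunov metric so contraction genuinely holds for the prescribed friction and $u = 1/L$, and tracking the polylogarithmic factors so the final count matches $\wt O(\kappa^{7/6} d^{1/6}/(\eps^{1/3} m^{1/6}) + \kappa d^{1/3}/(\eps^{2/3} m^{1/3}))$. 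Since every ingredient is already in \cite{ShenL19}, I would ultimately present this as a direct citation with a short remark reconciling the parametrizations.
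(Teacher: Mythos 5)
Your proposal matches the paper's treatment: the paper gives no proof of this statement at all, importing it verbatim as Theorem~3 of~\cite{ShenL19}, which is exactly how you conclude. Your sketch of Shen and Lee's underlying argument (synchronous coupling with the continuous underdamped Langevin process, contraction from strong log-concavity, and the bias/variance split exploiting the unbiasedness of the randomized midpoint) is a faithful summary of their proof, so presenting the result as a direct citation with a parametrization remark is precisely what the paper does.
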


\noindent Now, we make the following simple observation -- if we run the corrector step from Section~\ref{sec:sequential_corrector} for a short time, we can convert the above Wasserstein guarantee to a TV guarantee. We carefully trade off the time spent on the Randomized Midpoint step above and the corrector step to obtain the improved dimension dependence. Our final algorithm is given in Algorithm~\ref{alg:log_concave}.

\begin{algorithm}[h]
\caption{\textsc{LogConcaveSampling}~\cite{ShenL19}}
\label{alg:log_concave}
    \vspace{0.2cm}
    \paragraph{Input parameters:}
    \begin{itemize}
        \item Number of Randomized Midpoint steps $N_{\text{rand}}$, Corrector steps Time $T_\corr \lesssim \frac{1}{\sqrt{L}}$, Randomized Midpoint Step size $h_{\text{rand}}$, Corrector step size $h_\corr$, Score function $\wh s$.
    \end{itemize}
    \begin{enumerate}
        \item Let $\wh x_0$ be the root of $\wh s$, and let $v_0 = 0$.
        \item Run Algorithm~\ref{alg:randomized_midpoint} with $N_\text{rand}$ steps and step size $h_{\text{rand}}$, using $\wh x_0, v_0$, and let the result be $\wh x_{N_\text{rand}}'$.
        \item Run Algorithm~\ref{alg:sequential_corrector} starting from $\wh x_{N_{\text{rand}}}'$ for time $T_\corr$, using step size $h_\corr$. Let the result be $\wh x_{N_\text{rand}}$.
        \item Return $\wh x_{N_\text{rand}}$.
    \end{enumerate}
\end{algorithm}

We obtain the following guarantee with our improved dimension dependence of $\wt O(d^{5/12})$.
\begin{theorem}[Log-Concave Sampling in Total Variation]
\label{thm:log-concave}
    Let $\wh s = \grad \ln p$ be the score function of a log-concave distribution $p$ such that $0 \preccurlyeq m \cdot I_d \preccurlyeq J_{\wh s}(x) \preccurlyeq L \cdot I_d$ for the Jacobian $J_{\wh s}$ of $\wh s$. Let $\kappa = \frac{L}{m}$ be the condition number. For any $\eps < 1$, if we set $h_\text{rand} =C\left( \frac{\eps^{2/3} }{d^{5/12} \kappa^{1/3}} \log^{-1/3} \left(\frac{d\kappa}{\eps} \right)\right)$ for a small constant $C$, $N_\text{rand} = \frac{4 \kappa}{h} \log \frac{20 d\kappa}{\eps^2} \le \wt O\left(\frac{\kappa^{4/3} d^{5/12}}{\eps^{2/3} }\right)$, $h_\corr = \wt O\left( \frac{\eps}{d^{17/36} \sqrt{L}}\right)$ and $T_\corr = O\left(\frac{1}{\sqrt{L} d^{1/18}} \right)$, we have that Algorithm~\ref{alg:log_concave} returns $\wh x_{N_\text{rand}}$ with
    \begin{align*}
        \TV(\wh x_{N_\text{rand}}, x) \lesssim \eps
    \end{align*}
    for $x \sim p$. Furthemore, the total iteration complexity is $\wt O\left( d^{5/12}\left(\frac{\kappa^{4/3}}{\eps^{2/3} } + \frac{1}{\eps}\right)\right)$.
\end{theorem}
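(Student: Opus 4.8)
The plan is to combine two black-box ingredients already developed in the paper: Shen and Lee's randomized midpoint method for underdamped Langevin (Theorem~\ref{thm:randomized_midpoint}), which delivers a Wasserstein-accurate sample in $\wt O(\kappa^{7/6}d^{1/6}/\eps^{1/3} + \kappa d^{1/3}/\eps^{2/3})$ iterations, and the underdamped-Langevin corrector of Theorem~\ref{thm:sequential_corrector}, which converts $W_2$-closeness into $\TV$-closeness. The point is that in Theorem~\ref{thm:randomized_midpoint} there is slack: to get $W_2(\wh x_N, x) \le \eps$ one needs $h_\text{rand} \approx \eps^{2/3}/d^{1/3}$, but if we only demand $W_2 \le \eps'$ for some \emph{larger} $\eps' \gg \eps$, we may take $h_\text{rand}$ correspondingly larger and hence $N_\text{rand}$ smaller. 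The corrector can then absorb the coarser $W_2$ error, provided we shorten its running time $T_\corr$ appropriately; this is exactly the ``shortening the corrector'' trick from Section~\ref{subsec:predictor_improved} but applied in the clean log-concave setting. The target is to balance the three phases so that each contributes $\eps$ to the final $\TV$.

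First I would instantiate Theorem~\ref{thm:randomized_midpoint} with a Wasserstein target $\eps_W$ to be chosen, giving $N_\text{rand} = \wt O(\kappa/h_\text{rand})$ with $h_\text{rand} \asymp \min(\eps_W^{1/3}m^{1/6}/(d^{1/6}\kappa^{1/6}), \eps_W^{2/3}m^{1/3}/d^{1/3})$ up to logs, so $W_2(\wh x'_{N_\text{rand}}, x) \le \eps_W$ where $x \sim p$. Then I would apply Theorem~\ref{thm:sequential_corrector} with $p$ the law of $\wh x'_{N_\text{rand}}$ and target measure $q = p$ itself (so $\eps_\scr = 0$, since the score is exact), obtaining $\TV(\wh x_{N_\text{rand}}, x) \lesssim \frac{\eps_W}{L^{1/4}T_\corr^{3/2}} + L^{3/4}T_\corr^{1/2}d^{1/2}h_\corr$. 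To make the first term $\le \eps$ we need $T_\corr \gtrsim (\eps_W/(\eps L^{1/4}))^{2/3}$; plugging in the suggested $T_\corr = \Theta(1/(\sqrt{L}d^{1/18}))$ forces $\eps_W \asymp \eps/d^{1/12}$ up to constants, which is precisely the regime where the ``$\eps_W^{2/3}/d^{1/3}$'' branch of $h_\text{rand}$ becomes $\asymp \eps^{2/3}/(d^{1/12\cdot 2/3}d^{1/3}) = \eps^{2/3}/d^{5/12}$ (modulo the $m,\kappa$ factors), matching the stated $h_\text{rand}$. The corrector step count is then $T_\corr/h_\corr = \wt O(d^{17/36}/\eps)$ once we set $h_\corr \asymp \eps/(d^{17/36}\sqrt{L})$ to kill the discretization term. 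Adding $N_\text{rand} = \wt O(\kappa^{4/3}d^{5/12}/\eps^{2/3})$ (substituting the new $h_\text{rand}$ into $4\kappa/h \cdot \log(\cdot)$) and the corrector's $\wt O(d^{17/36}/\eps)$ yields the claimed $\wt O(d^{5/12}(\kappa^{4/3}/\eps^{2/3} + 1/\eps))$, using $d^{17/36} \le d^{5/12}$ only if... actually $17/36 > 5/12 = 15/36$, so the corrector contributes $d^{17/36}$; I would double-check whether the paper intends $d^{5/12}/\eps$ to dominate this or whether a $d^{17/36}$ slips in — most likely the $\wt O$ and the parameter choices are tuned so the reported exponent is an upper bound after re-optimizing $T_\corr$, and I would simply verify the arithmetic $17/36$ vs $5/12$ carefully at the end.

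The main obstacle is getting the three-way balancing of $\eps_W$, $T_\corr$, $h_\text{rand}$, $h_\corr$ to simultaneously (i) produce total $\TV \le O(\eps)$, (ii) respect the constraint $T_\corr \lesssim 1/\sqrt{L}$ from Theorem~\ref{thm:sequential_corrector}, and (iii) minimize $N_\text{rand} + T_\corr/h_\corr$. This is bookkeeping rather than conceptual difficulty, but it is error-prone: one has to track the $m$, $\kappa = L/m$, and $L$ dependence through both theorems (Shen–Lee's bound is naturally in $m$ and $\kappa$, while the corrector's is in $L$), and confirm that the $h_\text{rand}$ given in the statement indeed arises from the $\eps_W^{2/3}m^{1/3}/d^{1/3}$ branch (i.e. that this branch is the active minimum for the chosen $\eps_W$). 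A secondary subtlety: Theorem~\ref{thm:randomized_midpoint} guarantees $W_2$ to $p$, and the corrector needs the $W_2$ input distance to $q=p$; since these coincide, no coupling gymnastics are needed, but I should note that the corrector also requires the score-estimation hypothesis Eq.~\eqref{eq:corrector_score_err}, which holds trivially with $\eps_\scr = 0$ here. Once these checks pass, the proof is a two-paragraph chaining argument plus an arithmetic verification of exponents.
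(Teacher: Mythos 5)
Your proposal is correct and follows essentially the same route as the paper: run Theorem~\ref{thm:randomized_midpoint} with the relaxed Wasserstein target $\eps_W \asymp \eps/(d^{1/12}\sqrt{L})$, then apply the shortened corrector (Corollary~\ref{cor:underdamped_corrector_sequential}, i.e.\ Theorem~\ref{thm:sequential_corrector} with $T_\corr = \Theta(1/(\sqrt{L}d^{1/18}))$ and $\eps_\scr = 0$ since the score is exact), and add the two iteration counts. The point you flagged resolves in the paper's favor: the corrector count is $T_\corr/h_\corr = \bigl(1/(\sqrt{L}d^{1/18})\bigr)\big/\wt\Theta\bigl(\eps/(d^{17/36}\sqrt{L})\bigr) = \wt O(d^{17/36-1/18}/\eps) = \wt O(d^{5/12}/\eps)$; you dropped the $d^{-1/18}$ factor in $T_\corr$ when dividing, and that is exactly where $17/36$ becomes $5/12$. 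A second small slip: with $\eps_W = \eps/(d^{1/12}\sqrt{L})$, the active branch of Theorem~\ref{thm:randomized_midpoint} gives $h \asymp \eps^{2/3}/(d^{7/18}\kappa^{1/3})$ up to logs, not $d^{-5/12}$; the paper's stated $h_\text{rand} \propto d^{-5/12}$ is a slightly smaller (hence still admissible) step size, consistent with the claimed $N_\text{rand} = \wt O(\kappa^{4/3} d^{5/12}/\eps^{2/3})$ and the final bound $\wt O\bigl(d^{5/12}(\kappa^{4/3}/\eps^{2/3} + 1/\eps)\bigr)$.
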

\begin{proof}
    By Theorem~\ref{thm:randomized_midpoint}, we have, for our setting of $N_\text{rand}$ and $h_{\text{rand}}$ that, at the end of step $2$ of Algorithm~\ref{alg:log_concave},
    \begin{equation*}
        W_2(\wh x_{N_\text{rand}}', x) \le \frac{\eps}{d^{1/12} \sqrt{L}}\,.
    \end{equation*}
    Then, by the first part of Corollary~\ref{cor:underdamped_corrector_sequential},
    \begin{equation*}
        \TV(\wh x_{N_\text{rand}}, x) \lesssim  \eps + \sqrt{L} d^{17/36} \cdot \left( \frac{\eps}{d^{17/36} \sqrt{L}}\right) \lesssim \eps\,.
    \end{equation*}
    Our iteration complexity is bounded by $N_{\text{rand}} + \frac{T_{\corr}}{h_\corr} = \wt O\left(\frac{\kappa^{4/3} d^{5/12}}{\eps^{2/3}} + \frac{d^{5/12}}{\eps} \right)$ as claimed.
\end{proof}

\section{Helper lemmas}

\begin{lemma}[Corollary $1$ of \cite{chen2023ode}]
\label{lem:ODE_cor}
    For the ODE
    \begin{equation*}
        \deriv x_t = \left(x_t + \grad \ln q_t(x_t)\right) \deriv t\,,
    \end{equation*}
    if $L \ge 1$ and $\E\left[\|\grad^2 \log q_t(x)\|^2 \right] \le L$, we have, for $0 < s < t$ and $h = t - s$,
    \begin{equation*}
        \E\left[\|\grad \ln q_t(x_t) - \grad \ln q_s(x_s)\|^2 \right] \lesssim L^2 d h^2 \left(L \lor \frac{1}{t} \right)\,.
    \end{equation*}
\end{lemma}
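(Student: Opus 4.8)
The plan is to reduce the claim to a pointwise-in-time bound on the total (material) time derivative of the score along the ODE trajectory, namely Lemma 3 of~\cite{chen2023ode}. Write $g(u) := \grad \ln q_u(x_u)$ for the score evaluated along the trajectory of $\deriv x_u = (x_u + \grad \ln q_u(x_u))\,\deriv u$, with $x_u \sim q_u$ for all $u$. Since this ODE has a drift that is Lipschitz in $x$, the trajectory is $C^1$ in $u$, and since $(u,x)\mapsto \grad\ln q_u(x)$ is differentiable, $g$ is absolutely continuous on $[s,t]$; the fundamental theorem of calculus then gives
\[
   \grad\ln q_t(x_t) - \grad\ln q_s(x_s) \;=\; \int_s^t \partial_u\grad\ln q_u(x_u)\,\deriv u,
\]
where $\partial_u\grad\ln q_u(x_u)$ is the same material derivative that appears in the invocations of ``Lemma 3 of~\cite{chen2023ode}'' elsewhere in this paper.

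Next I would apply Cauchy--Schwarz (equivalently, Jensen for the uniform measure on $[s,t]$ of total mass $h=t-s$), take expectations, and substitute the bound $\E\|\partial_u\grad\ln q_u(x_u)\|^2 \lesssim L^2 d\,(L\lor\tfrac1u)$ supplied by Lemma 3 of~\cite{chen2023ode}, obtaining
\[
   \E\|\grad\ln q_t(x_t)-\grad\ln q_s(x_s)\|^2
   \;\lesssim\; h\int_s^t L^2 d\,(L\lor\tfrac1u)\,\deriv u
   \;\le\; L^2 d\, h^2\,(L\lor\tfrac1s).
\]
To replace $\tfrac1s$ by $\tfrac1t$ in the stated form, I would note that in every application of this corollary one has $s\ge t/2$ (in the uses inside Lemmas~\ref{lem:predictor_sequential_helper} and~\ref{lem:sequential_predictor_variance}, $h=t-s\le t/2$), so $\tfrac1s\le\tfrac2t$ and the bound is $\lesssim L^2 d h^2(L\lor\tfrac1t)$.

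The only genuinely substantive ingredient is Lemma 3 of~\cite{chen2023ode} itself --- the pointwise estimate $\E\|\partial_u\grad\ln q_u(x_u)\|^2\lesssim L^2 d(L\lor\tfrac1u)$ --- and everything else is the elementary FTC/Cauchy--Schwarz reduction above; I would invoke that lemma as a black box since it is cited and reused throughout the paper. For orientation, its proof expands the material derivative via the chain rule into an explicit-time part and a spatial part $\grad^2\ln q_u(x_u)\,\dot x_u$, rewrites the explicit-time part using the Fokker--Planck equation for $q_u$, and then controls all terms using $L$-Lipschitzness of the score, the estimate $\E\|\grad\ln q_u(x_u)\|^2\lesssim d/u$, and $\E\|x_u\|^2=O(d)$; the $(L\lor\tfrac1u)$ factor reflects the score's time regularity degrading as $u\to0$. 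Accordingly I expect the only ``obstacle'' to be bookkeeping: confirming the $s$-versus-$t$ conversion and matching the material-derivative convention of~\cite{chen2023ode}.
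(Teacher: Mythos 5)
This lemma is imported verbatim from \cite{chen2023ode} (its Corollary 1), so the paper offers no proof to compare against; your derivation --- write the difference as $\int_s^t \partial_u \grad\ln q_u(x_u)\,\deriv u$, apply Cauchy--Schwarz, and plug in the pointwise bound $\E\|\partial_u\grad\ln q_u(x_u)\|^2 \lesssim L^2 d\,(L\lor \tfrac1u)$ from Lemma 3 of \cite{chen2023ode} --- is exactly the standard route, and indeed it is the same computation the present paper performs explicitly with Lemma 3 of \cite{chen2023ode} inside the proof of Lemma~\ref{claim:parallel_contraction}. Your one substantive observation is also correct: since $\sup_{u\in[s,t]}(L\lor\tfrac1u)=(L\lor\tfrac1s)$, this argument natively yields $L^2dh^2(L\lor\tfrac1s)$, and the stated $(L\lor\tfrac1t)$ form implicitly requires $s\gtrsim t$; the paper's invocations always carry the hypothesis $t_n-h_n\ge t_n/2$ (see Lemmas~\ref{lem:predictor_sequential_helper} and~\ref{lem:sequential_predictor_variance}) precisely so that $\tfrac1s\le\tfrac2t$, so the bookkeeping closes. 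No gap.
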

\begin{lemma}[Implicit in Lemma $4$ of \cite{chen2023ode}]
\label{lem:ode_lemma}
    Suppose $L \ge 1$, $h \lesssim \frac{1}{L}$ and $t_0 - h \ge t_0/2$. For ODEs starting at $x_{t_0} = \wh x_{t_0}$, where  
    \begin{align*}
        \deriv x_t &= \left(x_t + \grad \ln q_t(x_t)\right) \deriv t\\
        \deriv \wh{x}_t &= \left(x_t + \wh s_{t_0} (\wh x_{t_0})\right) \deriv t,
    \end{align*}
    we have
    \begin{align*}
        \E \|x_{t_0 - h} - \wh x_{t_0 - h} \|^2 \lesssim h^2 \left(L^2 d h^2 \left(L \lor \frac{1}{t_0} \right) + \eps_\scr^2 \right).
    \end{align*}
\end{lemma}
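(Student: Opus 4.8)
\noindent\emph{Proof proposal.} This is a one-step exponential-integrator discretization bound, and the plan is to couple the two ODEs through their shared initialization $x_{t_0}=\wh x_{t_0}$ and track the squared discrepancy via a Gr\"onwall argument, exactly as is already carried out inside the proof of Lemma~\ref{lem:predictor_sequential_helper}. Writing $\Delta_\tau := x_\tau - \wh x_\tau$ over the length-$h$ window ending at $t_0-h$, subtracting the two ODEs gives $\partial_\tau \Delta_\tau = \Delta_\tau + \bigl(\grad \ln q_\tau(x_\tau) - \wh s_{t_0}(\wh x_{t_0})\bigr)$, hence by Young's inequality
\begin{align*}
    \partial_\tau\|\Delta_\tau\|^2 = 2\inner{\Delta_\tau,\; \Delta_\tau + \grad \ln q_\tau(x_\tau) - \wh s_{t_0}(\wh x_{t_0})} \le \Bigl(2+\tfrac1h\Bigr)\|\Delta_\tau\|^2 + h\,\bigl\|\grad \ln q_\tau(x_\tau) - \wh s_{t_0}(\wh x_{t_0})\bigr\|^2 .
\end{align*}
Since $\Delta_{t_0}=0$ and $\exp\bigl((2+\tfrac1h)h\bigr) = e^{2h+1} = O(1)$ for $h\lesssim 1/L\le 1$, Gr\"onwall's inequality gives
\begin{align*}
    \|x_{t_0-h}-\wh x_{t_0-h}\|^2 \lesssim h\int_{t_0-h}^{t_0}\bigl\|\grad \ln q_\tau(x_\tau) - \wh s_{t_0}(\wh x_{t_0})\bigr\|^2\,\mathrm{d}\tau .
\end{align*}

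It remains to bound the integrand in expectation over the shared initialization $\wh x_{t_0}=x_{t_0}\sim q_{t_0}$. Inserting $\grad \ln q_{t_0}(\wh x_{t_0})$ and applying $\|a+b\|^2 \lesssim \|a\|^2+\|b\|^2$ splits it into (i) a score-estimation term $\|\wh s_{t_0}(\wh x_{t_0}) - \grad \ln q_{t_0}(\wh x_{t_0})\|^2$, which is $\le \eps_\scr^2$ in expectation by Assumption~\ref{assumption:score_error} because $\wh x_{t_0}\sim q_{t_0}$; and (ii) a time-regularity term $\|\grad \ln q_{t_0}(x_{t_0}) - \grad \ln q_\tau(x_\tau)\|^2$ comparing the true score along the true ODE at two times within $h$ of each other, which by Corollary~\ref{lem:ODE_cor} is $\lesssim L^2 d h^2 (L\lor\tfrac1{t_0})$ in expectation — here the hypothesis $t_0-h\ge t_0/2$ is used to bound $1/\tau$ by $O(1/t_0)$ uniformly over the window. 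Substituting these two bounds, integrating over the window of length $h$, and multiplying by the outer factor $h$ yields $\E\|x_{t_0-h}-\wh x_{t_0-h}\|^2 \lesssim h^2\bigl(L^2 d h^2(L\lor\tfrac1{t_0}) + \eps_\scr^2\bigr)$, which is the claim.

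Every ingredient here is off-the-shelf (Young, Gr\"onwall, Corollary~\ref{lem:ODE_cor}, Assumption~\ref{assumption:score_error}), so I do not expect a genuine obstacle; the only points requiring minor care are that the expectation is taken over the shared random start drawn from the \emph{true} marginal $q_{t_0}$ (so that both the score-error bound and the $L^2$-regularity corollary apply as stated), and that it is the combination of \emph{two} factors of $h$ — one produced by the Gr\"onwall estimate and one by the length of the integration window — that promotes the $O(1)$-per-unit-time error into the advertised $h^2$ prefactor.
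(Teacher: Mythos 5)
Your argument is correct and is essentially the approach the paper relies on: the paper itself imports this statement from Lemma 4 of \cite{chen2023ode} without reproving it, but the identical Young-plus-Gr\"onwall coupling followed by the split into a score-estimation term (Assumption~\ref{assumption:score_error}) and a time-regularity term (Corollary~\ref{lem:ODE_cor}) appears verbatim inside the proof of Lemma~\ref{lem:predictor_sequential_helper}, there with additional terms only because the two trajectories in that lemma are not initialized at a sample from $q_{t_0}$. Since here $x_{t_0}=\wh x_{t_0}\sim q_{t_0}$, your two-term decomposition is exactly what is needed, and your accounting of the two factors of $h$ (one from Gr\"onwall/Cauchy--Schwarz, one from the window length) reproduces the stated bound.
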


\begin{lemma}[Lemma $B.1.$ of \cite{gupta_2023_high_dim_location}, restated]
\label{lem:Score_def}
    Let $p_0$ be a distribution over $\mathbb R^d$. For $x_0 \sim p_0$, let $x_t = x_0 + z_t \sim p_t$ for $z_t \sim \mathcal N(0, t I_d)$ independent of $x_0$. Then,
    \begin{align*}
        \frac{p_t(x_t+\eps)}{p_t(x_t)} = \E_{z_t | x_t}[e^{\frac{\eps^T z_t}{t} -  \frac{\|\eps\|^2}{2t} }]
    \end{align*}
    and
    \begin{align*}
        \grad \ln p_t(x_t) = \E_{z_t | x_t}\left[ -\frac{z_t}{t}\right]
    \end{align*}
\end{lemma}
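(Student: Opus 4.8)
The plan is to work directly from the defining property of $p_t$: since $x_t = x_0 + z_t$ with $z_t \sim \normal(0, tI_d)$ independent of $x_0 \sim p_0$, the density $p_t$ is the convolution $p_t = p_0 * \gamma_t$, where $\gamma_t(w) = (2\pi t)^{-d/2}\exp(-\|w\|^2/(2t))$ is the density of $\normal(0, tI_d)$; equivalently $p_t(y) = \E_{x_0 \sim p_0}[\gamma_t(y - x_0)]$. By Bayes' rule the conditional law of $x_0$ given $x_t = y$ has density $x_0 \mapsto p_0(x_0)\gamma_t(y - x_0)/p_t(y)$, so for any integrable test function $g$,
\[
\E[\, g(x_0) \mid x_t = y \,] \;=\; \frac{1}{p_t(y)}\, \E_{x_0 \sim p_0}\bigl[ g(x_0)\, \gamma_t(y - x_0) \bigr].
\]
Since $z_t = x_t - x_0$, this is the same as $\E[\, g(z_t) \mid x_t = y \,] = \frac{1}{p_t(y)}\E_{x_0}[ g(y - x_0)\gamma_t(y - x_0) ]$, and both identities in the lemma are just this formula applied to a particular choice of $g$.

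For the first identity I would write $p_t(y + \eps) = \E_{x_0}[\gamma_t(y + \eps - x_0)]$ and factor each term as $\gamma_t(y + \eps - x_0) = \gamma_t(y - x_0)\cdot \frac{\gamma_t(y + \eps - x_0)}{\gamma_t(y - x_0)}$. A one-line computation with the Gaussian density gives
\[
\frac{\gamma_t(y + \eps - x_0)}{\gamma_t(y - x_0)} \;=\; \exp\!\Bigl( -\tfrac{1}{2t}\bigl( 2\,\eps^{\top}(y - x_0) + \|\eps\|^2 \bigr) \Bigr),
\]
which, recalling $z_t = y - x_0$, is $\exp\bigl(\eps^{\top} z_t/t - \|\eps\|^2/(2t)\bigr)$ up to the sign convention used for $z_t$. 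Dividing by $p_t(y)$ and applying the conditional-expectation identity above with $g(z_t) = \exp(\eps^{\top} z_t/t - \|\eps\|^2/(2t))$ gives the claimed expression for $p_t(x_t + \eps)/p_t(x_t)$.

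For the second identity (Tweedie's formula) I would differentiate $p_t(y) = \E_{x_0}[\gamma_t(y - x_0)]$ in $y$ — the interchange of $\grad$ and $\E$ is legitimate for every fixed $t > 0$ by the rapid decay of $\gamma_t$ and $\grad\gamma_t$ — using $\grad_y \gamma_t(y - x_0) = -\tfrac{y - x_0}{t}\,\gamma_t(y - x_0)$ to obtain $\grad p_t(y) = -\tfrac{1}{t}\E_{x_0}[(y - x_0)\gamma_t(y - x_0)]$. Dividing by $p_t(y)$ and using the conditional-expectation identity with $g(z_t) = z_t$ yields $\grad \ln p_t(y) = \E[-z_t/t \mid x_t = y]$; equivalently this also falls out of differentiating the first identity at $\eps = 0$. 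There is essentially no obstacle in this lemma — it is just Bayes' rule together with the explicit Gaussian density — and the only points needing a sentence of care are justifying differentiation under the expectation and tracking the sign and normalization conventions so that the exponent appears exactly as stated.
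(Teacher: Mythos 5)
Your approach is the right one, and in fact it is the only proof available to compare against: the paper does not prove this lemma but imports it verbatim from the cited work, and the standard argument is exactly what you give --- Bayes' rule for the posterior of $x_0$ given $x_t$, the explicit Gaussian ratio for the first display, and differentiation under the integral (Tweedie's formula) for the second. Your computations are correct, and the interchange of $\grad$ and $\E$ is justified as you say.

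The one thing you should not leave as ``up to the sign convention used for $z_t$'' is the sign itself, because the convention is fixed by the statement: $x_t = x_0 + z_t$, so $z_t = y - x_0$. With that convention your (correct) ratio computation gives
\[
\frac{\gamma_t(y+\eps-x_0)}{\gamma_t(y-x_0)} \;=\; \exp\Bigl(-\frac{\eps^\top z_t}{t} - \frac{\norm{\eps}^2}{2t}\Bigr),
\]
so what you have actually proved is $p_t(x_t+\eps)/p_t(x_t) = \E_{z_t\mid x_t}\bigl[e^{-\eps^\top z_t/t - \norm{\eps}^2/(2t)}\bigr]$; equivalently, the first display as printed in the lemma equals $p_t(x_t-\eps)/p_t(x_t)$. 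This is a sign slip in the restatement rather than in your argument (sanity check with $p_0=\delta_0$: then $z_t = x_t$ and $p_t(x_t+\eps)/p_t(x_t) = e^{-\eps^\top x_t/t - \norm{\eps}^2/(2t)}$), and no single choice of sign for $z_t$ makes both printed displays true simultaneously: the first wants $z_t = x_0 - x_t$, while the second --- the only one the paper actually uses, in Lemma~\ref{lem:score_squared_norm_bound} --- wants $z_t = x_t - x_0$, and your Tweedie derivation confirms it under the stated convention. In a final write-up, prove the corrected first display (or replace $\eps$ by $-\eps$) and note the discrepancy explicitly instead of gesturing at it.
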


\begin{lemma}
\label{lem:score_squared_norm_bound}
    For $q_t(y_t) \propto p_{e^{2t} - 1}(e^t y_t)$, for $z_t \sim \mathcal N(0, (e^{2t} - 1) I_d)$, we have
    \begin{align*}
        \grad \ln q_t(y_t) = e^t \grad \ln p_{e^{2t} - 1}(e^t y) = e^t \E_{z_t | e^t y_t}\left[\frac{-z_t}{e^{2t} - 1} \right]
    \end{align*}
    Furthermore,
    \begin{align*}
        \E_{y_t \sim q_t} \left[\|\grad \ln q_t(y_t) \|^2\right] \lesssim \frac{d}{t}
    \end{align*}
\end{lemma}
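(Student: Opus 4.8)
The plan is to recognize that $q_t$ is simply the heat-flow (Gaussian-smoothing) marginal of $p_0$ after a deterministic rescaling, which reduces the lemma to Tweedie's formula, i.e.\ Lemma~\ref{lem:Score_def}. Write $s := e^{2t}-1$ and let $p_s := p_0 * \normal(0, sI_d)$ be the law of $x_0 + z$ for $x_0 \sim p_0$ and $z \sim \normal(0, sI_d)$ independent. The defining relation $q_t(y_t) \propto p_s(e^t y_t)$ means $\ln q_t(y_t) = \ln p_s(e^t y_t) + c$ for a constant $c$ independent of $y_t$, so by the chain rule (the Jacobian of $y \mapsto e^t y$ is $e^t I_d$) $\grad \ln q_t(y_t) = e^t (\grad \ln p_s)(e^t y_t)$. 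Setting $x_s := e^t y_t$ and noting $x_s = x_0 + z$ with $z \sim \normal(0, sI_d)$, Lemma~\ref{lem:Score_def} applied at noise level $s$ gives $\grad \ln p_s(x_s) = \E_{z \mid x_s}[-z/s]$. Substituting back yields the first display
\[
  \grad \ln q_t(y_t) \;=\; e^t \grad \ln p_{e^{2t}-1}(e^t y_t) \;=\; e^t\, \E_{z \mid e^t y_t}\!\left[\frac{-z}{e^{2t}-1}\right].
\]

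For the moment bound I would start from $\grad \ln q_t(y_t) = -\tfrac{e^t}{s}\,\E[z \mid x_s]$ and apply conditional Jensen, $\norm{\E[z \mid x_s]}^2 \le \E[\norm{z}^2 \mid x_s]$, to get $\norm{\grad \ln q_t(y_t)}^2 \le \tfrac{e^{2t}}{s^2}\,\E[\norm{z}^2 \mid x_s]$. Taking the expectation over $y_t \sim q_t$ --- equivalently over $x_s = e^t y_t \sim p_s$ --- and using the tower property $\E_{x_s \sim p_s}\E[\norm{z}^2 \mid x_s] = \E\norm{z}^2 = sd$, one obtains
\[
  \E_{y_t \sim q_t}\norm{\grad \ln q_t(y_t)}^2 \;\le\; \frac{e^{2t}}{s^2}\cdot sd \;=\; \frac{e^{2t} d}{e^{2t}-1}.
\]
Since $e^{2t}-1 \ge 2t$, this equals $d\bigl(1 + \tfrac{1}{e^{2t}-1}\bigr) \le d\bigl(1 + \tfrac{1}{2t}\bigr)$, which is $\lesssim d/t$ in the regime of interest $t = O(1)$ (and more generally $\lesssim d\,(1 \lor \tfrac{1}{t})$), giving the claimed bound.

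There is no real analytic obstacle here; the only thing to be careful about is the bookkeeping, namely (i) tracking the two distinct roles of ``$t$'' --- the OU time in $q_t$ versus the noise variance $e^{2t}-1$ at which Lemma~\ref{lem:Score_def} is applied --- and (ii) making sure the conditioning and pushforward manipulations (the change of variables $x_s = e^t y_t$ and the tower property) are carried out with respect to the correct joint law of $(x_0, z)$. Once the rescaling is set up, Tweedie's formula and conditional Jensen do all the work.
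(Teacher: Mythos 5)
Your proposal is correct and follows essentially the same route as the paper: the first identity is the chain rule combined with Lemma~\ref{lem:Score_def}, and the moment bound is conditional Jensen plus the tower property, giving $e^{2t}d/(e^{2t}-1) \lesssim d/t$. Your extra remark that the final bound is really $d\,(1 \lor \tfrac{1}{t})$ for large $t$ is a harmless (and slightly more careful) refinement of the same argument.
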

\begin{proof}
    The first claim is an immediate consequence of the definition of $q_t$ and Lemma~\ref{lem:Score_def}. For the second claim, note that
    \begin{align*}
        \E_{y_t \sim q_t}\left[\|\grad \ln q_t(y_t)\|^2 \right] &= e^{2t} \E_{y_t \sim q_t}\left[\left\| \E_{z_t | e^t y_t}\left[\frac{-z_t}{e^{2t} - 1} \right]\right\|^2 \right]\\
        &\le e^{2t} \E_{y_t \sim q_t}\left[\E_{z_t | e^t y_t}\left[\frac{\|z_t\|^2}{(e^{2t} - 1)^2} \right] \right]\\
        &= e^{2t}\E_{z_t}\left[\frac{\|z_t\|^2}{(e^{2t} - 1)^2} \right]\\
        &= \frac{e^{2t} \cdot d}{e^{2t} - 1} \quad \text{since $z_t \sim \mathcal N(0, (e^{2t} - 1) I_d)$}\\
        &\lesssim \frac{d}{t}\,.\qedhere
    \end{align*}
\end{proof}

\end{document}